\def\Figref#1{Figure~\ref{#1}}
\def\Secref#1{Section~\ref{#1}}
\def\eqref#1{equation~\ref{#1}}
\def\Eqref#1{Equation~\ref{#1}}
\def\plaineqref#1{\ref{#1}}
\def\Thmref#1{Theorem~\ref{#1}}
\def\Lemmaref#1{Lemma~\ref{#1}}
\def\1{\bm{1}}
\DeclareMathAlphabet{\mathsfit}{\encodingdefault}{\sfdefault}{m}{sl}
\SetMathAlphabet{\mathsfit}{bold}{\encodingdefault}{\sfdefault}{bx}{n}
\def\gB{{\mathcal{B}}}
\def\gE{{\mathcal{E}}}
\def\gN{{\mathcal{N}}}
\def\gO{{\mathcal{O}}}
\def\gR{{\mathcal{R}}}
\def\gS{{\mathcal{S}}}
\def\gZ{{\mathcal{Z}}}
\newcommand{\E}{\mathbb{E}}
\newcommand{\R}{\mathbb{R}}
\newcommand{\Var}{\mathrm{Var}}
\DeclareMathOperator*{\argmin}{arg\,min}
\DeclareMathOperator{\sign}{sign}
\DeclareMathOperator{\Tr}{Tr}
\newcommand{\norm}[1]{\ensuremath{\| #1 \|_2}}
\newcommand{\normsq}[1]{\ensuremath{\| #1 \|_2^2}}
\newcommand{\colvec}[2]{\ensuremath{\begin{bmatrix} #1\\ #2\end{bmatrix}}}
\newcommand{\Eunder}{\mathop{\mathbb{E}}}
\renewcommand{\P}{\mathbb{P}}
\newcommand{\Punder}{\mathop{\mathbb{P}}}
\newcommand*{\horzbar}{\rule[.5ex]{2.5ex}{0.5pt}}
\newcommand{\tildemu}{\tilde\mu}
\newcommand{\zcaus}{z_c}
\newcommand{\mucaus}{\mu_c}
\newcommand{\sigcaus}{\sigma_c}
\newcommand{\dcaus}{d_c}
\newcommand{\betacaus}{\beta_c}
\newcommand{\zenv}{z_e}
\newcommand{\tildezenv}{\tilde z_e}
\newcommand{\muenv}{\mu_e}
\newcommand{\sigenv}{\sigma_e}
\newcommand{\denv}{d_e}
\newcommand{\betaenv}{\beta_e}
\newcommand{\betaerm}{\beta_{e;\textrm{ERM}}}
\newcommand{\sigerm}{\sigma_{\textrm{ERM}}}
\title{The Risks of Invariant Risk Minimization}
\author{Elan Rosenfeld, Pradeep Ravikumar, Andrej Risteski \\
Machine Learning Department\\
Carnegie Mellon University\\
\texttt{elan@cmu.edu, pradeepr@cs.cmu.edu, aristesk@andrew.cmu.edu}
}
\declaretheorem[name=Theorem,within=section]{theorem}
\declaretheorem[name=Lemma,sibling=theorem]{lemma}
\declaretheorem[name=Corollary,sibling=theorem]{corollary}
\declaretheorem[name=Proposition,sibling=theorem]{proposition}
\theoremstyle{definition}
\newtheorem{definition}{Definition}
\newtheorem{assumption}{Assumption}
\begin{document}

\maketitle

\begin{abstract}
    Invariant Causal Prediction \citep{peters2016causal} is a technique for out-of-distribution generalization which assumes that some aspects of the data distribution vary across the training set but that the underlying causal mechanisms remain constant. Recently, \citet{arjovsky2019invariant} proposed Invariant Risk Minimization (IRM), an objective based on this idea for learning deep, invariant features of data which are a complex function of latent variables; many alternatives have subsequently been suggested. However, formal guarantees for all of these works are severely lacking. In this paper, we present the first analysis of classification under the IRM objective---as well as these recently proposed alternatives---under a fairly natural and general model. In the linear case, we give simple conditions under which the optimal solution succeeds or, more often, fails to recover the optimal invariant predictor. We furthermore present the \emph{very first results in the non-linear regime}: we demonstrate that IRM can fail catastrophically unless the test data are sufficiently similar to the training distribution---this is precisely the issue that it was intended to solve. Thus, in this setting we find that IRM and its alternatives fundamentally \emph{do not improve} over standard Empirical Risk Minimization.
\end{abstract}

\section{Introduction}
\label{sec:intro}
Prediction algorithms are evaluated by their performance on unseen test data. In classical machine learning, it is common to assume that such data are drawn i.i.d. from the same distribution as the data set on which the learning algorithm was trained---in the real world, however, this is often not the case. When this discrepancy occurs, algorithms with strong in-distribution generalization guarantees, such as Empirical Risk Minimization (ERM), can fail catastrophically. In particular, while deep neural networks achieve superhuman performance on many tasks, there is evidence that they rely on statistically informative but non-causal features in the data \citep{beery2018recognition, geirhos2018imagenet, ilyas2019adversarial}. As a result, such models are prone to errors under surprisingly minor distribution shift \citep{su2019one, recht2019imagenet}. To address this, researchers have investigated alternative objectives for training predictors which are robust to possibly egregious shifts in the test distribution.

The task of generalizing under such shifts, known as \emph{Out-of-Distribution (OOD) Generalization}, has led to many separate threads of research. One approach is Bayesian deep learning, accounting for a classifier's uncertainty at test time \citep{neal2012bayesian}. Another technique that has shown promise is data augmentation---this includes both automated data modifications which help prevent overfitting \citep{shorten2019survey} and specific counterfactual augmentations to ensure invariance in the resulting features \citep{volpi2018generalizing, kaushik2020learning}.

A strategy which has recently gained particular traction is Invariant Causal Prediction (ICP; \citealt{peters2016causal}), which views the task of OOD generalization through the lens of causality. This framework assumes that the data are generated according to a Structural Equation Model (SEM; \citealt{bollen2005structural}), which consists of a set of so-called mechanisms or structural equations that specify variables given their parents. ICP assumes moreover that the data can be partitioned into \emph{environments}, where each environment corresponds to interventions on the SEM \citep{pearl2009causality}, but where the mechanism by which the target variable is generated via its direct parents is unaffected. Thus the causal mechanism of the target variable is unchanging but other aspects of the distribution can vary broadly. As a result, learning mechanisms that are the same across environments ensures recovery of the invariant features which generalize under arbitrary interventions. In this work, we consider objectives that attempt to learn what we refer to as the ``optimal invariant predictor''---this is the classifier which uses and is optimal with respect to only the invariant features in the SEM. By definition, such a classifier does not overfit to environment-specific properties of the data distribution, so it will generalize even under major distribution shift at test time. In particular, we focus our analysis on one of the more popular objectives, Invariant Risk Minimization (IRM; \citet{arjovsky2019invariant}), but our results can easily be extended to similar recently proposed alternatives.

Various works on invariant prediction \citep{muandet2013domain, ghassami2017learning, heinze2018invariant, rojas2018invariant, subbaswamy2019preventing, christiansen2020causal} consider regression in both the linear and non-linear setting, but they exclusively focus on learning with fully or partially observed covariates or some other source of information. Under such a condition, results from causal inference \citep{maathuis2009estimating, peters2017elements} allow for formal guarantees of the identification of the invariant features, or at least a strict subset of them. With the rise of deep learning, more recent literature has developed objectives for learning invariant representations when the data are a non-linear function of unobserved latent factors, a common assumption when working with complex, high-dimensional data such as images. Causal discovery and inference with unobserved confounders or latents is a much harder problem \citep{peters2017elements}, so while empirical results seem encouraging, these objectives are presented with few formal guarantees. IRM is one such objective for invariant representation learning. The goal of IRM is to learn a feature embedder such that the optimal linear predictor on top of these features is the same for every environment---the idea being that only the invariant features will have an optimal predictor that is invariant. Recent works have pointed to shortcomings of IRM and have suggested modifications which they claim prevent these failures. However, these alternatives are compared in broad strokes, with little in the way of theory.

In this work, we present the first formal analysis of classification under the IRM objective under a fairly natural and general model which carefully formalizes the intuition behind the original work. Our results show that despite being inspired by invariant prediction, this objective can frequently be expected to perform \emph{no better than ERM}. In the linear setting, we present simple, exact conditions under which solving to optimality succeeds or, more often, breaks down in recovering the optimal invariant predictor. We also demonstrate another major failure case---under mild conditions, there exists a feasible point that uses only non-invariant features and achieves lower empirical risk than the optimal invariant predictor; thus it will appear as a more attractive solution, yet its reliance on non-invariant features mean it will fail to generalize. As corollaries, we present similar settings where all recently suggested alternatives to IRM likewise fail. Futhermore, we present the \emph{first results in the non-linear regime}: we demonstrate the existence of a classifier with exponentially small sub-optimality which nevertheless heavily relies on non-invariant features on most test inputs, resulting in worse-than-chance performance on distributions that are sufficiently dissimilar from the training environments. These findings strongly suggest that existing approaches to ICP for high-dimensional latent variable models do not cleanly achieve their stated objective and that future work would benefit from a more formal treatment.

\section{Related work}
\label{sec:related-work}
Works on learning deep invariant representations vary considerably: some search for a \emph{domain-invariant} representation \citep{muandet2013domain, ganin2016domain}, i.e.\ invariance of the distribution $p(\Phi(x))$, typically used for domain adaptation \citep{ben2010theory, ganin2015unsupervised, zhang2015multi, long2018conditional}, with assumed access to labeled or unlabeled data from the target distribution. Other works instead hope to find representations that are \emph{conditionally} domain-invariant, with invariance of $p(\Phi(x) \mid y)$ \citep{gong2016domain, li2018deep}. However, there is evidence that invariance may not be sufficient for domain adaptation \citep{zhao2019learning, johansson2019support}. In contrast, this paper focuses instead on \emph{domain generalization} \citep{blanchard2011generalizing, rosenfeld2021online}, where access to the test distribution is not assumed.

Recent works on domain generalization, including the objectives discussed in this paper, suggest invariance of the \emph{feature-conditioned label distribution}. In particular, \citet{arjovsky2019invariant} only assume invariance of $\E[y \mid \Phi(x)]$; follow-up works rely on a stronger assumption of invariance of higher conditional moments \citep{krueger2020out, xie2020risk, jin2020domain, mahajan2020domain, bellot2020generalization}. Though this approach has become popular in the last year, it is somewhat similar to the existing concept of \emph{covariate shift} \citep{shimodaira2000improving, bickel2009discriminative}, which considers the same setting. The main difference is that these more recent works assume that the shifts in $p(\Phi(x))$ occur between discrete, labeled environments, as opposed to more generally from train to test distributions. 

Some concurrent lines of work study different settings yet give results which are remarkably similar to ours. \citet{xu2021how} show that an infinitely wide two-layer network extrapolates linear functions when the training data is sufficiently diverse. In the context of domain generalization specifically, \citet{rosenfeld2021online} prove that ERM remains optimal for both interpolation and extrapolation in the linear setting and that the latter is exponentially harder than the former. These results mirror our findings that none of the studied objectives outperform ERM.

\section{Model and Informal Results}
\label{sec:prelim}
We consider an SEM with explicit separation of \emph{invariant} features $\zcaus$, whose joint distribution with the label is fixed for all environments, and \emph{environmental} features $\zenv$ (``non-invariant''), whose distribution can vary. This choice is to ensure that our model properly formalizes the intuition behind invariant prediction techniques such as IRM, whose objective is to ensure generalizing predictors by recovering only the invariant features---we put off a detailed description of these objectives until after we have introduced the necessary terminology.

We assume that data are drawn from a set of $E$ training environments $\gE = \{e_1, e_2, \ldots, e_E\}$ and that we know from which environment each sample is drawn. For a given environment $e$, the data are defined by the following process: first, a label $y\in\{\pm 1\}$ is drawn according to a fixed probability: 
\begin{align}
    \label{eq:model-begin}
    y &=
    \begin{cases}
    1,&\text{w.p. }\eta\\
    -1,&\text{otherwise.}
    \end{cases}
\end{align}
Next, both invariant features and environmental features are drawn according to a Gaussian:\footnote{Note the deliberate choice to have $\zenv$ depend on $y$. Much work on this problem models spurious features which correlate with the label but are not causal. However, the term ``spurious'' is often applied incongruously; in recent work, the term has been co-opted to refer to any feature that correlates with the label but does not cause it. Thus there is a subtle distinction: if we allow for anti-causality, i.e. the label causing the features, the resulting correlation is \emph{not} spurious. We therefore avoid using the term ``spurious'' in this work.}
\begin{align}
    \zcaus\sim\gN(y\cdot\mucaus, \sigcaus^2I), \qquad \zenv\sim\gN(y\cdot\muenv, \sigenv^2I),
\end{align}
with $\mucaus\in\R^{\dcaus}, \muenv\in\R^{\denv}$---typically, for complex, high-dimensional data we would expect $E < \dcaus \ll \denv$. Finally, the observation $x$ is generated as a function of the latent features: 
\begin{align}
    \label{eq:model-end}
    x = f(\zcaus, \zenv).
\end{align}
The complete data generating process is displayed in \Figref{fig:model}. We assume $f$ is injective, so that it is in principle possible to recover the latent features from the observations, i.e.\ there exists a function $\Phi$ such that $\Phi(f(\zcaus,\zenv)) = [\zcaus, \zenv]^T$. We remark that this our \emph{only} assumption on $f$, even when it is non-linear. Further, note that we model class-conditional means as direct opposites merely for clarity, as it greatly simplifies the calculations. None of our proofs require this condition: it is straightforward to extend our results to arbitrary means, and the non-linear setting also allows for arbitrary covariances. In fact, our proof technique for non-linear $f$ could be applied to \emph{any} distribution that sufficiently concentrates about its mean (e.g., sub-Gaussian). We write the joint and marginal distributions as $p^e(x, y, \zcaus, \zenv)$. When clear from context, we omit the specific arguments.

\paragraph{Remarks on the model.} This model is natural and flexible; it generalizes several existing models used to analyze learning under the existence of adversarial distribution shift or non-invariant correlations \citep{schmidt2018adversarially, sagawa2020investigation}. The fundamental facet of this model is the constancy of the invariant parameters $\eta,\mucaus,\sigcaus^2,f$ across environments---the dependence of $\muenv,\sigenv$ on the environment allows for varying distributions, while the true causal process remains unchanged. Here we make a few clarifying remarks:
\begin{itemize}[leftmargin=*]
    \item We do not impose any constraints on the model parameters. In particular, we do not assume a prior over the environmental parameters. Observe that $\mucaus,\sigcaus^2$ are the same for all environments, hence the subscript indicates the invariant relationship. In contrast, with some abuse of notation, the environmental subscript is used to indicate both dependence on the environment and the index of the environment itself (e.g., $\mu_i$ represents the mean specific to environment $i$).
    \item While we have framed the model as $y$ causing $\zcaus$, the causation can just as easily be viewed in the other direction. The log-odds of $y$ are a linear function of $\zcaus$---this matches logistic regression with an invariant regression vector $\betacaus = 2\mucaus/\sigcaus^2$ and bias $\beta_0 = \log\frac{\eta}{1-\eta}$. We present the model as above to emphasize that the causal relationships between $y$ and the $\zcaus,\zenv$ are a priori indistinguishable, and because we believe this direction is more intuitive.
\end{itemize}

\begin{wrapfigure}{r}{4.5cm}
\tikz{
    \node[latent] (ze) {$z_e$};
    \node[obs,below left= 0.9 cm and 1 cm of ze] (y) {$y$};
    \node[obs,below right= 0.9 cm and 1 cm of ze] (x) {$x$};
    \node[latent,below right=0.9 cm and 1 cm of y] (zc) {$z_c$};
    \node[obs,left=1 cm of ze,dashed] (e) {$e$};
    
    \edge {y} {ze,zc};
    \edge {e} {ze};
    \edge {ze} {x};
    \edge {zc} {x}
}
\caption{A Bayesian network depicting our model. Shading indicates the variable is observed.}
\label{fig:model}
\end{wrapfigure}
We consider the setting where we are given infinite samples from each environment; this allows us to isolate the behavior of the objectives themselves, rather than finite-sample effects. Upon observing samples from this model, our objective is thus to learn a feature embedder $\Phi$ and classifier\footnote{Following the terminology of \citet{arjovsky2019invariant}, we refer to the regression vector $\hat\beta$ as a ``classifier'' and the composition of $\Phi, \hat\beta$ as a ``predictor''.}  $\hat\beta$ to minimize the risk on an unseen environment $e$:
\begin{align*}
    \gR^e(\Phi,\hat\beta) := \E_{(x,y)\sim p^e}\left[\ell(\sigma(\hat\beta^T\Phi(x)), y)\right].
\end{align*}
The function $\ell$ can be any loss appropriate to classification: in this work we consider the logistic and the 0-1 loss. Note that we are \emph{not} hoping to minimize risk in expectation over the environments; this is already accomplished via ERM or distributionally robust optimization (DRO; \citealt{bagnell2005robust, ben2009robust}). Rather, we hope to extract and regress on invariant features while ignoring environmental features, such that our predictor generalizes to all unseen environments regardless of their parameters. In other words, the focus is on minimizing risk in the worst-case. We refer to the predictor which will minimize worst-case risk under arbitrary distribution shift as the \emph{optimal invariant predictor}. To discuss this formally, we define precisely what we mean by this term.
\begin{definition}
    Under the model described by Equations \plaineqref{eq:model-begin}-\plaineqref{eq:model-end}, the \emph{optimal invariant predictor} is the predictor defined by the composition of a) the featurizer which recovers the invariant features and b) the classifier which is optimal with respect to those features:
    \begin{align*}
        \Phi^*(x) := \colvec{I & 0}{0 & 0} \circ f^{-1}(x) = 
    [\zcaus], \quad
    \hat\beta^* := \colvec{\betacaus}{\beta_0} := \colvec{2\mucaus/\sigcaus^2}{\log \frac{\eta}{1-\eta}}.
    \end{align*}
\end{definition}


Observe that this definition closely resembles Definition 3 of \citet{arjovsky2019invariant}; the only difference is that here the optimal invariant predictor must recover \emph{all invariant features}. As \citet{arjovsky2019invariant} do not posit a data model, the concept of recovering ``all invariant features'' is not well-defined for their setting; technically, a featurizer which outputs the empty set would elicit an invariant predictor, but this would not satisfy the above definition. The classifier $\hat\beta^*$ is optimal with respect to the invariant features and so it achieves the minimum possible risk without using environmental features. Observe that \emph{the optimal invariant predictor is distinct from the Bayes classifier}; the Bayes classifier uses environmental features which are informative of the label but non-invariant; the optimal invariant predictor \emph{explicitly ignores} these features.


With the model defined, we can informally present our results; we defer the formal statements to first give a background on the IRM objective in the next section. With a slight abuse of notation, we identify a predictor by the tuple $\Phi,\hat\beta$ which parametrizes it. First, we show that the usefulness of IRM exhibits a ``thresholding'' behavior depending on $E$ and $\denv$:
\begin{theorem}[Informal, Linear]
For linear $f$, consider solving the IRM objective to learn a linear $\Phi$ with invariant optimal classifier $\hat\beta$. If $E > \denv$, then $\Phi, \hat\beta$ is precisely the optimal invariant predictor; it uses only invariant features and generalizes to all environments with minimax-optimal risk. If $E \leq \denv$, then $\Phi, \hat\beta$ relies upon non-invariant features.
\end{theorem}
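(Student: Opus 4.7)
The plan is to reduce the IRM objective to an algebraic condition in latent space and then count constraints against degrees of freedom. Since $f$ is linear and injective, any linear featurizer $\Phi$ can be composed with $f^{-1}$ to act directly on the latent vector, so without loss of generality we write $\Phi(x) = A\zcaus + B\zenv$ for matrices $A$ with $\dcaus$ columns and $B$ with $\denv$ columns. The resulting predictor $\hat\beta^T\Phi(x)$ uses non-invariant features precisely when $u := B^T\hat\beta \neq 0$, so the theorem reduces to asking when $u$ must vanish at every IRM-feasible point.

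In environment $i$, $\Phi(x)\mid y$ is Gaussian with mean $y(A\mucaus + B\mu_i)$ and covariance $\sigcaus^2 AA^T + \sigma_i^2 BB^T$, so the Bayes-optimal logistic classifier $\hat\beta_i$ is determined by
\[(\sigcaus^2 AA^T + \sigma_i^2 BB^T)\hat\beta_i = 2(A\mucaus + B\mu_i);\]
the bias term equals $\log(\eta/(1-\eta))$ for every environment and is therefore automatically consistent. IRM feasibility requires $\hat\beta_i = \hat\beta$ independent of $i$. Subtracting the equations for environments $i$ and $1$ eliminates the invariant part and yields $Bw_i = 0$ for $i = 2,\dots,E$, where $w_i := (\sigma_i^2 - \sigma_1^2)u - 2(\mu_i - \mu_1) \in \R^{\denv}$.

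The linear map $(\Delta\mu,\Delta\sigma^2)\mapsto \Delta\sigma^2\,u - 2\Delta\mu$ has rank $\denv$ for every fixed $u$ (the $-2I_{\denv}$ block alone witnesses this), so for environment parameters in generic position the vectors $\{w_i\}_{i=2}^E$ span a subspace of $\R^{\denv}$ of dimension exactly $\min(\denv, E-1)$. When $E > \denv$ this span is all of $\R^{\denv}$, so $Bw_i = 0$ for every $i$ forces $B = 0$, hence $u = 0$; substituting back into the single-environment equation recovers $A^T\hat\beta = 2\mucaus/\sigcaus^2$, and the induced predictor coincides with the optimal invariant predictor. Conversely, when $E \leq \denv$ the span has codimension at least one, so one can pick a nonzero $v$ orthogonal to every $w_i$, take $B$ to be rank one with $B^T\hat\beta = v$, and solve the remaining single-environment equation for $A$; this produces an IRM-feasible predictor with $u \neq 0$, proving the second half.

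The main technical obstacle is the rank-counting step: one must formulate a precise ``general position'' hypothesis on $\{(\mu_i,\sigma_i^2)\}$ under which $\{w_i\}$ achieves rank $\min(\denv, E-1)$, and verify that the exceptional configurations form a Lebesgue-null subset of the natural parameter space. A secondary subtlety is that the vectors $w_i$ depend on the unknown $u = B^T\hat\beta$, so the rank bound must hold uniformly in $u$; the cleanest handling is to split on whether $u$ vanishes, noting that when $u = 0$ the constraint degenerates to $B(\mu_i - \mu_1) = 0$, whose rank is governed by the same generic-position argument, and when $u \neq 0$ the $-2I_{\denv}$ block dominates as above.
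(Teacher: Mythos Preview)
Your setup is sound: writing $\Phi(x)=A\zcaus+B\zenv$, deriving the per-environment normal equation $(\sigcaus^2 AA^T+\sigma_i^2 BB^T)\hat\beta=2(A\mucaus+B\mu_i)$, and subtracting to obtain $Bw_i=0$ with $w_i=(\sigma_i^2-\sigma_1^2)u-2(\mu_i-\mu_1)$ is exactly the right first move. The gap is in how you resolve the dependence of $w_i$ on $u=B^T\hat\beta$.

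Your claim that ``the $-2I_{\denv}$ block dominates'' only establishes that the linear map $(\Delta\mu,\Delta\sigma^2)\mapsto \Delta\sigma^2\,u-2\Delta\mu$ is surjective; it does \emph{not} establish that the $E-1$ specific images $w_i(u)$ span $\R^{\denv}$ for every $u$. In the boundary case $E=\denv+1$ the $\denv\times\denv$ matrix $W(u)=-2M+su^T$ (rows $(\mu_i-\mu_1)^T$, $s_i=\sigma_i^2-\sigma_1^2$) satisfies, by the matrix determinant lemma, $\det W(u)=(-2)^{\denv}\det(M)\bigl(1-\tfrac12 u^T M^{-1}s\bigr)$, which vanishes on an entire affine hyperplane of $u$'s even for generic parameters. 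So rank counting alone cannot force $B=0$; you must still rule out that any such $u$ is realized by a feasible $(\Phi,\hat\beta)$. The paper does this by a different route: rather than subtracting, it keeps the single equation $\sigma_e^2 BB^T\hat\beta-2B\mu_e=\mathbf v$ (right side independent of $e$), uses $E>\denv$ to write some $\mu_e$ as a linear combination of the others, and obtains $\hat\alpha\, BB^T\hat\beta=\mathbf v$ where the scalar $\hat\alpha$ provably takes at least two distinct values under mild non-degeneracy. That forces $BB^T\hat\beta=0$, hence $u=0$, after which your $u=0$ branch (generic span of $\mu_i-\mu_1$) finishes the argument cleanly.

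For $E\le\denv$ there are two issues. First, your construction is again circular: you need $b\perp w_i(u)$ with $u\parallel b$, which unwinds to the nonlinear system $(\sigma_i^2-\sigma_1^2)\|v\|^2=2v^T(\mu_i-\mu_1)$; this is solvable but not by ``pick $v$ orthogonal to the $w_i$''. The paper sidesteps the circularity by directly solving the linear system $p^T\mu_e=\sigma_e^2\tilde\mu$ for a unit vector $p$ and setting $\Phi(x)=[\zcaus,\,p^T\zenv]$. Second, and more importantly, feasibility alone does not prove the claim: the informal statement asserts the IRM \emph{solution} relies on non-invariant features, which requires exhibiting a feasible point with \emph{strictly lower} risk than the optimal invariant predictor. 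The paper gets this by construction, since its $\Phi$ retains all of $\zcaus$ and adds an informative scalar $p^T\zenv$, so a data-processing argument (their \Lemmaref{lemma:feature-subset}) yields the strict improvement. Your rank-one $B$ plus ``solve for $A$'' does not obviously preserve $\zcaus$, so you have no such comparison.
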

In fact, when $E \leq d_e$ it is even possible to learn a classifier \emph{solely} relying on environmental features that achieves lower risk on the training environments than the optimal invariant predictor:
\begin{theorem}[Informal, Linear]
For linear $f$ and $E \leq \denv$ there exists a linear predictor $\Phi, \hat\beta$ which uses \emph{only environmental features}, yet achieves lower risk than the optimal invariant predictor.
\end{theorem}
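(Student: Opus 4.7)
I would exhibit an explicit linear predictor that reads off only the environmental features and strictly beats the optimal invariant predictor in each training environment. Since $f$ is linear and injective it admits a left inverse; composing with projection onto the environmental coordinates produces a linear featurizer $\Phi$ with $\Phi(x)=\zenv$, and any predictor built on top of this has the form $\hat\beta^T\Phi(x)=w^T\zenv+\beta_0$ for $w\in\R^{\denv}$. In environment $e_i$ we have $w^T\zenv\mid y\sim\gN\!\big(y\,w^T\muenv^{(i)},\,(\sigenv^{(i)})^2\|w\|^2\big)$, so the per-environment risk (for $0$-$1$ loss, and for logistic loss when $\beta_0=\log(\eta/(1-\eta))$ is inherited from $\hat\beta^*$) is a strictly decreasing function of the signal-to-noise ratio $s_i(w):=|w^T\muenv^{(i)}|/(\sigenv^{(i)}\|w\|)$. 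The optimal invariant predictor's risk corresponds to the uniform SNR $\|\mucaus\|/\sigcaus$, so the task reduces to producing a single $w$ with $s_i(w)>\|\mucaus\|/\sigcaus$ for every $i$.

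Constructing such a $w$ is where the hypothesis $E\leq\denv$ enters. Collect the environmental means as the columns of $M_e\in\R^{\denv\times E}$; under the non-degeneracy that will appear in the formal hypothesis (linear independence of the $\muenv^{(i)}$, generic when $E\leq\denv$), $M_e$ has full column rank $E$, and the underdetermined linear system $M_e^T w=\vs$ with $\vs:=(\sigenv^{(1)},\dots,\sigenv^{(E)})^T$ is solvable with minimum-norm solution $w^\star=M_e(M_e^T M_e)^{-1}\vs$ of squared norm $\vs^T(M_e^T M_e)^{-1}\vs$. By construction this yields a \emph{uniform} cross-environment SNR equal to $1/\sqrt{\vs^T(M_e^T M_e)^{-1}\vs}$, which strictly exceeds $\|\mucaus\|/\sigcaus$ precisely when $\vs^T(M_e^T M_e)^{-1}\vs<\sigcaus^2/\|\mucaus\|^2$---a mild regime, satisfied as soon as the environmental means are large or span sufficiently many distinct directions relative to the invariant signal.

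The main obstacle is not the linear algebra but carving out the weakest and most interpretable parameter regime under which the final strict inequality holds; the informal phrasing of the theorem hints that the formal statement packages this into a simple and broadly applicable hypothesis, and matching such a hypothesis cleanly to the construction above requires some care. A secondary technicality is that for logistic loss with $\eta\neq\tfrac12$ the risk is not literally a univariate function of SNR, but reusing $\beta_0=\log(\eta/(1-\eta))$ from $\hat\beta^*$ and verifying monotonicity of the loss on each label-conditioned Gaussian separately handles this cleanly without changing the conclusion.
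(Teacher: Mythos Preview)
Your construction is sound for the informal statement as literally phrased, but it misses the property that gives the formal result (\Thmref{thm:corr-less-01-risk}) its teeth: the predictor must be \emph{feasible for the IRM objective}, i.e., $\hat\beta$ must be the optimal classifier on top of $\Phi$ in \emph{every} training environment. Without feasibility, the existence of a purely environmental predictor that beats the invariant one says nothing about IRM---such a predictor is already what ERM finds, and IRM's constraint is supposed to rule it out. The point of the theorem is that the constraint fails to do so.

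Your featurizer $\Phi(x)=\zenv$ keeps all environmental coordinates; the optimal logistic classifier on $\zenv$ in environment $e$ is $2\muenv/\sigenv^2$, which varies across environments, so your pair $(\Phi,w)$ is not IRM-feasible. Equalizing the SNR via $M_e^T w=\vs$ (with $\vs_i=\sigenv^{(i)}$) does not repair this. The paper instead takes $\Phi(x)=p^T\zenv$ for a single unit vector $p$ and solves $p^T\muenv=\sigenv^2\,\tildemu$ (note the \emph{squared} variance). On this one-dimensional feature the optimal coefficient is $2(p^T\muenv)/\sigenv^2=2\tildemu$, independent of $e$, hence IRM-feasible. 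The resulting SNR is $\sigenv\tildemu$, which does vary across environments, and the comparison to the invariant predictor's SNR $\|\mucaus\|/\sigcaus$ then yields the per-environment condition $\sigenv\tildemu>\|\mucaus\|/\sigcaus$ in the formal statement.

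Interestingly, your SNR-equalizing system $p^T\muenv=\sigenv\tildemu$ is exactly what the paper uses later to extend the result to the risk-variance objective of \citet{krueger2020out}, where equal 0-1 risk (not an invariant optimal coefficient) is what feasibility demands. So your construction is the right one for a different objective; for IRM you need to swap $\sigenv$ for $\sigenv^2$ on the right-hand side and collapse $\Phi$ to the single direction $p$.
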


Finally, in the non-linear case, we show that IRM fails unless the training environments approximately ``cover'' the space of possible environments, and therefore it behaves similarly to ERM: 
\begin{theorem}[Informal, Non-linear]
For arbitrary $f$, there exists a non-linear predictor $\Phi, \hat\beta$ which is nearly optimal under the penalized objective and furthermore is nearly identical to the optimal invariant predictor on the training distribution. However, for any test environment with a mean sufficiently different from the training means, this predictor will be equivalent to the ERM solution on nearly all test points. For test distributions where the environmental feature correlations with the label are reversed, this predictor has almost 0 accuracy.
\end{theorem}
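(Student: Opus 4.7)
The plan is to construct a specific non-linear predictor that exploits the exponential concentration of Gaussians about their means: it coincides with the optimal invariant predictor on the training distributions (where $\zenv$ lies close to $\pm\muenv^e$ with overwhelming probability) but reverts to an ERM-style rule using $\zenv$ on inputs whose $\zenv$ is far from every training $\pm\muenv^e$. Concretely, fix radii $r<R$ of order $\sigenv\sqrt{\log(1/\eps)}$ and let $\psi:\R^{\denv}\to[0,1]$ be a smooth cutoff with $\psi(z)=1$ if $\min_e\|z\mp\muenv^e\|\le r$ and $\psi(z)=0$ if $\min_e\|z\mp\muenv^e\|\ge R$. Since $f$ is injective, set $(\zcaus,\zenv):=f^{-1}(x)$ and define
\begin{align*}
\Phi(x) \;=\; \Bigl(\betacaus^T\zcaus+\beta_0+\bigl(1-\psi(\zenv)\bigr)\betaerm^T\zenv,\ 1\Bigr),\qquad \hat\beta=(1,0),
\end{align*}
where $\betaerm$ is the ERM classifier on $\zenv$ trained on the mixture of training environments. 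The second coordinate of $\Phi$ is a dummy so $\Phi$ is not a scalar, matching the IRM convention; the bias $\beta_0$ could alternatively be absorbed by setting $\hat\beta=(1,\beta_0)$.

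First I would verify the training-side claims. On each training environment $e$, Gaussian concentration gives $\P^e(\psi(\zenv)=1)\ge 1-\delta$ with $\delta=\exp(-\Omega(r^2/(\sigenv^e)^2))$; on this high-probability event $\Phi$ coincides with $\betacaus^T\zcaus+\beta_0$, which is exactly the true conditional log-odds of $y$ given $\zcaus$ (invariant across environments). Hence the logistic risk on environment $e$ lies within $\delta$ of that of the optimal invariant predictor, giving both low training risk and near-identical behavior to $\Phi^*,\hat\beta^*$ on the training distribution. For the IRM penalty, note that because $\Phi$ depends only on $\zcaus$ on this event, conditioning on $\Phi$ is the same as conditioning on $\zcaus$ up to probability $\delta$, so $\P(y=1\mid\Phi)=\sigma(\Phi)$ up to $\delta$, which makes $\hat\beta=1$ within $O(\delta\cdot\|\betaerm\|\cdot\E\|\zenv\|)$ of the per-environment optimum. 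Squaring and summing over $e$ yields an exponentially small penalty, establishing near-optimality of the full penalized IRM objective.

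Next I would verify the test-side behavior. For any test environment with $\muenv^{\text{test}}$ at distance at least $R+O(\sigenv^{\text{test}})$ from every training $\pm\muenv^e$, Gaussian concentration on the test distribution gives $\psi(\zenv)=0$ with probability $1-\delta'$, where $\delta'$ is again exponentially small. On this event $\Phi(x)=\betacaus^T\zcaus+\beta_0+\betaerm^T\zenv$, which is exactly the ERM score, so our predictor agrees with ERM on all but a $\delta'$-fraction of test inputs. For the final claim, choose the test environment so that $\betaerm^T\muenv^{\text{test}}$ has sign opposite to $y$ and magnitude exceeding $\betacaus^T\mucaus$ (e.g.\ $\muenv^{\text{test}}=-c\muenv^{e_1}$ for $c$ sufficiently large); then Gaussian concentration implies the score $\Phi(x)$ has sign $-y$ with probability $1-o(1)$, driving accuracy to zero.

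The main obstacle is bounding the IRM penalty. Although $\psi<1$ only on an exponentially small training-side set, on that set $\Phi$ contains the potentially large correction $(1-\psi)\betaerm^T\zenv$, which could a priori contribute polynomially to the gradient via a term of the form $\E^e[(1-\psi(\zenv))\|\zenv\|^2]$. Controlling this requires pairing the Gaussian tail bound at radius $r$ with a polynomial moment bound for $\|\zenv\|$ so that the product still decays exponentially; choosing $r$ large enough to dominate this polynomial while keeping $R$ smaller than the distance to any reasonable test mean is the delicate parameter balancing. Once $r,R=\Theta(\sigenv\sqrt{\log(1/\eps)})$ are chosen appropriately, the three conclusions follow from standard sub-Gaussian concentration and routine logistic-regression calculus.
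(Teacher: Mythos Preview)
Your high-level strategy---define a featurizer that returns the invariant predictor inside balls around the training $\pm\muenv$ and the ERM score outside---is exactly the paper's construction (Theorem~6.2 / Appendix~D). The paper uses a hard indicator and keeps $\Phi$ vector-valued ($[\zcaus,\zenv]$ or $[\zcaus,0]$), while you compress to a scalar score with a smooth cutoff; these are cosmetic differences. Your identification of the ``main obstacle'' (pairing the tail probability of the bad set with a moment bound on $\|\zenv\|$) is also correct and matches the paper's Lemmas~D.1, F.4, and F.5.

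However, your choice of radius is wrong in a way that breaks the argument. You set $r,R=\Theta(\sigenv\sqrt{\log(1/\eps)})$ and then invoke ``Gaussian concentration'' to get $\P^e(\psi(\zenv)=1)\ge 1-\exp(-\Omega(r^2/\sigenv^2))$. That tail bound is the one-dimensional inequality $\P(|X-\mu|>r)\le 2e^{-r^2/(2\sigma^2)}$; it does not apply to the $\denv$-dimensional norm. For $\zenv\sim\gN(y\muenv,\sigenv^2 I_{\denv})$ we have $\|\zenv-y\muenv\|^2/\sigenv^2\sim\chi^2_{\denv}$, which concentrates around $\denv$, not $0$. Hence a ball of radius $\sigenv\sqrt{\log(1/\eps)}$ contains essentially \emph{none} of the training mass once $\denv\gg\log(1/\eps)$; your ``high-probability event'' $\{\psi(\zenv)=1\}$ has probability near $0$, not near $1$. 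The paper instead takes $r=\sqrt{\epsilon\,\sigenv^2\,\denv}$ with $\epsilon>1$, i.e.\ slightly larger than the typical norm $\sigenv\sqrt{\denv}$, and uses $\chi^2$ concentration to get $\P(\zenv\notin B_e)\le\exp(-\denv\min(\epsilon-1,(\epsilon-1)^2)/8)$, exponentially small in $\denv$.

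This error propagates: with the wrong radius your training-side claims (near-identity with the invariant predictor, small IRM penalty) fail, and your test-side separation condition ``distance $\ge R+O(\sigenv^{\mathrm{test}})$'' becomes far too weak---the paper requires separation $(\sqrt{\epsilon}+\delta)\sigenv\sqrt{\denv}$, which is the natural scale since training and test clusters each have radius $\Theta(\sqrt{\denv})$. Once you correct the radius to $\Theta(\sigenv\sqrt{\denv})$ and replace the one-dimensional tail bound with sub-exponential concentration of $\chi^2_{\denv}$, your sketch becomes essentially the paper's proof.
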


\paragraph{Extensions to other objectives.} Many follow-up works have suggested alternatives to IRM---some are described in the next section. Though these objectives perform better on various baselines, there are few formal guarantees and no results beyond the linear case. Due to their collective similarities, we can easily derive corollaries which extend every theorem in this paper to these objectives, demonstrating that they all suffer from the same shortcomings. Appendix \ref{sec:appendix-extensions} contains example corollaries for each of the results presented in this work.


\section{Background on IRM and its Alternatives}
\label{sec:irm-intro}
During training, a classifier will learn to leverage correlations between features and labels in the training data to make its predictions. If a correlation varies with the environment, it may not be present in future test distributions---worse yet, 
it may be \emph{reversed}---harming the classifier's predictive ability. IRM \citep{arjovsky2019invariant} is a recently proposed approach to learning environmentally invariant representations to facilitate invariant prediction.

\paragraph{The IRM objective.} IRM posits the existence of a feature embedder $\Phi$ such that the optimal classifier on top of these features is the same for every environment. The authors argue that such a function will use only invariant features, since non-invariant features will have different joint distributions with the label and therefore a fixed classifier on top of them won't be optimal in all environments. To learn this $\Phi$, the IRM objective is the following constrained optimization problem:
\begin{align}
\label{eq:irm-objective}
\min_{\Phi,\hat\beta} \quad  \frac{1}{|\gE|}\sum_{e\in\gE} \gR^e(\Phi,\hat\beta)
&\qquad\ \textrm{s.t.}\quad\hat\beta \in\argmin_{\beta} \gR^e(\Phi,\beta)\quad\forall e\in\gE.
\end{align}
This bilevel program is highly non-convex and difficult to solve. To find an approximate solution, the authors consider a Langrangian form, whereby the sub-optimality with respect to the constraint is expressed as the squared norm of the gradients of each of the inner optimization problems:
\begin{align}
    \label{eq:irmv1}
    \min_{\Phi,\hat\beta}\quad& \frac{1}{|\gE|}\sum_{e\in\gE} \left[\gR^e(\Phi,\hat\beta) + \lambda\normsq{\nabla_{\hat\beta}\gR^e(\Phi,\hat\beta)}\right].
\end{align}
Assuming the inner optimization problem is convex, achieving feasibility is equivalent to the penalty term being equal to 0. Thus, 
Equations \plaineqref{eq:irm-objective} and \plaineqref{eq:irmv1} are equivalent if we set $\lambda=\infty$.

\paragraph{Alternative objectives.} IRM is motivated by the existence of a featurizer $\Phi$ such that $\E[y \mid \Phi(x)]$ is invariant. Follow-up works have proposed variations on this objective, based instead on the strictly stronger desideratum of the invariance of $p(y \mid \Phi(x))$. \citet{krueger2020out} suggest penalizing the variance of the risks, while \citet{xie2020risk} give the same objective but taking the square root of the variance. Many papers have suggested similar alternatives \citep{jin2020domain, mahajan2020domain, bellot2020generalization}. These objectives are compelling---indeed, it is easy to show that the optimal invariant predictor constitutes a stationary point of each of these objectives:
\begin{proposition}
    \label{prop:causal-local-min}
    Suppose the observed data are generated according to Equations \plaineqref{eq:model-begin}-\plaineqref{eq:model-end}. Then the (parametrized) optimal invariant predictor $\Phi^*, \hat\beta^*$ is a stationary point for \Eqref{eq:irm-objective}.
\end{proposition}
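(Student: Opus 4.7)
The plan is to verify three things at $(\Phi^*, \hat\beta^*)$: first, that the inner $\argmin$ constraint of \eqref{eq:irm-objective} is satisfied simultaneously in every environment; second, that as a consequence the IRMv1 penalty term in \eqref{eq:irmv1} vanishes together with its gradient; and third, that the analogous variance-of-risks statistic used in the alternative objectives also vanishes with zero gradient. Together, these yield first-order stationarity for the penalized objectives and feasibility plus inner optimality for the bilevel form.

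First I would unpack $\Phi^*$: by definition $\Phi^*(x) = \zcaus$, and by the SEM the joint distribution $p^e(y, \zcaus)$ does not depend on $e$ (only $\zenv$ is environment-specific). Consequently, for every fixed $\hat\beta$, the risk $\gR^e(\Phi^*, \hat\beta) = \E[\ell(\sigma(\hat\beta^T \zcaus), y)]$ is the same for all $e$, and in particular the values $\gR^e(\Phi^*, \hat\beta^*)$ coincide across environments.

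Next I would identify $\hat\beta^*$ with the Bayes-optimal logistic classifier on $\zcaus$. Because $y$ is Bernoulli with parameter $\eta$ and $\zcaus \mid y \sim \gN(y\mucaus, \sigcaus^2 I)$, Bayes' rule gives the log-odds $(2\mucaus/\sigcaus^2)^T \zcaus + \log\frac{\eta}{1-\eta}$, which matches $(\hat\beta^*)^T$ applied to $[\Phi^*(x);1]$. Since the population logistic risk is uniquely minimized at the log-odds vector, $\hat\beta^*$ minimizes $\gR^e(\Phi^*,\cdot)$ for every $e$; this verifies the constraint in \eqref{eq:irm-objective} and gives $\nabla_{\hat\beta}\gR^e(\Phi^*, \hat\beta^*) = 0$ for each $e$.

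The rest is mechanical. The IRMv1 penalty $\lambda \|\nabla_{\hat\beta}\gR^e\|^2$ evaluates to $0$ at $(\Phi^*, \hat\beta^*)$, and both its $\hat\beta$-gradient $2 \nabla^2_{\hat\beta}\gR^e \cdot \nabla_{\hat\beta}\gR^e$ and its $\Phi$-gradient $2\nabla_\Phi\nabla_{\hat\beta}\gR^e \cdot \nabla_{\hat\beta}\gR^e$ carry a factor of $\nabla_{\hat\beta}\gR^e = 0$, hence vanish. For variance-based alternatives (V-REx and similar), all $\gR^e(\Phi^*, \hat\beta^*)$ being equal makes the variance $0$ with gradient $\frac{2}{|\gE|}\sum_e (\gR^e - \bar\gR)\nabla \gR^e = 0$. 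Combined with $\nabla_{\hat\beta}$ of the average risk also being $0$, this establishes stationarity of each objective at $(\Phi^*, \hat\beta^*)$. The only minor subtlety is pinning down what ``stationary point'' means for the bilevel form \eqref{eq:irm-objective}: written as equality-constrained KKT with multipliers $\{\mu_e\}$ for $\nabla_{\hat\beta}\gR^e = 0$, the $\hat\beta$-component of the Lagrangian gradient is satisfied by $\mu_e = 0$, and the $\Phi$-component reduces to a linear system in $\{\mu_e\}$ that admits a solution; no step involves nontrivial calculation.
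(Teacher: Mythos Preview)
Your argument and the paper's are essentially the same idea at two levels of abstraction: the paper writes out the logistic score $(\sigma(\hat\beta^T\Phi_\theta(x)) - \mathbf 1\{y=1\})\Phi_\theta(x)$ and verifies by direct integration that its expectation vanishes, while you invoke Bayes optimality of the log-odds predictor to conclude $\nabla_{\hat\beta}\gR^e(\Phi^*,\hat\beta^*)=0$ without computation. Both routes establish feasibility of the inner constraint and the vanishing of the IRMv1 penalty. Your explicit treatment of the variance-based alternatives is something the paper merely states as a ``trivial corollary''; so on that front you are more complete.

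There is, however, one place where the paper does work that you skip. The paper also computes the gradient with respect to the \emph{featurizer} parameters $\theta$ and argues that
\[
\int p^e(\zcaus)\Bigl(\tfrac{\partial}{\partial\theta}\hat\beta^T\Phi_\theta(x)\Bigr)\Bigl[\sigma(\hat\beta^T\zcaus)(\sigma(\hat\beta^T\zcaus)-1)+(1-\sigma(\hat\beta^T\zcaus))\sigma(\hat\beta^T\zcaus)\Bigr]\,d\zcaus=0,
\]
because the bracketed score term is identically zero once $\sigma(\hat\beta^T\zcaus)=p(y=1\mid\zcaus)$. This is precisely what is needed for stationarity in $\Phi$ of the \emph{risk} term, and hence for the KKT point with multipliers $\mu_e=0$ that you propose. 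You handle $\nabla_\Phi$ of the penalty cleanly (it carries a factor $\nabla_{\hat\beta}\gR^e=0$), but for $\nabla_\Phi$ of the average risk you only say ``the $\Phi$-component reduces to a linear system in $\{\mu_e\}$ that admits a solution.'' That assertion is not self-evident: with $\mu_e=0$ you need $\nabla_\Phi\bar\gR=0$, and with $\mu_e\neq 0$ you would need $\nabla_\Phi\bar\gR$ to lie in the column space of the stacked $(\nabla_\Phi\nabla_{\hat\beta}\gR^e)^\top$, which you have not checked. To match the paper, you should explicitly argue that the $\Phi$-gradient of the risk vanishes at $(\Phi^*,\hat\beta^*)$ --- the paper's pointwise-zero-score argument is the natural way to do this.
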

The stationarity of the optimal invariant predictor for the other objectives is a trivial corollary. However, in the following sections we will demonstrate that such a result is misleading and that a more careful investigation is necessary.

\section{The Difficulties of IRM in the Linear Regime}
\label{sec:irm-failure}
In their work proposing IRM, \citet{arjovsky2019invariant} present specific conditions for an upper bound on the number of training environments needed such that a feasible linear featurizer $\Phi$ will have an invariant optimal regression vector $\hat\beta$.
Our first result is similar in spirit but presents a substantially stronger (and simplified) upper bound in the classification setting, along with a matching lower bound: we demonstrate that observing a large number of environments---linear in the number of environmental features---is \emph{necessary} for generalization in the linear regime.
\begin{restatable}[Linear case]{theorem}{linear}
    \label{thm:linear-iff}
    Assume $f$ is linear. Suppose we observe $E$ training environments. Then the following hold:
    \begin{enumerate}[leftmargin=*]
        \item Suppose $E > \denv$. Consider any linear featurizer $\Phi$ which is feasible under the IRM objective~(\plaineqref{eq:irm-objective}), with invariant optimal classifier $\hat\beta \neq 0$, and write $\Phi(f(\zcaus, \zenv)) = A\zcaus + B\zenv$. Then under mild non-degeneracy conditions, it holds that $B = 0$. Consequently, $\hat\beta$ is the optimal classifier for all possible environments.
        \item If $E \leq \denv$ and the environmental means $\muenv$ are linearly independent, then there exists a linear $\Phi$---where $\Phi(f(\zcaus, \zenv)) = A\zcaus + B\zenv$ with $\textrm{rank}(B) = \denv + 1 - E$---which is feasible under the IRM objective. Further, both the logistic and 0-1 risks of this $\Phi$ and its corresponding optimal $\hat\beta$ are strictly lower than those of the optimal invariant predictor.
    \end{enumerate}
\end{restatable}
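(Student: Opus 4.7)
The plan is to reduce IRM feasibility in the linear setting to an explicit linear-algebraic system and analyze it separately in each regime. Writing $\Phi(f(\zcaus,\zenv)) = A\zcaus + B\zenv$, the feature distribution is class-conditionally Gaussian with mean $y(A\mucaus + B\mu_e)$ and covariance $\Sigma_e := \sigcaus^2 AA^T + \sigma_e^2 BB^T$, so the unique logistic-loss minimizer on environment $e$ is $\hat\beta_e = 2\Sigma_e^{-1}(A\mucaus + B\mu_e)$. Feasibility with a common $\hat\beta$ is therefore equivalent to the linear system
\begin{equation*}
(\sigcaus^2 AA^T + \sigma_e^2 BB^T)\hat\beta \;=\; 2(A\mucaus + B\mu_e), \qquad e = 1,\ldots,E.
\end{equation*}

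For part 1, I subtract the $e=1$ equation from each of the others; this kills every $A$-dependent term and leaves $B\bigl[(\sigma_e^2 - \sigma_1^2)B^T\hat\beta - 2(\mu_e - \mu_1)\bigr] = 0$, so the $E-1$ vectors $v_e := (\sigma_e^2 - \sigma_1^2)w - 2(\mu_e - \mu_1)$ all lie in $\ker B$, where $w := B^T\hat\beta \in \R^\denv$. The non-degeneracy condition I will use is the generic one that the $E$ pairs $(\mu_e, \sigma_e^2) \in \R^{\denv + 1}$ do not lie on any common affine hyperplane; this implies that for every $w$ the vectors $\{v_e\}_{e=2}^{E}$ span $\R^\denv$ whenever $E-1 \geq \denv$, forcing $\ker B = \R^\denv$ and hence $B = 0$. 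The residual feasibility equation $\sigcaus^2 AA^T\hat\beta = 2A\mucaus$ then pins $\hat\beta$ to the optimal invariant classifier on the range of $A$.

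For part 2, I give an explicit construction using the block ansatz $\Phi(f(\zcaus,\zenv)) = (\zcaus,\ \tilde B\zenv)$ with $\tilde B \in \R^{(\denv+1-E)\times\denv}$ and $\hat\beta = (2\mucaus/\sigcaus^2,\ \tilde\beta)$. The invariant block trivially satisfies feasibility, and the environmental block reduces to $\tilde B(\sigma_e^2 w - 2\mu_e) = 0$ for all $e$, with $w := \tilde B^T\tilde\beta$. Since I want $\dim\ker\tilde B = E-1$, the $E$ vectors $\sigma_e^2 w - 2\mu_e$ must admit exactly one linear dependence; and $\tilde\beta$ exists iff $w \in \mathrm{Im}(\tilde B^T) = (\ker \tilde B)^\perp$. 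Taking inner products with $w$ collapses both requirements to the scalar system
\begin{equation*}
\sigma_e^2\|w\|^2 \;=\; 2\langle w, \mu_e\rangle, \qquad e = 1,\ldots,E.
\end{equation*}
I will solve this by decomposing $w = w_\parallel + w_\perp$ with $w_\parallel \in \mathrm{span}\{\mu_e\}$ and $w_\perp$ orthogonal: letting $M = [\mu_1\,|\cdots|\,\mu_E]$ and $s = (\sigma_1^2,\ldots,\sigma_E^2)^T$, the system forces $w_\parallel = M(M^TM)^{-1}s\,\|w\|^2/2$ (with $M^TM$ invertible by linear independence of the $\mu_e$'s), so $\|w_\parallel\|^2 = \alpha\,\|w\|^4/4$ for $\alpha := s^T(M^TM)^{-1}s > 0$, and any $w_\perp \in \mathrm{span}\{\mu_e\}^\perp$ with $\|w_\perp\|^2 = \|w\|^2 - \alpha\|w\|^4/4$ completes a valid $w$; this is nonnegative precisely on $\|w\|^2 \in (0, 4/\alpha]$, yielding a whole family of solutions. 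Given $w$, I take $\tilde B$ to be any matrix with kernel $\mathrm{span}\{\sigma_e^2 w - 2\mu_e\}$ and recover $\tilde\beta$ from $\tilde B^T\tilde\beta = w$.

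For the risk comparison, the logit $(2\mucaus/\sigcaus^2)^T\zcaus + w^T\zenv$ is Gaussian given $y$ with mean $y(2\|\mucaus\|^2/\sigcaus^2 + w^T\mu_e)$ and variance $4\|\mucaus\|^2/\sigcaus^2 + \sigma_e^2\|w\|^2$; substituting $w^T\mu_e = \sigma_e^2\|w\|^2/2$ and setting $a := 2\|\mucaus\|^2/\sigcaus^2$, $b_e := \sigma_e^2\|w\|^2/2 > 0$, the margin distribution $y\hat\beta^T\Phi(x)$ on environment $e$ becomes $\mathcal{N}(a + b_e,\ 2(a + b_e))$, while the optimal invariant predictor has margin $\mathcal{N}(a,\ 2a)$. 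The $0$-$1$ error is the Gaussian lower tail at zero, strictly decreasing in $a + b_e$; for the logistic risk, a one-line Stein's-lemma computation yields $\frac{d}{dm}\E_{\epsilon \sim \mathcal{N}(0,1)}[\log(1 + e^{-m - \sqrt{2m}\epsilon})] = -\E_\epsilon[\sigma(-m-\sqrt{2m}\epsilon)^2] < 0$, so both risks improve strictly on every training environment. The main obstacle I anticipate is precisely the existence argument for $w$: the asymmetry in the $\sigma_e^2$ forces $w_\parallel$ to be a single explicit vector depending quadratically on $\|w\|$, rather than any element of $\mathrm{span}\{\mu_e\}$, so feasibility collapses to the single scalar inequality $\alpha\|w\|^2 \leq 4$ for which the room $E \leq \denv$ (and the linear independence hypothesis it makes possible) is exactly what is needed.
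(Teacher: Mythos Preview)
Your approach is essentially the paper's, with the same linear-algebraic reduction of feasibility to the system $(\sigcaus^2 AA^T + \sigma_e^2 BB^T)\hat\beta = 2(A\mucaus + B\mu_e)$. The paper's Part~1 proof isolates the constant right-hand side $\mathbf v := 2A\mucaus - \sigcaus^2 AA^T\hat\beta$ and uses linear relations among the $\mu_e$ to force $BB^T\hat\beta = 0$, $\mathbf v = 0$, and then $B\mu_e = 0$ for all $e$; your pairwise subtraction is equivalent. One caution: your non-degeneracy condition---that the points $(\mu_e,\sigma_e^2)$ do not lie on a common affine hyperplane of $\R^{\denv+1}$---is vacuous at the boundary $E = \denv+1$, since any $\denv+1$ points sit on some hyperplane. (The paper's stated conditions, read carefully, share this boundary subtlety, so you are not worse off than the original.)

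For Part~2, your construction coincides with the paper's: with $w_\perp = 0$ and $\|w\|^2 = 4/\alpha$, your $w$ is exactly a rescaling of the paper's unit vector $p$ satisfying $p^T\mu_e = \sigma_e^2\tilde\mu$, and the paper's additional rows orthogonal to the means for $E<\denv$ play the role of the extra rows of your $\tilde B$. However, the sentence ``taking inner products with $w$ collapses both requirements to the scalar system'' is not correct. The inner-product system $\sigma_e^2\|w\|^2 = 2\langle w,\mu_e\rangle$ encodes only the orthogonality $w \perp (\sigma_e^2 w - 2\mu_e)$; it does \emph{not} force the $E$ vectors $\sigma_e^2 w - 2\mu_e$ to be linearly dependent. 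In fact, whenever $w_\perp \neq 0$, any relation $\sum_e c_e(\sigma_e^2 w - 2\mu_e) = 0$ projected onto $\mathrm{span}\{\mu_e\}^\perp$ gives $(\sum_e c_e\sigma_e^2)w_\perp = 0$, hence $\sum_e c_e\sigma_e^2 = 0$, and then linear independence of the $\mu_e$ forces $c = 0$. So for every member of your ``whole family'' except the single choice $w_\perp = 0$, the kernel of $\tilde B$ must have dimension at least $E$, giving $\mathrm{rank}(B) \leq \denv - E$, one less than the theorem asserts. The fix is simply to commit to $w_\perp = 0$.

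Your risk comparison is a genuine (and clean) alternative to the paper's. The paper invokes a general information-monotonicity lemma: the constructed predictor is Bayes for a strict superset of the optimal invariant predictor's features, so by a data-processing-type argument it has strictly lower logistic and 0-1 risk. Your route---reducing to the one-parameter family of margins $\mathcal N(m, 2m)$ and differentiating via Stein's lemma to get $\frac{d}{dm}\E[\log(1+e^{-m-\sqrt{2m}\epsilon})] = -\E[\sigma(-m-\sqrt{2m}\epsilon)^2] < 0$---is more hands-on but self-contained. Do note that you have silently dropped the bias $\beta_0 = \log\frac{\eta}{1-\eta}$; your margin distribution $\mathcal N(a+b_e,\,2(a+b_e))$ is correct only for $\eta = \tfrac12$, and the general case needs the (minor) adjustment the paper handles in its Theorem~5.2.
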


Similar to \citet{arjovsky2019invariant}, the set of environments which do not satisfy \Thmref{thm:linear-iff} has measure zero under any absolutely continuous density over environmental parameters. Further details, and the full proof, can be found in Appendix \ref{sec:appendix-linear-proof}. Since the optimal invariant predictor is Bayes with respect to the invariant features, by the data-processing inequality the only way a predictor can achieve lower risk is by relying on environmental features. Thus, \Thmref{thm:linear-iff} directly implies that when $E\leq\denv$, the global minimum necessarily uses these non-invariant features and therefore will not universally generalize to unseen environments. On the other hand, in the (perhaps unlikely) case that $E > \denv$, any feasible solution will generalize, and the optimal invariant predictor has the minimum (and minimax) risk of all such predictors:

\begin{corollary}
    For both logistic and 0-1 loss, the optimal invariant predictor is the global minimum of the IRM objective if and only if $E > \denv$.
\end{corollary}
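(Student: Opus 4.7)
The plan is to derive the corollary as an almost immediate consequence of Theorem~\ref{thm:linear-iff}, with one preliminary observation and some careful handling of the edge case $\hat\beta = 0$. First, I would verify that the optimal invariant predictor $(\Phi^*, \hat\beta^*)$ is itself feasible under the IRM constraint. Because $\Phi^*$ extracts only $\zcaus$, whose conditional distribution $\zcaus \mid y$ is the same in every environment, the inner program $\argmin_\beta \gR^e(\Phi^*, \beta)$ yields the \emph{same} minimizer across all $e \in \gE$, namely the Bayes-optimal $\hat\beta^*$. So $(\Phi^*, \hat\beta^*)$ is a bona fide candidate for the global minimum; what remains is to compare it to all other feasible points.

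For the direction $E > \denv \Rightarrow$ optimal invariant predictor is the global minimum, let $(\Phi, \hat\beta)$ be any feasible linear predictor. If $\hat\beta \neq 0$, part 1 of Theorem~\ref{thm:linear-iff} forces $B = 0$, so the predicted score is a linear function of $\zcaus$ alone. Since $\hat\beta$ and $A$ can jointly compose any linear functional of $\zcaus$, the minimum-risk score over such $(\Phi, \hat\beta)$ is the Bayes-optimal linear classifier in $\zcaus$; because $\zcaus \mid y$ is Gaussian with opposing class-conditional means, this Bayes-optimal score coincides exactly with $\hat\beta^{*T}\Phi^*(x)$. This holds for both logistic and 0-1 loss, so any feasible $(\Phi,\hat\beta)$ with $\hat\beta \neq 0$ has risk at least that of $(\Phi^*, \hat\beta^*)$. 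The residual case $\hat\beta = 0$ contributes the constant-prediction risk ($\log 2$ for logistic, $1/2$ for 0-1), which is strictly worse than the optimal invariant risk whenever $\mucaus \neq 0$ (an implicit nondegeneracy of the model).

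For the converse, I would argue the contrapositive: if $E \leq \denv$, then under the linear-independence assumption on $\muenv$ (a generic condition; the excluded set has measure zero, as already noted after Theorem~\ref{thm:linear-iff}), part 2 of Theorem~\ref{thm:linear-iff} directly exhibits a feasible linear predictor with strictly smaller logistic and 0-1 risk than $(\Phi^*, \hat\beta^*)$. Hence $(\Phi^*, \hat\beta^*)$ cannot be the global minimum, completing the equivalence.

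The only nontrivial step is the Bayes-optimality reduction in the forward direction: one must observe that, because $\Phi$ restricted to $\zcaus$ composed with $\hat\beta$ realizes an arbitrary linear score in $\zcaus$, minimizing risk over such predictors is the same as minimizing over all linear classifiers in $\zcaus$, and the Gaussian structure makes $\hat\beta^{*T}\zcaus$ the minimizer of both losses. Everything else is bookkeeping, so the main obstacle is ensuring this reduction is stated cleanly enough that it applies simultaneously to logistic and 0-1 loss without separate calculations.
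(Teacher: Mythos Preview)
Your proposal is correct and follows essentially the same route as the paper. The paper's own argument is the short paragraph immediately preceding the corollary: it invokes the data-processing inequality to note that the optimal invariant predictor is Bayes with respect to $\zcaus$, so any feasible predictor beating it must use environmental features, and then appeals to the two parts of Theorem~\ref{thm:linear-iff}. Your Bayes-optimality reduction is exactly this data-processing observation spelled out, and your explicit treatment of the $\hat\beta = 0$ edge case is a detail the paper leaves implicit.
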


Let us compare our theoretical findings to those of \citet{arjovsky2019invariant}. Suppose the observations $x$ lie in $\R^d$. Roughly, their theorem says that for a learned $\Phi$ of rank $r$ with invariant optimal coefficient $\hat\beta$, if the training set contains $d - r + d/r$ ``non-degenerate'' environments, then $\hat\beta$ will be optimal for all environments. There are several important issues with this result: first, they present no result tying the rank of $\Phi$ to their actual objective; their theory thus motivates the objective, but does not provide any performance guarantees for its solution. 
Next, observe when $x$ is high-dimensional (i.e. $d \gg d_e + d_c$)---in which case $\Phi$ will be comparatively low-rank (i.e. $r \leq d_e + d_c$)---their result requires $\Omega(d)$ environments, which is extreme. For example, think of images on a low-dimensional manifold embedded in very high-dimensional space. Even when $d = \dcaus + \denv$, the ``ideal'' $\Phi$ which recovers precisely $\zcaus$ would have rank $\dcaus$, and therefore their condition for invariance would require $E > \denv + \denv / \dcaus$, a stronger requirement than ours; this inequality also seems unlikely to hold in most real-world settings.
Finally, they give no lower bound on the number of required environments---prior to this work, there were no existing results for the performance of the IRM objective when their conditions are not met.
We also run a simple synthetic experiment to verify our theoretical results, drawing samples according to our model and learning a predictor with the IRM objective. Details and results of this experiment can be found in Appendix \ref{sec:appendix-thm-expts}. We now sketch a constructive proof of part 2 of the theorem for when $E = \denv$:

\begin{proof}[Proof Sketch]
Since $f$ has an inverse over its range, we can define $\Phi$ as a linear function directly over the latents $[\zcaus, \zenv]$. Specifically, define $\Phi(x) = [\zcaus, p^T\zenv]$. Here, $p$ is a unit-norm vector such that $\forall e\in\gE,\ p^T\muenv = \sigenv^2\tildemu$; $\tildemu$ is a fixed scalar that depends on the geometry of $\muenv,\sigenv^2$---such a vector exists so long as the means are linearly independent. Observe that this $\Phi$ also has the desired rank. Since this is a linear function of a multivariate Gaussian, the label-conditional distribution of each environment's non-invariant latents has a simple closed form: $p^T\zenv \mid y \sim \gN(y\cdot p^T\muenv, \normsq{p}\sigenv^2) \overset{d}{=} \gN(y\cdot \sigenv^2\tildemu, \sigenv^2)$.

For separating two Gaussians, the optimal linear classifier is $\Sigma^{-1}(\mu_1 - \mu_0)$---here, the optimal classifier on $p^T\zenv$ is precisely $2\tildemu$, which does not depend on the environment (and neither do the optimal coefficients for $\zcaus$). Though the distribution varies across environments, the optimal classifier is the same! Thus, $\Phi$ directly depends on the environmental features, yet the optimal regression vector $\hat\beta$ for each environment is constant. To see that it has lower risk than the optimal invariant predictor, note that this classifier is Bayes with respect to its features and that the optimal invariant predictor uses a strict subset of these features, and therefore it has less information for its predictions.
\end{proof}

\paragraph{A purely environmental predictor.} The precise value of $\tildemu$ in the proof sketch above represents how strongly this non-invariant feature is correlated with the label. In theory, a predictor that achieves a lower objective value could do so by a very small margin---incorporating an arbitrarily small amount of information from a non-invariant feature would suffice. This result would be less surprising, since achieving low empirical risk might still ensure that we are ``close'' to the optimal invariant predictor. Our next result shows that this is not the case: there exists a feasible solution which uses \emph{only the environmental features} yet performs better than the optimal invariant predictor on all $e\in\gE$ for which $\tildemu$ is large enough.

\begin{restatable}{theorem}{lessrisk}
    \label{thm:corr-less-01-risk}
    Suppose we observe $E \leq \denv$ environments, such that all environmental means are linearly independent. Then there exists a feasible $\Phi,\hat\beta$ which uses \emph{only environmental features} and achieves lower 0-1 risk than the optimal invariant predictor on every environment $e$ such that $\sigenv \tilde\mu > \sigcaus^{-1}\norm{\mucaus}$ and $2\sigenv \tilde\mu\sigcaus^{-1}\norm{\mucaus} \geq |\beta_0|$.
\end{restatable}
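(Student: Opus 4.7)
The plan is to re-use the $p$-construction from the proof sketch of Theorem~\ref{thm:linear-iff}, drop the invariant component entirely, and then compare 0-1 risks head-to-head. I would define $\Phi(x) = p^T\zenv$ (augmented by a constant for the bias), where $p$ is the unit-norm vector satisfying $p^T\muenv = \sigenv^2\tildemu$ for every $e\in\gE$; such a $p$ exists whenever $E\le\denv$ and the $\muenv$ are linearly independent. Then $p^T\zenv \mid y \sim \gN(y\sigenv^2\tildemu,\sigenv^2)$ in each environment, so the Bayes-optimal linear-plus-bias classifier has the environment-independent form $(\hat\beta,\beta_0) = (2\tildemu,\log\tfrac{\eta}{1-\eta})$. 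This certifies feasibility of $(\Phi,\hat\beta)$ under the IRM constraint for both logistic and 0-1 loss, and the predictor uses no invariant information whatsoever.

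Next I would express the 0-1 risk in a clean scalar form. For a 1D two-Gaussian problem $v\mid y \sim \gN(ym,s^2)$ with Bayes threshold $v^* = -\beta_0 s^2/(2m)$, routine Gaussian calculation yields
\[
R(\alpha) \;=\; \eta\,\bar\Phi\!\left(\alpha + \tfrac{\beta_0}{2\alpha}\right) + (1-\eta)\,\bar\Phi\!\left(\alpha - \tfrac{\beta_0}{2\alpha}\right),
\]
where $\alpha = m/s$ and $\bar\Phi$ denotes the standard normal tail. For the environmental predictor, $\alpha_E = \sigenv\tildemu$. The optimal invariant predictor, after projecting along the Bayes direction $\mucaus$ (with sufficient statistic $\mucaus^T\zcaus \mid y \sim \gN(y\norm{\mucaus}^2,\sigcaus^2\norm{\mucaus}^2)$), reduces to the same scalar family with $\alpha_I = \sigcaus^{-1}\norm{\mucaus}$. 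The claim therefore boils down to establishing $R(\alpha_E) < R(\alpha_I)$.

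The crux is the pair of algebraic identities
\[
\left(\alpha_E \pm \tfrac{\beta_0}{2\alpha_E}\right) - \left(\alpha_I \pm \tfrac{\beta_0}{2\alpha_I}\right) \;=\; (\alpha_E - \alpha_I)\!\left(1 \mp \tfrac{\beta_0}{2\alpha_E\alpha_I}\right).
\]
Under the stated hypotheses $\alpha_E > \alpha_I$ and $|\beta_0| \le 2\alpha_E\alpha_I$, both right-hand sides are nonnegative and at least one is strictly positive. Since $\bar\Phi$ is strictly decreasing and $\eta,1-\eta\in(0,1)$, this delivers $R(\alpha_E) < R(\alpha_I)$ strictly. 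The hard part is \emph{this} step: a naive monotonicity argument via $R'(\alpha)<0$ would require the strictly stronger condition $|\beta_0|<2\alpha^2$ throughout $[\alpha_I,\alpha_E]$, which the theorem's hypothesis does not supply; the identity above sidesteps the integral entirely by making a direct endpoint-to-endpoint comparison. The extension from $\beta_0\ge 0$ to arbitrary $\beta_0$ then follows from the symmetry $(\eta,\beta_0)\leftrightarrow(1-\eta,-\beta_0)$ of $R$.
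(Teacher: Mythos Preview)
Your proposal is correct and follows essentially the same route as the paper: both construct the purely environmental predictor $\Phi(x)=p^T\zenv$ with $p^T\muenv=\sigenv^2\tildemu$, reduce each predictor's 0-1 risk to the scalar form $\eta\,\bar\Phi(\alpha+\beta_0/2\alpha)+(1-\eta)\,\bar\Phi(\alpha-\beta_0/2\alpha)$ with $\alpha_E=\sigenv\tildemu$ and $\alpha_I=\sigcaus^{-1}\norm{\mucaus}$, and then compare the two arguments of $\bar\Phi$ directly at the endpoints rather than via monotonicity of $R$. The only cosmetic difference is that you package the comparison into the single factored identity $(\alpha_E\pm\beta_0/2\alpha_E)-(\alpha_I\pm\beta_0/2\alpha_I)=(\alpha_E-\alpha_I)(1\mp\beta_0/2\alpha_E\alpha_I)$, handling both signs of $\beta_0$ at once, whereas the paper assumes $\beta_0\ge 0$ and derives the two inequalities separately (the ``$-$'' one being immediate, the ``$+$'' one reducing to $2\alpha_E\alpha_I\ge\beta_0$) before invoking symmetry for $\beta_0<0$.
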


The latter of these two conditions is effectively trivial, requiring only a small separation of the means and balance in class labels. 
From the construction of $\tildemu$ in the proof of \Lemmaref{lemma:unique-maximizing-projection}, we can see that the former condition is more likely to be met when $E \ll \denv$ and in environments where some non-invariant features are reasonably correlated with the label---both of which can be expected to hold in the high-dimensional setting. \Figref{fig:tildemu-simulations} in the appendix plots the results for a few toy examples for various dimensionalities and variances to see how often this condition holds in practice. For all settings, the number of environments observed before the condition ceases to hold is quite high---on the order of $\denv - \dcaus$.

\section{The Failure of IRM in the Non-Linear Regime}
\label{sec:nonlinear}
We've demonstrated that OOD generalization is difficult in the linear case, but it is achievable given enough training environments. 
Our results---and those of \citet{arjovsky2019invariant}---intuitively proceed by observing that each environment reduces a ``degree of freedom'' of the solution, such that only the invariant features remain feasible if enough environments are seen. In the non-linear case, it's not clear how to capture this idea of restricting the ``degrees of freedom''---and in fact our results imply that this intuition is simply wrong. Instead, we show that the solution generalizes only to test environments that are sufficiently similar to the training environments. Thus, these objectives present no real improvement over ERM or DRO.

Non-linear transformations of the latent variables make it hard to characterize the optimal linear classifier, which makes reasoning about the constrained solution to \Eqref{eq:irm-objective} difficult. Instead we turn our attention to \Eqref{eq:irmv1}, the penalized IRM objective.
In this section we demonstrate a foundational flaw of IRM in the non-linear regime: unless we observe enough environments to ``cover'' the space of non-invariant features, a solution that appears to be invariant can still wildly underperform on a new test distribution. We begin with a definition about the optimality of a coefficient vector $\hat\beta$:

\begin{definition}
    For $0 < \gamma < 1$, a coefficient vector $\hat\beta$ is \emph{$\gamma$-close to optimal} for a label-conditional feature distribution $z\sim\gN(y \cdot \mu, \Sigma)$ if
    \begin{align*}
        \hat\beta^T\mu &\geq (1-\gamma) 2\mu^T\Sigma^{-1}\mu.
    \end{align*}
\end{definition}

Since the optimal coefficient vector is precisely $2\Sigma^{-1}\mu$, being $\gamma$-close implies that $\hat\beta$ is reasonably aligned with that optimum. Observe that the definition does not account for magnitude---the set of vectors which is $\gamma$-close to optimal is therefore a halfspace which is normal to the optimal vector. One of our results in the non-linear case uses the following assumption, which says that the observed environmental means are sufficiently similar to one another.

\begin{assumption}
    \label{assumption}
    There exists a $0 \leq \gamma < 1$ such that the ERM-optimal classifier for the non-invariant features,
    \begin{align}
        \label{eq:erm-beta-def}
        \betaerm &:= \argmin_{\hat\beta_e} \frac{1}{|\gE|}\sum_{e\in\gE} \E_{\zcaus,\zenv,y\sim p^e} \left[ \ell(\sigma(\betacaus^T\zcaus + \hat\beta_e^T\zenv + \beta_0), y) \right],
    \end{align}
    is $\gamma$-close to optimal for every environmental feature distribution in $\gE$.
\end{assumption}

This assumption says the environmental distributions are similar enough such that the optimal ``average classifier'' is reasonably predictive for each environment individually. This is a natural expectation: we are employing IRM precisely \emph{because} we expect the ERM classifier to do well on the training set but fail to generalize. If the environmental parameters are sufficiently orthogonal, we might instead expect ERM to ignore the features which are not at least moderately predictive across all environments. Finally, we note that if this assumption only holds for a subset of features, our result still applies by marginalizing out the dimensions for which it does not hold.

We are now ready to give our main result in the non-linear regime. We present a simplified version, assuming that that $\sigenv^2 = 1\ \forall e$. This is purely for clarity of presentation; the full theorem is presented in Appendix \ref{sec:appendix-section-nonlinear}. We make use of two constants in the following proof---the average squared norm of the environmental means,  $\overline{\normsq{\mu}} := \frac{1}{E}\sum_{e\in\gE} \normsq{\mu_e}$; and the standard deviation of the response variable of the ERM-optimal classifier, $\sigerm := \sqrt{\normsq{\betacaus}\sigcaus^2 + \normsq{\betaerm}\sigenv^2}$.

\begin{theorem}[Non-linear case, simplified]
    \label{thm:nonlinear-simplified}
    Suppose we observe $E$ environments $\gE = \{e_1,\ldots,e_E\}$, where $\sigenv^2 = 1\;\forall e$. Then, for any $\epsilon > 1$, there exists a featurizer $\Phi_{\epsilon}$ which, combined with the ERM-optimal classifier $\hat\beta = [\betacaus, \betaerm, \beta_0]^T$, satisfies the following properties, where we define $p_{\epsilon} := \exp\{-\denv \min(\epsilon-1, (\epsilon-1)^2)/8\}$:
    \begin{enumerate}[leftmargin=*]
        \item The regularization term of $\Phi_{\epsilon}, \hat\beta$ as in \Eqref{eq:irmv1} is bounded as
         \begin{align*}
            \frac{1}{E} \sum_{e\in\gE} \normsq{\nabla_{\hat\beta}\gR^e(\Phi_\epsilon, \hat\beta)} &\in \gO\left( p_{\epsilon}^2 \; \big(c_{\epsilon} \denv + \overline{\normsq{\mu}}\big) \right),
        \end{align*}
        for some constant $c_{\epsilon}$ that depends only on $\epsilon$.
        \item $\Phi_\epsilon, \hat\beta$ exactly matches the optimal invariant predictor on at least a $1-p_{\epsilon}$ fraction of the training set. On the remaining inputs, it matches the ERM-optimal solution.
    \end{enumerate}
    Further, for any test distribution, suppose its environmental mean $\mu_{E+1}$ is sufficiently far from the training means:
    \begin{align}
        \label{eq:env-mean-separation}
        \forall e\in\gE, \min_{y\in\{\pm 1\}} \norm{\mu_{E+1}-y\cdot\mu_e} \geq (\sqrt{\epsilon}+\delta) \sqrt{\denv}
    \end{align}
    for some $\delta > 0$, and define $q := \frac{2E}{\sqrt{\pi}{\delta}} \exp\{-\delta^2\}$. Then the following holds:
    \begin{enumerate}[leftmargin=*]
        \setcounter{enumi}{2}
        \item  $\Phi_\epsilon, \hat\beta$ is equivalent to the ERM-optimal predictor on at least a $1-q$ fraction of the test distribution.
        \item Under Assumption \ref{assumption}, suppose it holds that $\mu_{E+1} = -\sum_{e\in\gE} \alpha_e \muenv$ for some set of coefficients $\{\alpha_e\}_{e\in\gE}$. Then so long as
        \begin{align}
            \label{eq:normsq-linear-combination}
            \sum_{e\in\gE} \alpha_e \normsq{\muenv} \geq \frac{\normsq{\mucaus}/\sigcaus^2 + |\beta_0|/2 + \sigerm}{1-\gamma},
        \end{align}
        the 0-1 risk of $\Phi_\epsilon, \hat\beta$ on the new environment is greater than $.975 - q$.
    \end{enumerate}
\end{theorem}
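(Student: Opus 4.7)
Let $[\zcaus, \zenv] = f^{-1}(x)$, which is well-defined since $f$ is injective. I would construct the hard-gated featurizer
\begin{equation*}
\Phi_\epsilon(x) \,:=\, \begin{bmatrix} \zcaus \\[2pt] \mathbb{1}\bigl\{\min_{e\in\gE,\, y\in\{\pm 1\}} \normsq{\zenv - y\muenv} > \epsilon\denv\bigr\}\cdot \zenv \end{bmatrix},
\end{equation*}
so that, paired with $\hat\beta = [\betacaus, \betaerm, \beta_0]$, the predicted log-odds equal $\betacaus^T\zcaus + \beta_0$ whenever the gate is closed and equal the ERM log-odds $\betacaus^T\zcaus + \betaerm^T\zenv + \beta_0$ otherwise. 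Under any $p^e$, $\normsq{\zenv - y\muenv} \sim \chi^2_{\denv}$ (with $y$ the true label), so the Laurent--Massart chi-squared tail bound gives $P(\text{gate open}) \leq p_\epsilon$ for $\epsilon > 1$. Part 2 is then immediate, because $\betacaus^T\zcaus + \beta_0$ is exactly the Bayes log-odds of $y$ given $\zcaus$ under the model and therefore agrees with the optimal invariant predictor.

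\paragraph{IRM penalty (Part 1).}
On the closed-gate region, the predictor is Bayes-optimal given $\zcaus$ alone, so the gradient of the logistic loss in $[\betacaus,\beta_0]$ has zero conditional expectation there, while the gradient in $\betaerm$ vanishes pointwise (the featurizer emits $0$ in the $\zenv$ block). Hence the full per-environment gradient is supported on the open-gate (OOD) region. I would bound its norm coordinate-wise by inserting $\mathbb{1}_{\text{OOD}}$ inside the expectation: the $1$-boundedness of the logistic residual $\sigma(\cdot) - \mathbb{1}[y=1]$ supplies one factor of $p_\epsilon$, and controlling the truncated second moments $\E[\normsq{\zenv}\mathbb{1}_{\text{OOD}}]$ and $\E[\normsq{\zcaus}\mathbb{1}_{\text{OOD}}]$ via chi-squared tail integration supplies a second. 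Summing the contributions from $\betacaus, \betaerm, \beta_0$ and averaging over $\gE$ yields the stated $\mathcal{O}\bigl(p_\epsilon^2 (c_\epsilon\denv + \overline{\normsq{\mu}})\bigr)$ bound.

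\paragraph{Behavior on a new environment (Parts 3 and 4).}
For Part 3, write $\zenv = y_{\mathrm{test}}\mu_{E+1} + w$ with $w \sim \gN(0, I)$. For the gate to be closed (i.e., the predictor to differ from ERM), some $(e, y)$ must satisfy $\normsq{\zenv - y\muenv} \leq \epsilon\denv$; projecting $w$ onto the unit vector in the direction $y_{\mathrm{test}}\mu_{E+1} - y\muenv$ reduces this to a standard univariate Gaussian exceeding a threshold proportional to $\delta$ under the separation hypothesis~\eqref{eq:env-mean-separation}. A one-dimensional tail bound plus union bound over the $2E$ pairs $(e, y)$ yields the $q$ bound; on its complement the predictor matches ERM exactly. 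For Part 4, condition on $y = +1$: the ERM log-odds $\betacaus^T\zcaus + \betaerm^T\zenv + \beta_0$ are a linear function of jointly Gaussian latents, hence themselves univariate Gaussian with variance $\sigerm^2$ and mean $\tfrac{2}{\sigcaus^2}\normsq{\mucaus} + \betaerm^T\mu_{E+1} + \beta_0$. Substituting $\mu_{E+1} = -\sum_e\alpha_e\muenv$ and invoking Assumption~\ref{assumption} (which gives $\betaerm^T\muenv \geq 2(1-\gamma)\normsq{\muenv}$ for each $e$), the coefficient hypothesis~\eqref{eq:normsq-linear-combination} forces this mean to be at most $-2\sigerm$. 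The standard Gaussian bound $P(Z \geq 2) \approx 0.0228$ then implies ERM errs on a $\geq 0.975$ fraction of the $y=+1$ conditional; the $y=-1$ case is analogous by symmetry. Intersecting with the $1-q$ fraction from Part 3 on which the predictor equals ERM yields 0-1 risk $> 0.975 - q$.

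\paragraph{Main obstacle.}
The subtle point is the \emph{quadratic} decay $p_\epsilon^2$ in Part 1: a single Cauchy--Schwarz on $\E[\mathbb{1}_{\text{OOD}}\|\cdot\|]$ only contributes $\sqrt{p_\epsilon}$ inside the gradient norm and so only one power of $p_\epsilon$ after squaring. Obtaining the second power requires coupling the indicator's measure with a tail-integrated bound on the chi-squared moments, so that both the probability and the conditional magnitude of the outlier contribution decay together. Once the gated featurizer is in place, Parts 3 and 4 reduce to Gaussian concentration estimates.
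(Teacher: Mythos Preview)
Your proposal is correct and follows essentially the same route as the paper: the gated featurizer built from $\ell_2$-balls around $\pm\mu_e$, the chi-squared tail for Part~2, the factorization $\P(\text{OOD})\cdot\E[|z_e|\mid\text{OOD}]$ (rather than Cauchy--Schwarz) to extract the $p_\epsilon^2$ in Part~1, the halfspace/projection plus union-bound argument for Part~3, and the Gaussian log-odds computation using Assumption~\ref{assumption} for Part~4 all match the paper's proof. One small point to make explicit when you write it up: your claim that the $[\betacaus,\beta_0]$ gradient vanishes on the closed-gate region needs the symmetry $P(\text{gate closed}\mid y=1)=P(\text{gate closed}\mid y=-1)$, which holds because your ball-union is invariant under $z_e\mapsto -z_e$; without this, conditioning on the gate perturbs $p(y\mid\zcaus)$ and Bayes-optimality for $\zcaus$ alone would not suffice.
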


We give a brief intuition for each of the claims made in this theorem, followed by a sketch of the proof---the full proof can be found in Appendix \ref{sec:appendix-section-nonlinear}.

\begin{enumerate}[leftmargin=*]
    \item The first claim says that the predictor we construct will have a gradient squared norm scaling as $p_{\epsilon}^2$ which is exponentially small in $\denv$. Thus, in high dimensions, it will appear as a perfectly reasonable solution to the objective (\plaineqref{eq:irmv1}).
    \item The second claim says that this predictor is identical to the invariant optimal predictor on all but an exponentially small fraction of the training data; on the remaining fraction, it matches the ERM-optimal solution, which has lower risk. The correspondence between constrained and penalized optimization implies that for large enough $\denv$, the “fake” predictor will often be a preferred solution. In the finite-sample setting, we would need exponentially many samples to even distinguish between the two!
    \item The third claim is the crux of the theorem; it says that this predictor we’ve constructed will completely fail to use invariant prediction on most environments. Recall, the intent of IRM is to perform well precisely when ERM breaks down: when the test distribution differs greatly from the training distribution. Assuming a Gaussian prior on the training environment means, they will have separation in $\gO(\sqrt{\denv})$ with high probability. Observe that $q$ will be vanishingly small so long as $\delta \geq \textrm{polylog}(E)$. Part 3 says that IRM fails to use invariant prediction on any environment that is slightly outside the high probability region of the prior; even a separation of $\Omega(\sqrt{\denv \log E})$ suffices. If we expect the new environments to be similar, ERM already guarantees reasonable performance at test-time; thus, IRM fundamentally \emph{does not improve} over ERM in this regime.
    \item The final statement demonstrates a particularly egregious failure case of this predictor: just like ERM, if the correlation between the non-invariant features and the label reverses at test-time, our predictor will have significantly worse than chance performance.
\end{enumerate}


\begin{proof}[Proof Sketch]
We give a construction which is almost identical to the optimal invariant predictor on the training data yet behaves like the ERM solution at test time. We partition the environmental feature space into two sets, $\gB, \gB^c$. $\gB$ is the union of balls centered at each $\muenv$, each with a large enough radius that it contains most of the samples from that environment; thus $\gB$ represents the vast majority of the training distribution. On this set, define $\Phi(x) = [\zcaus]$, so our construction is equal to the optimal invariant predictor. Now consider $\gB^c = \R^{\denv} \setminus \gB$. We use standard concentration results to upper bound the measure of $\gB^c$ under the training distribution by $p$. Next, we show how choosing $\Phi(x) = f^{-1}(x) = [\zcaus, \zenv]^T$ on this set results in the sub-optimality bound, which is of order $p^2$. It is also clear that our constructed predictor is equivalent to the ERM-optimal solution on $\gB^c$. Thus, our predictor will often have lower empirical risk on $\gB^c$, countering the regularization penalty.

The second part of the proof shows that while $\gB$ has large measure under the training environments, it will have very small measure under any moderately different test environment. We can see this by considering the separation of means (\Eqref{eq:env-mean-separation}); the measure of each ball in $\gB$ can be bounded by the measure of the halfspace containing it; if each ball is far enough away from $\mu_{E+1}$, then the total measure of these halfspaces must be small.
At test time, our predictor will therefore match the ERM solution on all but $q$ of the observations (part 3). Finally, we lower bound the 0-1 risk of the ERM predictor under such a distribution shift by analyzing the distribution of the response variable. The proof is completed by observing that our predictor's risk can differ from this by at most $q$.
\end{proof}

\Thmref{thm:nonlinear-simplified} shows that it's possible for the IRM solution to perform poorly on environments which differ even moderately from the training data. We can of course guarantee generalization if the training distributions ``cover'' (or approximately cover) the full space of environments in order to tie down the performance on future distributions. But in such a scenario, there would no longer be a need for ICP; we could expect ERM or DRO to perform just as well. Once more, we find that our result trivially extends to the alternative objectives; we again refer to Appendix \ref{sec:appendix-extensions}.

\section{Conclusion}
Out-of-distribution generalization is an important direction for future research, and Invariant Causal Prediction remains a promising approach. However, formal results for latent variable models are lacking, particularly in the non-linear setting with fully unobserved covariates. This paper demonstrates that Invariant Risk Minimization and subsequent related works have significant under-explored risks and issues with their formulation. This raises the question: what is the correct formulation for invariant prediction when the observations are complex, non-linear functions of unobserved latent factors? We hope that this work will inspire further theoretical study on the effectiveness of IRM and similar objectives for invariant prediction.

\section*{Acknowledgements}
We thank Adarsh Prasad, Jeremy Cohen, and Zack Lipton for helpful feedback. Special thanks to Adarsh Prasad for noticing our initial formatting error. E.R. and P.R. acknowledge the support of NSF via IIS-1909816, IIS-1955532 and ONR via N000141812861.

\bibliography{main}
\bibliographystyle{main}

\newpage
\appendix
\section{Additional Notation for the Appendix}
To avoid overloading, we use $\phi, F$ for the standard Gaussian PDF and CDF respectively. We write $\gS^c$ to denote the set complement of a set $\gS$. We write $|\cdot|$ to denote entrywise absolute value.

\section{Proof of Proposition \ref{prop:causal-local-min}}

Recall the IRM objective:
\begin{alignat*}{2}
    &\min_{\Phi,\beta} \quad && \E_{(x,y)\sim p(x,y)}[-\log \sigma(y\cdot\hat\beta^T\Phi(x))] \\
    &\text{subject to} \quad && \frac{\partial}{\partial \hat\beta} \E_{(x,y)\sim p^e}[-\log \sigma(y\cdot\hat\beta^T\Phi(x))] = 0.\;\forall e\in\mathcal{E}.
\end{alignat*}

Concretely, we represent $\Phi$ as some parametrized function $\Phi_\theta$, over whose parameters $\theta$ we then optimize. The derivative of the negative log-likelihood for logistic regression with respect to the $\beta$ coefficients is well known:
\begin{align*}
    \frac{\partial}{\partial \hat\beta} \left[-\log \sigma(y\cdot\hat\beta^T\Phi_\theta(x))\right] &= (\sigma(\hat\beta^T\Phi_\theta(x)) - \mathbf{1}\{y = 1\}) \Phi_\theta(x).
\end{align*}

Suppose we recover the true invariant features $\Phi_\theta(x) = \colvec{\zcaus}{\mathbf 0}$ and coefficients $\hat\beta = \colvec{\beta}{\mathbf 0}$ (in other words, we allow for the introduction of new features). Then the IRM constraint becomes:
\begin{align*}
    0 &= \frac{\partial}{\partial \hat\beta}\E_{(x,y)\sim p^e}[-\log \sigma(y\cdot\hat\beta^T\Phi_\theta(x))] \\
    &= \int_{\gZ} p^e(\zcaus)\sum_{y\in\{\pm 1\}}p^e(y\mid \zcaus) \frac{\partial}{\partial \hat\beta} \left[-\log \sigma(y\cdot\beta^T\zcaus)\right]\ d\zcaus \\
    &= \int_{\gZ} p^e(\zcaus) \Phi_\theta(x) \biggl[ \sigma(\hat\beta^T\zcaus) (\sigma(\hat\beta^T\zcaus)-1) +  (1-\sigma(\hat\beta^T\zcaus)) \sigma(\hat\beta^T\zcaus)\biggr]\ d\zcaus.
\end{align*}
Since $\hat\beta$ is constant across environments, this constraint is clearly satisfied for every environment, and is therefore also the minimizing $\hat\beta$ for the training data as a whole. 

Considering now the derivative with respect to the featurizer $\Phi_\theta$:
\begin{align*}
    \frac{\partial}{\partial \theta} \left[-\log \sigma(y\cdot\hat\beta^T\Phi_\theta(x))\right] &= (\sigma(\hat\beta^T\Phi_\theta(x)) - \mathbf{1}\{y = 1\}) \frac{\partial}{\partial \theta} \hat\beta^T\Phi_\theta(x).
\end{align*}
Then the derivative of the loss with respect to these parameters is 
\begin{align*}
    \int_{\gZ} p^e(\zcaus) \left(\frac{\partial}{\partial \theta} \hat\beta^T\Phi_\theta(x)\right)\biggl[ \sigma(\beta^T\zcaus) (\sigma(\beta^T\zcaus)-1)  +  (1-\sigma(\beta^T\zcaus)) \sigma(\beta^T\zcaus)\biggr]\ d\zcaus = 0.
\end{align*}
So, the optimal invariant predictor is a stationary point with respect to the feature map parameters as well. 

\section{Results from Section \ref{sec:irm-failure}}
\label{sec:appendix-section-linear}

\subsection{Proof of \Thmref{thm:linear-iff}}
\label{sec:appendix-linear-proof}
We begin by formally stating the non-degeneracy condition. Consider any environmental mean $\muenv$, and suppose it can be written as a linear combination of the others means with coefficients $\alpha^e$:
\begin{align*}
    \muenv &= \sum_i \alpha^e_i \mu_i.
\end{align*}
Then the environments are considered non-degenerate if the following inequality holds for any such set of coefficients:
\begin{align}
    \label{eq:non-degenerate-1}
    \sum_i \alpha^e_i \neq 1,
\end{align}
and furthermore that the following ratio is different for at least two different environments $a,b$:
\begin{align}
    \label{eq:non-degenerate-2}
    \exists \alpha^a, \alpha^b.\ \frac{\sigma_a^2 - \sum_i \alpha^a_i \sigma_i^2}{1 - \sum_i \alpha^a_i} \neq \frac{\sigma_b^2 - \sum_i \alpha^b_i \sigma_i^2}{1 - \sum_i \alpha^b_i}.
\end{align}
The first inequality says that none of the environmental means are are an affine combination of the others; in other words, they lie in \emph{general linear position}, which is the same requirement as \citet{arjovsky2019invariant}. The other inequality is a similarly lax non-degeneracy requirement regarding the relative scale of the variances. It is clear that the set of environmental parameters that do not satisfy Equations \plaineqref{eq:non-degenerate-1} and \plaineqref{eq:non-degenerate-2} has measure zero under any absolutely continuous density, and similarly, if $E \leq \denv$ then the environmental means will be linearly independent almost surely.

We can now proceed with the proof, beginning with some helper lemmas:

\begin{lemma}
\label{lemma:unique-maximizing-projection}
Suppose we observe $E$ environments $\gE = \{e_1,e_2,\ldots,e_E\}$, each with environmental mean of dimension $\denv \geq E$, such that all environmental means are linearly independent. Then there is a unique unit-norm vector $p$ such that
\begin{align}
    \label{eq:project-means-to-single1}
    p^T\muenv = \sigenv^2\tildemu\ \ \forall e\in\gE,
\end{align}
where $\tildemu$ is the largest scalar which admits such a solution.
\end{lemma}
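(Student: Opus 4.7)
My plan is to recast Lemma~\ref{lemma:unique-maximizing-projection} as a constrained linear-algebra problem and read off the unique maximizer via the minimum-norm solution of an underdetermined system. Stack the environmental means as rows of a matrix $M\in\R^{E\times\denv}$, whose $i$-th row is $\mu_{e_i}^T$, and collect the variances in the vector $v := (\sigma_{e_1}^2,\ldots,\sigma_{e_E}^2)^T \in \R^E$. Then \eqref{eq:project-means-to-single1} reads $Mp = \tildemu\, v$, and the lemma becomes: maximize $\tildemu$ over $(p,\tildemu)\in\R^{\denv}\times\R$ subject to $Mp = \tildemu\, v$ and $\norm{p} = 1$.

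The hypothesis that the environmental means are linearly independent translates into $M$ having full row rank $E \leq \denv$, so the Gram matrix $MM^T$ is positive definite, and in particular $v^T(MM^T)^{-1}v > 0$ (since $v$ is nonzero with all-positive entries). For each fixed $\tildemu\in\R$ the system $Mp = \tildemu\, v$ is therefore consistent, and its unique minimum-norm element is obtained by orthogonal projection of $0$ onto the affine solution set, giving
\begin{align*}
p^\star(\tildemu) \;:=\; \tildemu\, M^T(MM^T)^{-1}v, \qquad \normsq{p^\star(\tildemu)} \;=\; \tildemu^2\cdot v^T(MM^T)^{-1}v.
\end{align*}
Imposing $\norm{p} = 1$ requires $\norm{p^\star(\tildemu)} \leq 1$, i.e.\ $|\tildemu| \leq 1/\sqrt{v^T(MM^T)^{-1}v}$, and the largest admissible scalar is the positive root $\tildemu_{\max} := 1/\sqrt{v^T(MM^T)^{-1}v}$, witnessed by $p = p^\star(\tildemu_{\max})$, which has unit norm by construction.

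For uniqueness at $\tildemu_{\max}$, any other unit vector $p$ satisfying $Mp = \tildemu_{\max}\, v$ must differ from $p^\star(\tildemu_{\max})$ by some $n \in \ker M$. Using the standard identity $(\ker M)^\perp = \mathrm{range}(M^T)$, together with $p^\star(\tildemu_{\max}) \in \mathrm{range}(M^T)$, the Pythagorean theorem yields $\normsq{p} = \normsq{p^\star(\tildemu_{\max})} + \normsq{n} = 1 + \normsq{n}$, which exceeds $1$ whenever $n \neq 0$; hence $p = p^\star(\tildemu_{\max})$ is the unique maximizer. The entire argument is routine linear algebra; the only conceptual step is recognizing that maximizing $\tildemu$ over the unit sphere is equivalent to picking the minimum-norm preimage of $v$ under $M$ and rescaling to norm $1$. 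I do not anticipate any real obstacle.
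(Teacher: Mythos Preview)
Your proof is correct and follows essentially the same approach as the paper: both recognize that maximizing $\tildemu$ amounts to taking the minimum-norm solution of the underdetermined system and normalizing, then use orthogonality of $\ker M$ and $\mathrm{range}(M^T)$ for uniqueness. The only cosmetic difference is that the paper first passes to an explicit basis for the span of the means and inverts a square matrix there, whereas you work directly with the pseudoinverse $M^T(MM^T)^{-1}$; the resulting formulas for $p$ and $\tildemu$ coincide.
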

\begin{proof}
Let $v_1,v_2,\ldots,v_E$ be a set of basis vectors for $\textrm{span}\{\mu_1,\mu_2,\ldots,\mu_E\}$. Each mean can then be expressed as a combination of these basis vectors: $u_i = \sum_{j=1}^E \alpha_{ij}v_j$. Since the means are linearly independent, we can combine these coefficients into a single invertible matrix
\begin{align*}
    U &= \begin{bmatrix}
    \alpha_{11} & \alpha_{21} & \ldots & \alpha_{E1} \\
    \alpha_{12} & \alpha_{22} & \ldots & \alpha_{E2} \\
    \vdots & \vdots & \ddots & \vdots \\
    \alpha_{1E} & \alpha_{2E} & \ldots & \alpha_{EE}
    \end{bmatrix}.
\end{align*}
We can then combine the constraints (\plaineqref{eq:project-means-to-single1}) as
\begin{align*}
    U^Tp_\alpha &= \boldsymbol \sigma \triangleq \begin{bmatrix}\sigma_1^2 \\ \sigma_2^2 \\ \vdots \\ \sigma_E^2\end{bmatrix},
\end{align*}
where $p_\alpha$ denotes our solution expressed in terms of the basis vectors $\{v_i\}_{i=1}^E$.
This then has the solution
\begin{align*}
    p_\alpha &= U^{-T}\boldsymbol \sigma.
\end{align*}
This defines the entire space of solutions, which consists of $p_\alpha$ plus any element of the remaining $(\denv-E)$-dimensional orthogonal subspace. However, we want $p$ to be unit-norm---observe that the current vector solves \Eqref{eq:project-means-to-single1} with $\tildemu=1$, which means that after normalizing we get $\tildemu = \frac{1}{\norm{p_\alpha}}$. Adding any element of the orthogonal subspace would only increase the norm of $p$, decreasing $\tildemu$. Thus, the unique maximizing solution is 
\begin{align*}
    p_\alpha = \frac{U^{-T}\boldsymbol\sigma}{\norm{U^{-T}\boldsymbol\sigma}}, \quad\text{with}\quad \tildemu = \frac{1}{\norm{U^{-T}\boldsymbol\sigma}}.
\end{align*}
Finally, $p_\alpha$ has to be rotated back into the original space by defining $p = \sum_{i=1}^E p_{\alpha i}v_i$.
\end{proof}
\vspace{8pt}

\begin{lemma}
\label{lemma:low-rank-invariant-phi}
Assume $f$ is linear. Suppose we observe $E \leq \denv$ environments whose means are linearly independent. Then there exists a linear $\Phi$ with $\textrm{\emph{rank}}(\Phi) = \dcaus + \denv + 1 - E$ whose output depends on the environmental features, yet the optimal classifier on top of $\Phi$ is invariant.
\end{lemma}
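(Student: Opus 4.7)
The plan is to generalize the one-dimensional projection argument sketched for the $E=\denv$ case in Section~\ref{sec:irm-failure} to a higher-rank projection of $\zenv$ that still admits an invariant optimal classifier. First I would characterize the set of ``admissible'' directions
\[
\gS := \{p \in \R^{\denv} : \exists\, c \in \R \text{ with } p^T \muenv = \sigenv^2 c \text{ for all } e \in \gE\}.
\]
Stacking the means as columns of $M \in \R^{\denv \times E}$ and writing $\boldsymbol\sigma = (\sigma_1^2,\ldots,\sigma_E^2)^T$, the condition $p \in \gS$ becomes $M^T p \in \mathrm{span}(\boldsymbol\sigma)$. Because the environmental means are linearly independent, $M^T:\R^{\denv}\to\R^E$ is surjective, so the preimage of this one-dimensional subspace is linear of dimension $(\denv-E)+1$. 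This generalizes Lemma~\ref{lemma:unique-maximizing-projection}, which corresponds to picking a single, norm-maximizing element of $\gS$.

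Next I would pick any basis $p_1,\ldots,p_{\denv+1-E}$ of $\gS$, assemble them into a matrix $P \in \R^{\denv \times (\denv+1-E)}$, and note that by construction $P^T \muenv = \sigenv^2\, \tildemu$ for some fixed $\tildemu \in \R^{\denv+1-E}$ that does not depend on $e$. Define the featurizer
\[
\Phi(f(\zcaus,\zenv)) := [\zcaus;\, P^T \zenv],
\]
which is linear in $x$ (using linearity of $f^{-1}$ on the image of $f$), visibly depends on the environmental features (since $P \neq 0$), and has rank $\dcaus + (\denv+1-E)$. Under environment $e$, the label-conditional distribution of $\Phi(x)$ is Gaussian with mean $y\cdot[\mucaus;\ \sigenv^2 \tildemu]$ and block-diagonal covariance $\mathrm{diag}(\sigcaus^2 I,\ \sigenv^2 P^T P)$. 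The Bayes-optimal logistic-regression classifier for such a Gaussian is $2\Sigma^{-1}\mu$ with bias $\log\frac{\eta}{1-\eta}$, which on the environmental block evaluates to
\[
2(\sigenv^2 P^T P)^{-1}(\sigenv^2\, \tildemu) = 2(P^T P)^{-1}\tildemu.
\]
The $\sigenv^2$ factor in the mean cancels exactly the $\sigenv^2$ factor in the covariance, so the optimal classifier---together with the invariant $2\mucaus/\sigcaus^2$ block---does not depend on $e$.

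The main obstacle is the linear-algebraic bookkeeping: one must verify that $\gS$ really has dimension $\denv+1-E$ (so that a basis yields $P$ of full column rank and a well-defined $(P^T P)^{-1}$), and that the defining condition of $\gS$ is precisely what forces the two $\sigenv^2$ factors to cancel. This is the key structural point---the set $\gS$ was tailored so that the scalar multiplier relating $P^T\muenv$ to $\sigenv^2$ is exactly the one that also multiplies the covariance $P^T P$, which is what decouples the optimal classifier from $e$. Everything else (rank count, dependence on $\zenv$, reduction to the sketch in Section~\ref{sec:irm-failure} when $E=\denv$) then follows immediately from the construction.
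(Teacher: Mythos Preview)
Your proposal is correct and takes a genuinely cleaner route than the paper. The paper proceeds incrementally: it first handles the $E=\denv$ case by finding a single unit vector $p$ with $p^T\muenv=\sigenv^2\tildemu$ (Lemma~\ref{lemma:unique-maximizing-projection}) and computing $p(y\mid\zcaus,p^T\zenv)$ explicitly to verify the optimal coefficient is $[\betacaus,2\tildemu]$; then, for $E<\denv$, it pads $\Phi$ one row at a time with directions orthogonal to the span of the means, each of which projects every $\muenv$ to zero and hence has optimal coefficient zero. You instead identify the entire admissible subspace $\gS=(M^T)^{-1}\mathrm{span}(\boldsymbol\sigma)$ in one stroke, take an arbitrary basis $P$, and invoke the matrix formula $2\bar\Sigma^{-1}\bar\mu$ for the optimal classifier so that the $\sigenv^2$ in the mean and covariance cancel simultaneously across all environmental coordinates. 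Your argument is more uniform and avoids the somewhat awkward ``remove one mean at a time'' induction in the paper. The paper's construction, on the other hand, makes the structure of the optimal classifier more transparent---exactly one environmental coordinate carries a nonzero coefficient and the rest are inert padding---which is convenient downstream when one wants to isolate a \emph{purely} environmental predictor (Theorem~\ref{thm:corr-less-01-risk}). Both approaches yield the same rank count $\dcaus+\denv+1-E$, and your dimension computation for $\gS$ (kernel of dimension $\denv-E$ plus one preimage of $\boldsymbol\sigma$) is exactly right.
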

\begin{proof}
We will begin with the case when $E = \denv$ and then show how to modify this construction for when $E < \denv$. Consider defining
\begin{align*}
    \Phi &= \colvec{I & 0}{0 & M} \circ f^{-1}
\end{align*}
with 
\begin{align*}
    M &= \begin{bmatrix}
    \horzbar & p^T & \horzbar \\
    \horzbar & 0 & \horzbar \\
    & \vdots \\
    \horzbar & 0 & \horzbar
    \end{bmatrix}.
\end{align*}
Here, $p\in\R^{d_c}$ is defined as the unit-norm vector solution to
\begin{align*}
    p^T\muenv &= \sigenv^2\tildemu\quad\forall e
\end{align*}
such that $\tildemu$ is maximized---such a vector is guaranteed to exist by \Lemmaref{lemma:unique-maximizing-projection}.  Thus we get $\Phi(x) = \colvec{\zcaus}{p^T\zenv}$, which is of rank $\dcaus + 1$ as desired. Define $\tildezenv = p^T \zenv$, which means that $\tildezenv\mid y \sim\gN(y \cdot \sigenv^2\tildemu, \sigenv^2)$. For each environment we have
\begin{align*}
    p(y\mid \zcaus,\tildezenv) &= \frac{p(\zcaus,\tildezenv, y)}{p(\zcaus,\tildezenv)} \\
    &= \frac{\sigma(y\cdot\betacaus^T\zcaus)p(\tildezenv\mid y\cdot\sigenv^{e2}\tildemu, \sigenv^2)}{[\sigma(y\cdot\betacaus^T\zcaus)p(\tildezenv\mid y\cdot\sigenv^{e2}\tildemu, \sigenv^2) + \sigma(-y\cdot\betacaus^T\zcaus)p(\tildezenv\mid -y\cdot\sigenv^{e2}\tildemu, \sigenv^2)]} \\
    &= \frac{\sigma(y\cdot\betacaus^T\zcaus)\exp(y\cdot\tildezenv\tildemu)} {[\sigma(y\cdot\betacaus^T\zcaus)\exp(y\cdot\tildezenv\tildemu) + \sigma(-y\cdot\betacaus^T\zcaus)\exp(-y\cdot\tildezenv\tildemu)]} \\
    &= \frac{1}{1+\exp(-y\cdot(\betacaus^T\zcaus + 2\tildezenv\tildemu))}.
\end{align*}
The log-odds of $y$ is linear in these features, so the optimal classifier is
\begin{align*}
    \hat\beta &= \colvec{\betacaus}{2\tildemu},
\end{align*}
which is the same for all environments.

Now we show how to modify this construction for when $E < \denv$. If we remove one of the environmental means, $\Phi$ trivially maintains its feasibility. Note that since they are linearly independent, the mean which was removed has a component in a direction orthogonal to the remaining means. Call this component $p'$, and consider redefining $M$ as
\begin{align*}
    M &= \begin{bmatrix}
    \horzbar & p^T & \horzbar \\
    \horzbar & p'^T & \horzbar \\
    \horzbar & 0 & \horzbar \\
    & \vdots \\
    \horzbar & 0 & \horzbar
    \end{bmatrix}.
\end{align*}
The distribution of $\tildezenv$ in each of the remaining dimensions is normal with mean 0, which means a corresponding coefficient of 0 is optimal for all environments. So the classifier $\hat\beta$ remains optimal for all environments, yet we've added another row to $M$ which increases the dimensionality of its span, and therefore the rank of $\Phi$, by 1. Working backwards, we can repeat this process for each additional mean, such that $\textrm{rank}(\Phi) = \dcaus + 1 + (\denv - E)$, as desired. Note that for $E=1$ any $\Phi$ will be vacuously feasible.
\end{proof}
\vspace{8pt}

\begin{lemma}
    \label{lemma:more-envs-purely-causal}
    Suppose we observe $E$ environments $\gE = \{e_1,e_2,\ldots,e_E\}$ whose parameters satisfy the non-degeneracy conditions (\plaineqref{eq:non-degenerate-1}, \plaineqref{eq:non-degenerate-2}). Let $\Phi(x) = A\zcaus + B\zenv$ be any feature vector which is a linear function of the invariant and environmental features, and suppose the optimal $\hat\beta$ on top of $\Phi$ is invariant. If $E > \denv$, then $\hat\beta = 0$ or $B=0$.
\end{lemma}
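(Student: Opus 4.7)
The plan is to reduce the invariance of $\hat\beta$ across environments to a linear system in the environmental means and variances, and then exploit $E > \denv$ to derive an overdetermined system that, under the non-degeneracy hypotheses, kills the environmental contribution.

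First I would observe that, since $\Phi(x)=A\zcaus + B\zenv$ is linear in the Gaussian latents, $\Phi \mid y \sim \gN\bigl(y(A\mucaus+B\muenv),\,\sigcaus^2 AA^T + \sigenv^2 BB^T\bigr)$. By a standard LDA calculation, the Bayes-optimal logistic classifier on top of $\Phi$ (whose bias $\log(\eta/(1-\eta))$ is already invariant) satisfies $(\sigcaus^2 AA^T + \sigenv^2 BB^T)\,\hat\beta = 2(A\mucaus + B\muenv)$ in every environment. Requiring the \emph{same} $\hat\beta$ to work everywhere, I would isolate the environment-dependent and environment-independent parts: setting $c := \sigcaus^2 AA^T\hat\beta - 2A\mucaus$ and $w := BB^T\hat\beta$, the condition collapses to $2B\muenv = c + \sigenv^2\, w$ for every $e\in\gE$.

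Next, because $E > \denv$ forces $\{\muenv\}$ to be linearly dependent, for each $e$ I would pick coefficients with $\muenv = \sum_i \alpha_i^e\,\mu_i$, substitute into the previous display, and equate to get $(1-\sum_i\alpha_i^e)\,c = \bigl(\sum_i \alpha_i^e\sigma_i^2 - \sigenv^2\bigr)\,w$. Non-degeneracy condition \plaineqref{eq:non-degenerate-1} says the coefficient of $c$ is nonzero, so $c = k_e\,w$ for a scalar $k_e$ depending only on $e$; non-degeneracy condition \plaineqref{eq:non-degenerate-2} then produces two environments $a,b$ with $k_a \neq k_b$. Since both must give $c = k_a w = k_b w$, this forces $w=0$ and hence $c=0$. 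From $w = BB^T\hat\beta = 0$ I get $\|B^T\hat\beta\|^2 = 0$, so $B^T\hat\beta = 0$; and plugging $c=w=0$ back yields $B\muenv = 0$ for every training environment.

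Finally, if $\hat\beta \neq 0$ I would conclude $B=0$ whenever $\{\muenv\}$ span $\R^{\denv}$---the generic situation once $E > \denv$. The main obstacle I anticipate is precisely this last step: the two non-degeneracy conditions as stated only control affine relations among the $\muenv$, not their linear span, so strictly upgrading ``$B$ annihilates every $\muenv$'' to the literal equality $B=0$ requires a spanning hypothesis. This is automatic under any absolutely continuous prior on environmental means---consistent with the paper's measure-zero remark---so I would either fold spanning into the non-degeneracy bundle, or weaken the conclusion to ``$B$ vanishes on the span of the environmental means,'' which together with $B^T\hat\beta = 0$ still makes $\hat\beta^T\Phi$ functionally independent of the environmental features and delivers the downstream statement in Theorem \ref{thm:linear-iff} that $\hat\beta$ is optimal in every possible environment.
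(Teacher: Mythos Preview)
Your proposal is correct and follows essentially the same route as the paper: separate the invariance condition into an environment-independent piece (your $c$, the paper's $-\mathbf{v}$) and the piece $w = BB^T\hat\beta$, use linear dependence among the $\muenv$ together with the two non-degeneracy conditions to force both to vanish, and then deduce $B\muenv = 0$ for every training environment. The spanning concern you flag at the end is real and is glossed over in the paper's own proof, which simply asserts that the observed $\muenv$ span $\R^{\denv}$ before concluding $B=0$; your proposed resolutions are the natural ones.
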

\begin{proof}
Write $\Phi = [A | B]$ where $A\in\R^{d\times\dcaus}, B\in\R^{d\times\denv}$ and define 
\begin{align*}
    \bar\mu_e &= \Phi\colvec{\mucaus}{\muenv} &= A\mucaus + B\muenv, \\
    \bar\Sigma_e &= \Phi\colvec{\sigcaus^2I_{\dcaus} & 0}{0 & \sigenv^2I_{\denv}}\Phi^T &= \sigcaus^2AA^T + \sigenv^2BB^T.
\end{align*}
Without loss of generality we assume $\bar\Sigma$ is invertible (if it is not, we can consider just the subspace in which it is---outside of this space, the features have no variance and therefore cannot carry information about the label). By \Lemmaref{lemma:optimal-coefficient-combination-feature}, the optimal coefficient for each environment is $2\bar\Sigma_e^{-1}\bar\mu_e$. In order for this vector to be invariant, it must be the same across environments; we write it as a constant $\hat\beta$. Suppose $\bar\mu_e = 0$ for some environment $e$---then the claim is trivially true with $\hat\beta = 0$. We therefore proceed under the assumption that $\bar\mu_e \neq 0\ \forall e\in\gE$. 

With this fact, we have that $\forall e \in \gE$,
\begin{align}
    &\hat\beta = 2(\sigcaus^2 AA^T + \sigenv^2 BB^T)^{-1}(A\mucaus + B\muenv) \nonumber \\
    \iff &(\sigcaus^2 AA^T + \sigenv^2 BB^T) \hat\beta = 2A\mucaus + 2B\muenv \nonumber \\
    \label{eq:invariant-vector-impossible}
    \iff &\sigenv^2 BB^T \hat\beta - 2B\muenv = 2A\mucaus - \sigcaus^2 AA^T \hat\beta.
\end{align}

Define the vector $\mathbf{v} = 2A\mucaus - \sigcaus^2 AA^T \hat\beta$. We will show that for any $\hat\beta, A$, with probability 1 only $B=0$ can satisfy \Eqref{eq:invariant-vector-impossible} for every environment. If $E > \denv$, then there exists at least one environmental mean which can be written as a linear combination of the others. Without loss of generality, denote the parameters of this environment as $(\bar\mu, \bar\sigma^2)$ and write $\bar\mu = \sum_{i=1}^{\denv} \alpha_i \mu_i$. However, note that by assumption the means lie in general linear position, and therefore we actually have at least $\denv$ sets of coefficients $\alpha$ for which this holds. Rearranging \Eqref{eq:invariant-vector-impossible}, we get
\begin{align*}
    \bar\sigma^2 BB^T \hat\beta - \mathbf{v} &= 2B\bar\mu \\
    &= \sum_{i=1}^{\denv} \alpha_i 2B\mu_i \\
    &= \sum_{i=1}^{\denv} \alpha_i \left[ \sigma_i^2 BB^T \hat\beta - \mathbf{v} \right],
\end{align*}
and rearranging once more yields
\begin{align*}
    \left( \bar\sigma^2 - \sum \alpha_i \sigma_i^2 \right) BB^T \hat\beta &= \left( 1-\sum \alpha_i \right) \mathbf{v}.
\end{align*}

By assumption, $\left( 1-\sum \alpha_i \right)$ is non-zero. We can therefore rewrite this as
\begin{align*}
    \hat\alpha BB^T \hat\beta &= \mathbf{v},
\end{align*}
where $\hat\alpha = \frac{\bar\sigma^2 - \sum \alpha_i \sigma_i^2}{1-\sum \alpha_i}$ is a scalar. As the vectors $BB^T\hat\beta$ and $\mathbf{v}$ are both independent of the environment, this can only hold true if $\hat\alpha$ is fixed for all environments or if both $BB^T\hat\beta, \mathbf{v}$ are 0. The former is false by assumption, so the the latter must hold.

As a result, we see that \Eqref{eq:invariant-vector-impossible} reduces to
\begin{align*}
    B\muenv &= 0 \quad \forall e \in \gE.
\end{align*}
As the span of the observed $\muenv$ is all of $\R^{\denv}$, this is only possible if $B = 0$.
\end{proof}
\vspace{12pt}

\noindent We are now ready to prove the main claim. We restate the theorem here for convenience:

\linear*

\begin{proof}
\begin{enumerate}
    \item Since $\Phi, f$ are linear, we can write $\Phi(x) = A\zcaus + B\zenv$ for some matrices $A, B$. Assume the non-degeneracy conditions (\plaineqref{eq:non-degenerate-1}, \plaineqref{eq:non-degenerate-2}) hold. By \Lemmaref{lemma:more-envs-purely-causal}, one of $B=0$ or $\hat\beta=0$ holds. Thus, $\Phi,\hat\beta$ uses only invariant features. Since the joint distribution $p^e(\zcaus, y)$ is invariant, this predictor has identical risk across all environments.
    \item  The existence of such a predictor is proven by \Lemmaref{lemma:low-rank-invariant-phi}. It remains to show that the risk of this discriminator is lower than that of the optimal invariant predictor. Observe that these features are non-degenerate independent random variables with support over all of $\R$, and therefore by \Lemmaref{lemma:feature-subset}, dropping the $\tildezenv$ term and using 
    \begin{align*}
        \Phi(x) = [\zcaus], \quad \hat\beta = \colvec{\betacaus}{\beta_0}
    \end{align*}
    results in strictly higher risk. The proof is completed by noting that this definition is precisely the optimal invariant predictor. \qedhere
\end{enumerate}
\end{proof}

\subsection{Experiments for \Thmref{thm:linear-iff}}
\label{sec:appendix-thm-expts}

To corroborate our theoretical findings, we run an experiment on data drawn from our model to see at what point IRM is able to recover a generalizing predictor. We generated data precisely according to our model in the linear setting, with $\dcaus = 3, \denv = 6$. The environmental means were drawn from a multivariate Gaussian prior; we randomly generated the invariant parameters and the parameters of the prior such that using the invariant features gave reasonable accuracy (71.9\%) but the environmental features would allow for almost perfect accuracy on in-distribution test data (99.8\%). Thus, the goal was to see if IRM could successfully learn a predictor which ignores meaningful covariates $\zenv$, to the detriment of its training performance but to the benefit of OOD generalization. We chose equal class marginals ($\eta = 0.5$).

\Figref{fig:thm-expts} shows the result of five runs of IRM, each with different environmental parameters but the same invariant parameters (the training data itself was redrawn for each run). We found that optimizing for the IRM objective was quite unstable, frequently collapsing to the ERM solution unless $\lambda$ and the optimizer learning rate were carefully tuned. This echoes the results of \citet{krueger2020out} who found that tuning $\lambda$ during training to specific values at precisely the right time is essential for good performance. To prevent collapse, we kept the same environmental prior and found a single setting for $\lambda$ and the learning rate which resulted in reasonable performance across all five runs. At test time, we evaluated the trained predictors on additional, unseen environments whose parameters were drawn from the same prior. To simulate distribution shift, we evaluated the predictors on the same data but with the environmental means negated. Thus the correlations between the environmental features $\zenv$ and the label $y$ were reversed.

Observe that the results closely track the expected outcome according to \Thmref{thm:linear-iff}: up until $E = \denv$, IRM essentially matches ERM in performance both in-distribution and under distribution shift. As soon as we cross that threshold of observed environments, the predictor learned via IRM begins to perform drastically better under distribution shift, behaving more like the optimal invariant predictor. We did however observe that occasionally the invariant solution would be found after only $E = \denv = 6$ environments; we conjecture that this is because at this point the feasible-yet-not-invariant predictor with lower objective value presented in \Thmref{thm:linear-iff} is precisely a single point, as opposed to a multi-dimensional subspace, and therefore might be difficult for the optimizer to find.

\begin{figure}[!ht]
    \centering
    \includegraphics[width=0.8\linewidth]{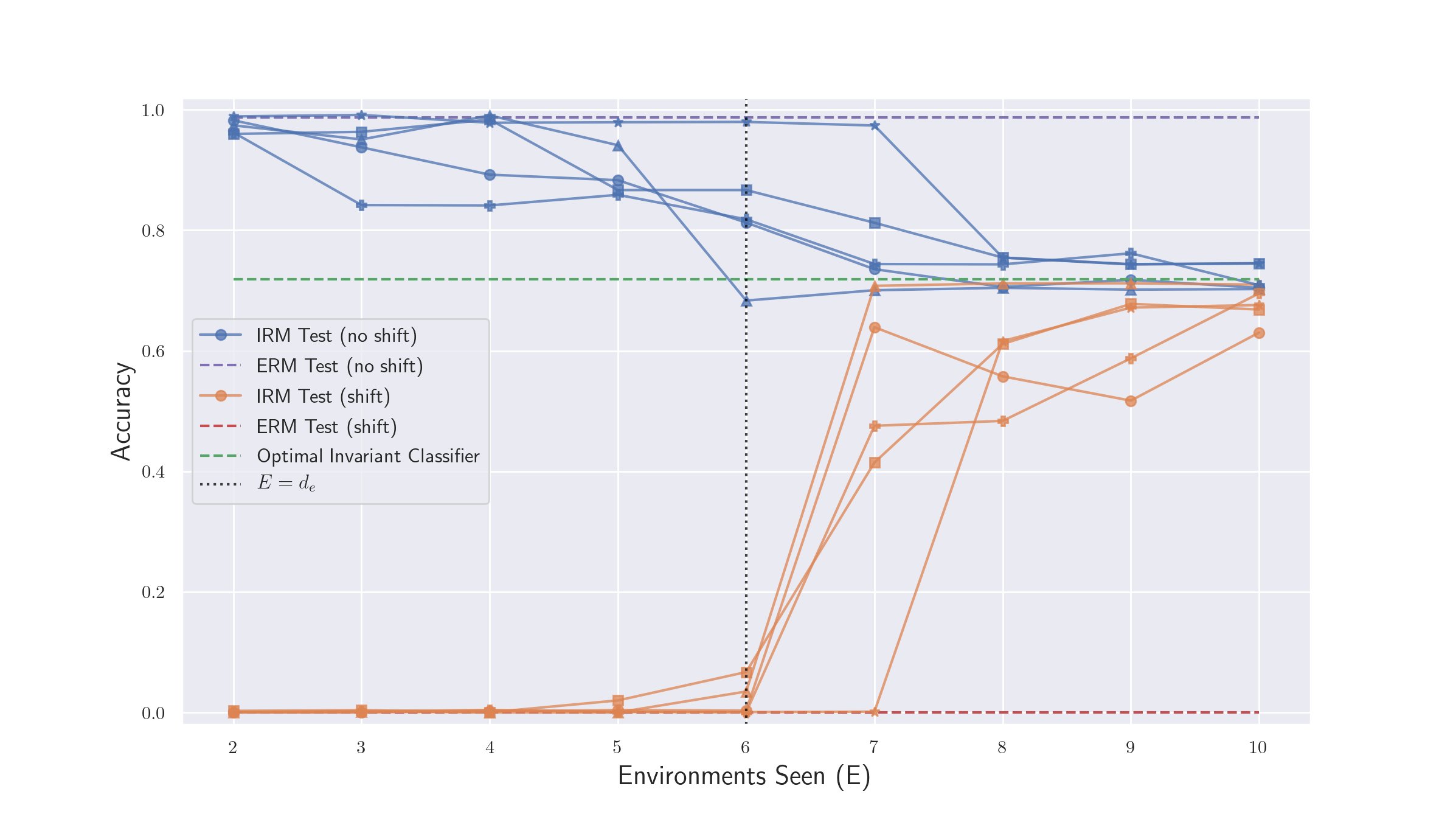}
    \caption{Performance of predictors learned with IRM (5 different runs) and ERM (dashed lines) on test distributions where the correlation between environmental features and the label is consistent (no shift) or reversed (shift). The dashed green line is the performance of the optimal invariant predictor. Observe that up until $E = \denv$, IRM consistently returns a predictor with performance similar to ERM: good generalization without distribution shift, but catastrophic failure when the correlation is reversed. In contrast, once $E > \denv$, IRM is able to recover a $\Phi,\hat\beta$ with performance similar to that of the invariant optimal predictor.}
    \label{fig:thm-expts}
\end{figure}

\subsection{Proof of \Thmref{thm:corr-less-01-risk}}

\lessrisk*

\begin{proof}
We consider the non-invariant predictor constructed as described in \Lemmaref{lemma:low-rank-invariant-phi}, but dropping the invariant features and coefficients. By \Lemmaref{lemma:optimal-coefficient-combination-feature}, the optimal coefficients for the invariant and non-invariant predictors are
\begin{align*}
    \hat\beta_{caus} = \colvec{2\sigcaus^{-2}\mucaus}{\beta_0} \qquad\text{and}\qquad \hat\beta_{non-caus} = \colvec{2\tildemu}{\beta_0},
\end{align*}
respectively. Therefore, the 0-1 risk of the optimal invariant predictor is precisely
\begin{align*}
    &\eta \P(2\sigcaus^{-2}\mucaus^T \zcaus + \beta_0 < 0) + (1-\eta) \P(-2\sigcaus^{-2}\mucaus^T \zcaus + \beta_0 > 0) \\
    = &\eta F \left(-\sigcaus^{-1}\norm{\mucaus} - \frac{\beta_0\sigcaus}{2\norm{\mucaus}}\right) + (1-\eta) F \left(-\sigcaus^{-1}\norm{\mucaus} + \frac{\beta_0\sigcaus}{2\norm{\mucaus}}\right),
\end{align*}
where $F$ is the Gaussian CDF. By the same reasoning, the 0-1 risk of the non-invariant predictor is
\begin{align*}
    \eta F \left(-\sigenv\tildemu - \frac{\beta_0}{2\sigenv\tildemu}\right) + (1-\eta) F \left(-\sigenv\tildemu + \frac{\beta_0}{2\sigenv\tildemu}\right).
\end{align*}
Define $\alpha = \sigcaus^{-1}\norm{\mucaus}$ and $\gamma = \sigenv\tildemu$. By monotonicity of the Gaussian CDF, the former risk is higher than the latter if
\begin{align}
    \label{eq:alpha-leq-gamma-plus}
    \alpha + \frac{\beta_0}{2\alpha} \leq \gamma + \frac{\beta_0}{2\gamma}, \\
    \label{eq:alpha-leq-gamma-minus}
    \alpha - \frac{\beta_0}{2\alpha} < \gamma - \frac{\beta_0}{2\gamma}.
\end{align}
Without loss of generality, we will prove these inequalities for $\beta_0 \geq 0$; an identical argument proves it for $\beta_0 < 0$ but with the `$\leq$' and `$<$' swapped.

Suppose $\gamma > \alpha$ (the first condition). Then \Eqref{eq:alpha-leq-gamma-minus} is immediate. Finally, for \Eqref{eq:alpha-leq-gamma-plus}, observe that
\begin{align*}
    &\gamma + \frac{\beta_0}{2\gamma} \geq \alpha + \frac{\beta_0}{2\alpha} \\
    \iff &\gamma-\alpha \geq \frac{\beta_0}{2\alpha}-\frac{\beta_0}{2\gamma} = \frac{(\gamma-\alpha)\beta_0}{2\gamma\alpha} \\
    \iff &2\gamma\alpha \geq \beta_0,
\end{align*}
which is the second condition.
\end{proof}

\subsection{Simulations of Magnitude of Environmental Features}

As discussed in \Secref{sec:irm-failure}, analytically quantifying the solution $\tildemu$ to the equation in \Lemmaref{lemma:unique-maximizing-projection} is difficult; instead, we present simulations to give a sense of how often these conditions would hold in practice.

For each choice of environmental dimension $\denv$, we generated a ``base'' correlation $b \sim \gN(0, I_{\denv})$ as the mean of the prior over environmental means $\muenv$. Each of these $\muenv$ was then drawn from $\gN(b, 4I_{\denv})$---thus, while they all came from the same prior, the noise in the draw of each $\muenv$ was significantly larger than the bias induced by the prior. We then solved for the precise value $\sigenv\tildemu$, with the same variance $\sigenv^2$ for all environments, chosen as a hyperparameter. The shaded area represents a 95\% confidence interval over 20 runs.

The dotted lines are $\sqrt{\dcaus}$. If we imagine the invariant parameters are drawn from a standard Gaussian prior, then this is precisely $\E[\sigcaus^{-1}\norm{\mucaus}]$. Thus, the point where $\sigenv\tildemu$ crosses these dotted lines is approximately how many environments would need to be observed before the non-invariant predictor has higher risk than the optimal invariant predictor. We note that this value is quite large, on the order of $\denv - \dcaus$.

\begin{figure}[!ht]
    \centering
    \includegraphics[width=0.7\linewidth]{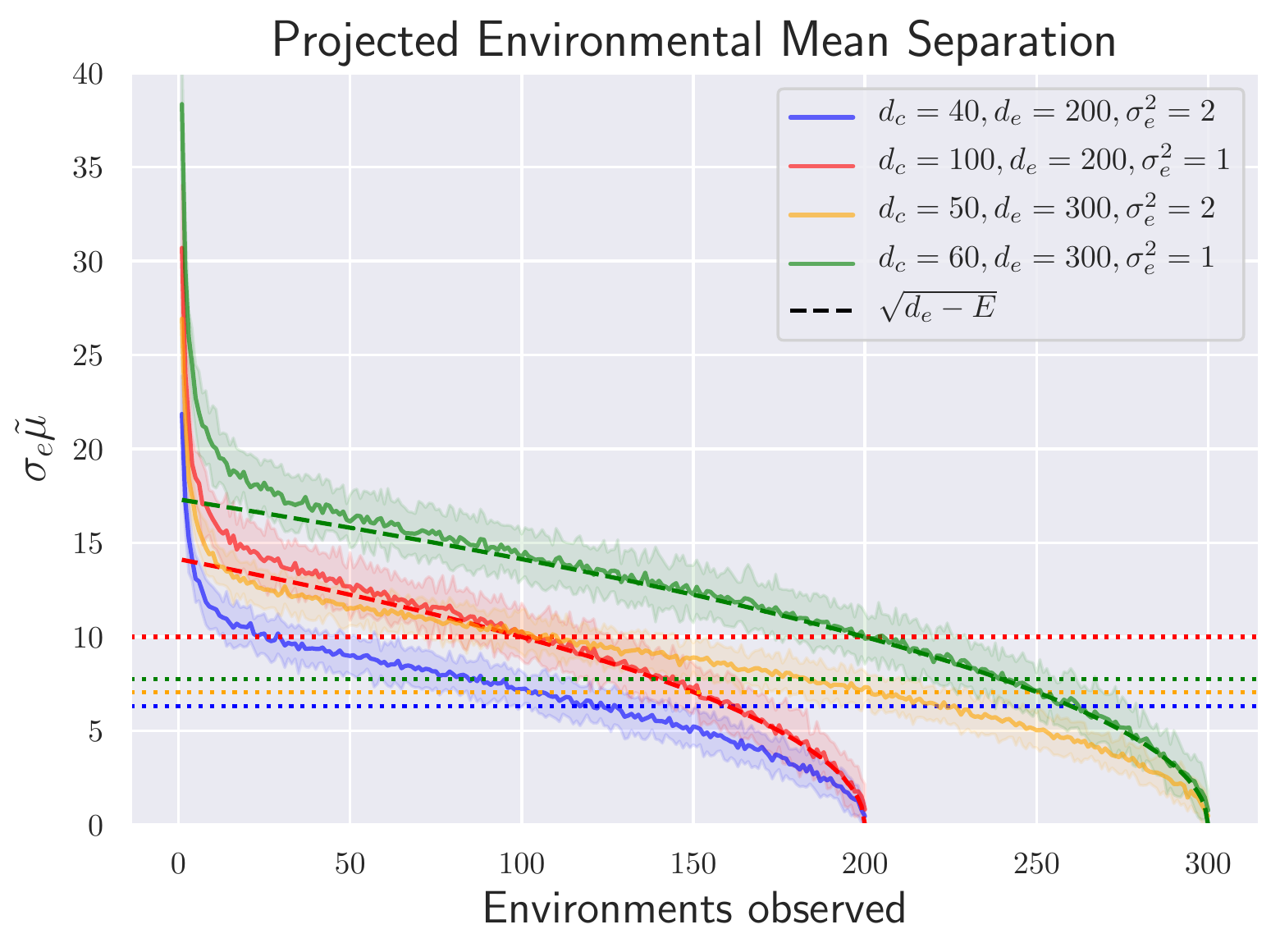}
    \caption{Simulations to evaluate $\sigenv\tildemu$ for varying ratios of $\frac{\denv}{\dcaus}$. When $\sigenv^2=1$ the value closely tracks $\sqrt{\denv-E}$, and the crossover point is approximately $\denv - \sigenv^2\dcaus$. These results imply the conditions of \Thmref{thm:corr-less-01-risk} are very likely to hold in the high-dimensional setting.}
    \label{fig:tildemu-simulations}
\end{figure}

\section{\Thmref{thm:nonlinear-simplified} and Discussion}
\label{sec:appendix-section-nonlinear}

\subsection{Proof of \Thmref{thm:nonlinear-simplified}}

We again begin with helper lemmas.

Our featurizer $\Phi$ is constructed to recover the environmental features only if they fall within a set $\gB^c$. The following lemma shows that since only the environmental features contribute to the gradient penalty, the penalty can be bounded as a function of the measure and geometry of that set. This is used together with Lemmas \ref{lemma:high-prob-region} and \ref{lemma:noncentral-gaussian-cte} to bound the overall penalty of our constructed predictor.

\begin{lemma}
    \label{lemma:exp-small-gradient}
    Suppose we observe environments $\gE=\{e_1,e_2,\ldots\}$.
    Given a set $\gB\subseteq\R^{\denv}$, consider the predictor defined by \Eqref{eq:phi-defintion}. Then for any environment $e$, the penalty term of this predictor in \Eqref{eq:irmv1} is bounded as
    \begin{align*}
        \normsq{\nabla_{\hat\beta}\gR^e(\Phi, \hat\beta)} \leq \biggl\| \P(\zenv\in\gB^c) \E[|\zenv| \mid \zenv\in\gB^c] \biggr\|_2^2.
    \end{align*}
\end{lemma}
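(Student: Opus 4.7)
The plan is to compute the logistic-loss gradient explicitly and exploit the piecewise structure of the featurizer: on $\{\zenv \in \gB\}$ the construction in \Eqref{eq:phi-defintion} coincides with the optimal invariant predictor and has vanishing $\zenv$-coordinates, while on $\{\zenv \in \gB^c\}$ it returns the full latent vector. Starting from the standard closed-form gradient for logistic regression,
\begin{align*}
\nabla_{\hat\beta}\gR^e(\Phi,\hat\beta) = \E_{p^e}\bigl[(\sigma(\hat\beta^T\Phi(x)) - \mathbf{1}\{y=1\})\,\Phi(x)\bigr],
\end{align*}
I would split the expectation into its $\{\zenv\in\gB\}$ and $\{\zenv\in\gB^c\}$ pieces and analyze each coordinate block of $\hat\beta$ (the $\betacaus$, $\betaerm$, and $\beta_0$ slots) separately.

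The core step is to argue that the gradient is effectively supported only on $\{\zenv \in \gB^c\}$. For the $\zenv$-slots this is immediate because $\Phi(x)$ is zero there on $\gB$. For the $\zcaus$ and bias slots I would use the Bayes-optimality identity $\E_{p^e}[(\sigma(\betacaus^T\zcaus + \beta_0) - \mathbf{1}\{y=1\})\,h(\zcaus)] = 0$ for any function $h$ of $\zcaus$ alone, which holds because $(\betacaus,\beta_0)$ precisely parametrize the invariant conditional law $\P(y=1 \mid \zcaus)$. Adding and subtracting the $\{\zenv\in\gB^c\}$-restricted version of this zero expectation rewrites the $\gB$-contribution as minus a $\gB^c$-restricted expectation, so that combining it with the already-$\gB^c$-supported term from the full-feature branch collapses the total gradient into a single expectation over $\{\zenv\in\gB^c\}$, whose integrand in the $\zcaus$/bias slots becomes the sigmoid \emph{difference} $\sigma(\betacaus^T\zcaus + \betaerm^T\zenv + \beta_0) - \sigma(\betacaus^T\zcaus + \beta_0)$ rather than the raw residual.

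With the gradient localized to $\{\zenv\in\gB^c\}$, the stated bound then drops out from the crude estimate $|\sigma(\cdot) - \mathbf{1}\{y=1\}| \leq 1$. For each $\zenv$-coordinate $i$ this yields a bound of $\E[|\zenv_i|\mathbf{1}\{\zenv\in\gB^c\}] = \P(\zenv\in\gB^c)\E[|\zenv_i|\mid\zenv\in\gB^c]$, and squaring and summing over $i$ reproduces the RHS exactly. The $\zcaus$ and bias slots, after the Bayes-optimality cancellation, carry only a $\tfrac14$-Lipschitz sigmoid-difference of order $|\betaerm^T\zenv|$, so their contributions are dominated by the same $\gB^c$-mass factor and fold into the bound rather than adding any $\zcaus$-scale term.

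The step I expect to be most delicate is precisely this $\zcaus$/bias absorption: the RHS carries no $|\zcaus|$ factor, so the argument must really show that these coordinates contribute at most on the order of $\|\P(\zenv\in\gB^c)\E[|\zenv|\mid\zenv\in\gB^c]\|_2^2$ rather than introducing an extra $\E[|\zcaus|\mathbf{1}\{\zenv\in\gB^c\}]$-type term. Making this cancellation quantitative---ensuring that the sigmoid-Lipschitz factor really does dominate and that the full squared gradient norm collapses into the stated $\zenv$-only bound---is where the proof becomes technical.
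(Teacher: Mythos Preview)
Your plan for the $\zenv$-block is exactly the paper's: split over $\gB/\gB^c$, kill the $\gB$-piece because the $\zenv$-coordinates of $\Phi$ vanish there, bound the $\gB^c$-residual via $|\sigma(\cdot)-\sigma(\cdot)|\le 1$, and marginalize out $\zcaus$ to land on $\P(\zenv\in\gB^c)\,\E[|\zenv|\mid\zenv\in\gB^c]$.

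The divergence is in the $(\betacaus,\beta_0)$ block. You set up an add-and-subtract Bayes-optimality cancellation to localize those coordinates to $\gB^c$, then try to absorb the resulting sigmoid-difference (which carries a $|\zcaus|$ factor) into the $\zenv$-only RHS---and you correctly flag this absorption as the sticking point. The paper does none of this. It dispatches the entire invariant block in one sentence: ``since $\zcaus \perp\!\!\!\perp \zenv \mid y$, the optimal invariant coefficients are unchanged, and therefore the gradient in the invariant dimensions is $0$.'' That is, the paper claims the $\betacaus,\beta_0$ components of $\nabla_{\hat\beta}\gR^e$ vanish \emph{exactly}, not merely that they are small, and thereafter works only with the $\zenv$-coordinates. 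The delicate absorption you anticipate simply never arises.

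Whether that one-liner is fully airtight is a fair question---it asserts that appending an arbitrary fixed function of $\zenv$ to the logit leaves $(\betacaus,\beta_0)$ exactly optimal, leaning on conditional independence---but that is the route the paper takes. Your approach is the more conservative one: if you do not grant exact vanishing, your Lipschitz bound really would produce an $\E[|\zcaus|\cdot|\betaerm^T\zenv|\mathbf{1}_{\gB^c}]$-type term that does not fold into the stated RHS without extra constants. So either adopt the paper's shortcut (and the lemma goes through as stated), or keep your careful route and accept that the invariant-block contribution buys you only the same $\gO(p_\epsilon^2)$ order, which is all Theorem~\ref{thm:nonlinear-simplified} actually needs downstream.
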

\begin{proof}
We write out the precise form of the gradient for an environment $e$:
\begin{align*}
    \nabla_{\hat\beta}\gR^e(\Phi, \hat\beta) &= \int_{\gZ_c\times\gZ_e} \!\!\!p^e(\zcaus, \zenv) \left[\sigma(\hat\beta^T \Phi(f(\zcaus, \zenv))) - p^e(y=1\mid\zcaus,\zenv)\right] \Phi( f (\zcaus, \zenv))\; d(\zcaus, \zenv).
\end{align*}

Observe that since $\zcaus \perp\!\!\!\perp \zenv \mid y$, the optimal invariant coefficients are unchanged, and therefore the gradient in the invariant dimensions is 0. We can split the gradient in the environmental dimensions into two integrals:
\begin{align*}
    & \int_{\gZ_c\times\gB} p^e(\zcaus, \zenv) \left[\sigma(\betacaus^T\zcaus + \beta_0) - p^e(y=1\mid\zcaus,\zenv)\right] [0]\ d(\zcaus, \zenv) \\
    &\quad+ \int_{\gZ_c\times\gB^c} p^e(\zcaus, \zenv) \left[\sigma(\betacaus^T\zcaus + \betaerm^T\zenv + \beta_0) - \sigma(\betacaus^T\zcaus + \betaenv^T\zenv + \beta_0)\right] [\zenv]\ d(\zcaus, \zenv).
\end{align*}
Since the features are 0 within $\gB$, the first term reduces to 0. For the second term, note that $\forall\ x,y \in\R,\ |\sigma(x) - \sigma(y)| \leq 1$, and therefore
\begin{align*}
    |\nabla_{\hat\beta}\gR^e(\Phi, \hat\beta)| &\leq \int_{\gZ_c\times\gB^c} p^e(\zcaus, \zenv) [|\zenv|]\ d(\zcaus, \zenv).
\end{align*}
We can marginalize out $\zcaus$, and noting that we want to bound the squared norm,
\begin{align*}
    \normsq{\nabla_{\hat\beta}\gR^e(\Phi, \hat\beta)} &\leq \left\| \int_{\gB^c} p^e(\zenv) [|\zenv|]\ d\zenv \right\|_2^2 \\
    &= \biggl\| \P(\zenv\in\gB^c) \E[|\zenv| \mid \zenv\in\gB^c] \biggr\|_2^2. \qedhere
\end{align*}
\end{proof}
\vspace{8pt}

This next lemma says that if the environmental mean of the test distribution is sufficiently separated from each of the training means, with high probability a sample from this distribution will fall outside of $\gB_r$, and therefore $\Phi_\epsilon, \hat\beta$ will be equivalent to the ERM solution.
\begin{lemma}
    \label{lemma:new-draw-full-features}
    For a set of $E$ environments $\gE=\{e_1,e_2,\ldots,e_E\}$ and any $\epsilon > 1$, construct $\gB_r$ as in \Eqref{eq:ball-union-construction} and define $\Phi_\epsilon$ using $\gB_r$ as in \Eqref{eq:phi-defintion}. Suppose we now test on a new environment with parameters $(\mu_{E+1},\sigma_{E+1}^2)$, and assume \Eqref{eq:appendix-mean-separation} holds with parameter $\delta$. Define $k = \min_{e\in\gE}\frac{\sigenv^2}{\sigma_{E+1}^2}$. Then with probability $\geq 1 - \frac{2E}{\sqrt{k\pi}{\delta}} \exp\{-k\delta^2\}$ over the draw of an observation from this new environment, we have
    \begin{align*}
        \Phi_\epsilon(x) = f^{-1}(x) = \colvec{\zcaus}{\zenv}.
    \end{align*}
\end{lemma}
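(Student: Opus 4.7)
The plan is to union-bound the probability that $\zenv$ falls in $\gB_r$ over the $2E$ balls that make up $\gB_r$ and then, for each ball, dominate its Gaussian mass by a one-dimensional tail using a halfspace-containment argument.

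First I would write $\gB_r = \bigcup_{e \in \gE} \bigl( B_r(\muenv) \cup B_r(-\muenv) \bigr)$, where $B_r(c)$ is the Euclidean ball of radius $r$ around $c$ and $r$ is the radius used in the construction of $\Phi_\epsilon$ (chosen via standard chi-squared concentration so that, for each training environment, its own ball has mass at least $1 - p_\epsilon$). Fix a test label $y \in \{\pm 1\}$; then $\zenv \sim \gN(y\mu_{E+1}, \sigma_{E+1}^2 I_{\denv})$, and a union bound gives
\begin{align*}
\P(\zenv \in \gB_r) \;\leq\; \sum_{e \in \gE} \sum_{y' \in \{\pm 1\}} \P\bigl(\zenv \in B_r(y'\muenv)\bigr).
\end{align*}

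For each summand I would invoke the fact that every ball is contained in the halfspace tangent to it on the side facing the Gaussian's mean: with $u = (y'\muenv - y\mu_{E+1})/\|y'\muenv - y\mu_{E+1}\|$, we have $B_r(y'\muenv) \subseteq \{z : u^T(z - y\mu_{E+1}) \geq \|y'\muenv - y\mu_{E+1}\| - r\}$. The projection $u^T\zenv$ is a one-dimensional Gaussian with mean $u^T(y\mu_{E+1})$ and variance $\sigma_{E+1}^2$, so the single-ball probability is at most $\P(Z \geq t)$ for standard normal $Z$, with $t = (\|y'\muenv - y\mu_{E+1}\| - r)/\sigma_{E+1}$. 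By the sign-symmetric separation hypothesis \eqref{eq:appendix-mean-separation} (which controls $\|y'\muenv - y\mu_{E+1}\| = \|\mu_{E+1} - y y' \muenv\|$ uniformly over both choices of sign) together with the explicit value of $r$, this $t$ is at least the quantity encoded by $\delta$ in the stated bound; the factor $k = \min_e \sigenv^2/\sigma_{E+1}^2$ emerges because $r$ scales with the training standard deviation $\sigenv$ while the projected tail variance is $\sigma_{E+1}^2$. The Mills-ratio bound $\P(Z \geq t) \leq \frac{1}{t\sqrt{2\pi}} e^{-t^2/2}$ applied to the standardized projection, followed by summation over the $2E$ balls, yields the claimed estimate.

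The main obstacle is purely the bookkeeping of constants: matching the prefactor $\frac{2E}{\sqrt{k\pi}\,\delta}$ and the exponent $e^{-k\delta^2}$ exactly requires the explicit form of $r$ coming from the chi-squared concentration parametrized by $\epsilon$ in the construction of $\Phi_\epsilon$, together with the precise appendix version of the separation condition. The conceptual content---a $2E$-fold union bound accounting for sign symmetry between $+\muenv$ and $-\muenv$, halfspace containment of a Gaussian ball, and a one-dimensional Mills-ratio tail---is otherwise standard, and no additional hypotheses beyond the assumed separation are needed.
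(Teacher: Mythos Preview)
Your proposal is correct and mirrors the paper's proof almost exactly: a union bound over the $2E$ balls in $\gB_r$, containment of each ball in the halfspace tangent to it and normal to the segment joining its center to the test mean, and a one-dimensional Gaussian tail (Mills-ratio) bound on the resulting projection. The paper phrases the sign handling via the $\min_y$ in the separation hypothesis rather than your explicit sum over $y'$, and invokes rotational invariance in place of your explicit unit-vector projection, but these are cosmetic differences.
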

\begin{proof}
By \Eqref{eq:appendix-mean-separation} our new environmental mean is sufficiently far away from all the label-conditional means of the training environments. In particular, for any environment $e\in\gE$ and any label $y\in\{\pm 1\}$, the $\ell_2$ distance from that mean to $\mu_{E+1}$ is at least $(\sqrt{\epsilon}+\delta) \sigenv \sqrt{\denv}$.

Recall that $\gB_r$ is the union of balls $\pm B_e$, where $B_e$ is the ball of $\ell_2$ radius $\sqrt{\epsilon \sigenv^2 \denv}$ centered at $\muenv$. For each environment $e$, consider constructing the halfspace which is perpendicular to the line connecting $\muenv$ and $\mu_{E+1}$ and tangent to $B_e$. This halfspace fully contains $B_e$, and therefore the measure of $B_e$ is upper bounded by that of the halfspace.

By rotational invariance of the Gaussian distribution, we can rotate this halfspace into one dimension and the measure will not change. The center of the ball is $(\sqrt{\epsilon} + \delta) \sigenv \sqrt{\denv}$ away from the mean $\mu_{E+1}$, so accounting for its radius, the distance from the mean to the halfspace is $\delta \sigenv \sqrt{\denv}$. The variance of the rotated distribution one dimension is $\sigma_{E+1}^2\denv$, so the measure of this halfspace is upper bounded by
\begin{align*}
    1-\Phi\left(\frac{\delta\sigenv\sqrt{\denv}}{\sqrt{\sigma_{E+1}^2\denv}}\right) &\leq \Phi\left(-\sqrt{k}\delta\right) \\
    &\leq \frac{1}{\sqrt{k\pi}\delta} \exp\{-k\delta^2\},
\end{align*}
using results from \citet{erfc}. There are $2E$ such balls comprising $\gB_r$, which can be combined via union bound.
\end{proof}
\vspace{8pt}

With these two lemmas, we now state the full version of \Thmref{thm:nonlinear-simplified}, with the main difference being that it allows for any environmental variance.

\begin{theorem}[Non-linear case, full]
    \label{thm:nonlinear-full}
    Suppose we observe $E$ environments $\gE = \{e_1,e_2,\ldots,e_E\}$. Then, for any $\epsilon > 1$, there exists a featurizer $\Phi_\epsilon$ which, combined with the ERM-optimal classifier $\hat\beta = [\betacaus, \betaerm, \beta_0]^T$, satisfies the following properties, where we define $p_{\epsilon,\denv} := \exp\{-\denv \min((\epsilon-1), (\epsilon-1)^2) / 8\}$:
    \begin{enumerate}
        \item Define $\sigma_{\max}^2 = \max_e \sigenv^2$. Then the regularization term of $\Phi_\epsilon, \hat\beta$ is bounded as
        \begin{align*}
            \frac{1}{E} \sum_{e\in\gE} \normsq{\nabla_{\hat\beta}\gR^e(\Phi_\epsilon, \hat\beta)} &\in \mathcal{O}\left(p_{\epsilon,\denv}^2 \; \biggl[ \epsilon \denv \sigma_{\max}^4 \exp\{2\epsilon \sigma_{\max}^2\} + \overline{\normsq{\mu}} \biggr] \right).
        \end{align*}
        \item $\Phi_\epsilon, \hat\beta$ exactly matches the optimal invariant predictor on at least a $1 - p_{\epsilon,\denv}$ fraction of the training set. On the remaining inputs, it matches the ERM-optimal solution.
    \end{enumerate}
    Further, for any test distribution with environmental parameters $(\mu_{E+1}, \sigma_{E+1}^2)$, suppose the environmental mean $\mu_{E+1}$ is sufficiently far from the training means:
    \begin{align}
        \label{eq:appendix-mean-separation}
        \forall e\in\gE, \min_{y\in\{\pm 1\}} \norm{\mu_{E+1}-y\cdot\mu_e} \geq (\sqrt{\epsilon}+\delta) \sigenv \sqrt{\denv}
    \end{align}
    for some $\delta > 0$. Define the constants:
    \begin{align*}
        k &= \min_{e\in\gE} \frac{\sigenv^2}{\sigma_{E+1}^2} \\
        q &= \frac{2E}{\sqrt{k\pi}{\delta}} \exp\{-k\delta^2\}.
    \end{align*}
    Then the following holds:
    \begin{enumerate}
        \setcounter{enumi}{2}
        \item $\Phi_\epsilon, \hat\beta$ is equivalent to the ERM-optimal predictor on at least a $1-q$ fraction of the test distribution.
        \item Under Assumption \ref{assumption}, suppose it holds that
        \begin{align}
            \label{eq:nonlinear-mean-linear-combination}
            \mu_{E+1} = -\sum_{e\in\gE} \alpha_e \muenv
        \end{align}
        for some set of coefficients $\{\alpha_e\}_{e\in\gE}$. Then for any $c\in\R$, so long as
        \begin{align}
            \label{eq:alpha-combination-lower-bound}
            \sum_{e\in\gE} \alpha_e\frac{\normsq{\muenv}}{\sigenv^2} \geq \frac{\normsq{\mucaus}/\sigcaus^2 + |\beta_0|/2 + c\sigerm}{1-\gamma},
        \end{align}
        the 0-1 risk of $\Phi_\epsilon,\hat\beta$ is lower bounded by $F(2c) - q$.
    \end{enumerate}
\end{theorem}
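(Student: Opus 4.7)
The plan is to give an explicit piecewise construction of $\Phi_\epsilon$ and then verify each of the four claims in order. Specifically, I would define $\gB_r \subset \R^{\denv}$ to be the union of $\ell_2$ balls of radius $r := \sqrt{\epsilon\,\sigenv^2\,\denv}$ centered at $\pm\muenv$ for each training environment $e$, and set
\begin{equation*}
\Phi_\epsilon(f(\zcaus,\zenv)) = \begin{cases} [\zcaus,\,\mathbf{0}]^T & \text{if } \zenv \in \gB_r, \\ [\zcaus,\,\zenv]^T & \text{if } \zenv \in \gB_r^c. \end{cases}
\end{equation*}
Combined with $\hat\beta = [\betacaus,\betaerm,\beta_0]^T$, this produces a predictor that coincides with the optimal invariant predictor inside $\gB_r$ (the environmental coordinates are zeroed out and do not contribute to the response) and with the ERM-optimal predictor outside.

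Claims 2 and 3 then fall out of concentration. For claim 2, a training sample has $\zenv \sim \gN(y\muenv,\sigenv^2 I)$, so $\|\zenv - y\muenv\|^2/\sigenv^2 \sim \chi^2_{\denv}$; Laurent--Massart tail bounds give that $\zenv$ lies in the ball around $y\muenv$ except with probability at most $p_{\epsilon,\denv}$, and on that event $\Phi_\epsilon,\hat\beta$ is identically the optimal invariant predictor; on the tail event it equals the ERM solution by construction. Claim 3 is immediate from Lemma~\ref{lemma:new-draw-full-features}. Claim 1 follows from Lemma~\ref{lemma:exp-small-gradient} applied to $\gB_r$: the squared-gradient bound reduces to $\|\P(\zenv\in\gB_r^c)\,\E[|\zenv|\mid \zenv\in\gB_r^c]\|_2^2$. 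The probability factor is at most $p_{\epsilon,\denv}$ (Lemma~\ref{lemma:high-prob-region}); the conditional expectation splits as a mean part, contributing $\overline{\|\mu\|^2}$, and a centered part controlled by a truncated-Gaussian moment computation (Lemma~\ref{lemma:noncentral-gaussian-cte}), yielding the $\epsilon\denv\sigma_{\max}^4 e^{2\epsilon\sigma_{\max}^2}$ term. Squaring and averaging over $\gE$ gives the claimed $\gO(\cdot)$ bound.

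Claim 4 is the most substantive. On the $1-q$ fraction of the test distribution where $\Phi_\epsilon$ equals the ERM predictor, the response $\hat\beta^T\Phi_\epsilon(x) = \betacaus^T\zcaus + \betaerm^T\zenv + \beta_0$ is Gaussian conditional on the label; for $y = 1$ its mean is $m := \betacaus^T\mucaus + \betaerm^T\mu_{E+1} + \beta_0$. Substituting $\mu_{E+1} = -\sum_e \alpha_e \muenv$ and invoking Assumption~\ref{assumption}, which for $\Sigma = \sigenv^2 I$ translates to $\betaerm^T\muenv \geq 2(1-\gamma)\|\muenv\|^2/\sigenv^2$, gives $m \leq 2\|\mucaus\|^2/\sigcaus^2 + |\beta_0| - 2(1-\gamma)\sum_e \alpha_e \|\muenv\|^2/\sigenv^2$. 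Condition~\eqref{eq:alpha-combination-lower-bound} then forces $m \leq -2c\,\sigerm$. The response standard deviation is at most $\sigerm$ (this is where $\sigenv^2$ in the definition of $\sigerm$ must be interpreted as an upper bound on the test variance), so the $y=1$ misclassification probability is at least $F(2c)$; symmetry handles $y=-1$. Subtracting $q$ for the fraction on which $\Phi_\epsilon$ may disagree with ERM gives the lower bound $F(2c) - q$.

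The main obstacles I expect are two. First, pinning down the truncated-Gaussian conditional moment in claim 1 so that the constants (in particular the factor $\exp(2\epsilon\sigma_{\max}^2)$) emerge cleanly: one has to split $|\zenv|$ into its mean and centered parts, and bound the conditional chi-squared moment in a tail event whose geometry is the complement of a union of balls. Second, a subtler issue in claim 4: the response variance at test time involves $\sigma_{E+1}^2$, whereas $\sigerm$ is defined using $\sigenv^2$, so the clean bound $F(2c)$ implicitly requires absorbing $\sigma_{E+1}^2$ into $\sigerm$ (e.g.\ by taking $\sigenv^2$ in the definition of $\sigerm$ to be an upper envelope, or by restricting to test environments with variance no larger than the training ones). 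This assumption should be made explicit in the formal proof.
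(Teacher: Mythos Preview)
Your proposal is correct and essentially identical to the paper's proof: the same piecewise construction of $\Phi_\epsilon$ over the union of balls $\gB_r$ around $\pm\muenv$, the same chain of lemmas (\ref{lemma:exp-small-gradient}, \ref{lemma:high-prob-region}, \ref{lemma:noncentral-gaussian-cte}, \ref{lemma:new-draw-full-features}) for claims 1--3, and the same Gaussian-response computation for claim 4. Your flagged concern about $\sigma_{E+1}^2$ versus $\sigenv^2$ in the definition of $\sigerm$ is warranted---the paper simply writes the test-time response variance as $\sigerm^2$ without comment, so the ambiguity you identify is present (and unaddressed) in the original as well.
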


\begin{proof}
    Define $r = \sqrt{\epsilon \sigenv^2 \denv}$ and construct $\gB_r \subset \R^{\denv}$ as 
    \begin{align}
        \label{eq:ball-union-construction}
        \gB_r = \left[ \bigcup_{e\in\gE} B_r(\muenv) \right] \cup \left[ \bigcup_{e\in\gE} B_r(-\muenv) \right],
    \end{align}
    where $B_r(\alpha)$ is the ball of $\ell_2$-norm radius $r$ centered at $\alpha$.
    Further construct $\Phi_\epsilon$ using $\gB_r$ as follows:
    \begin{align}
        \label{eq:phi-defintion}
        \Phi_\epsilon(x) = \begin{cases}
        \colvec{\zcaus}{0}, &\zenv\in\gB_r\\
        \colvec{\zcaus}{\zenv}, &\zenv\in\gB_r^c,
        \end{cases} \qquad\textrm{and}\qquad \hat\beta = \begin{bmatrix}\betacaus \\ \hat\beta_e \\ \beta_0\end{bmatrix}.
    \end{align}
    Without loss of generality, fix an environment $e$.

    \begin{enumerate}[leftmargin=*]
        \item By \Lemmaref{lemma:exp-small-gradient}, the squared gradient norm is upper bounded by
        \begin{align}
            \label{eq:risk-upper-bound}
            \normsq{\nabla_{\hat\beta} \gR^e(\Phi_\epsilon, \hat\beta)} &\leq \biggl\| \P(\zenv\in\gB_r^c) \E[|\zenv| \mid \zenv\in\gB_r^c] \biggr\|_2^2.
        \end{align}
        
        Define $B_e := B_r(\muenv)$, and observe that $\gB_r^c \subseteq B_e^c$. Since $|\zenv|$ is non-negative, 
        \begin{align*}
            \P(\zenv\in\gB_r^c) \E[|\zenv| \mid \zenv\in\gB_r^c] &\leq \P(\zenv\in B_e^c) \E[|\zenv| \mid \zenv\in B_e^c]
        \end{align*}
        (this inequality is element-wise). Plugging this into \Eqref{eq:risk-upper-bound} yields
        \begin{align*}
            \normsq{\nabla_{\hat\beta} \gR^e(\Phi_\epsilon, \hat\beta)} &\leq \biggl\| \P(\zenv\in B_e^c) \E[|\zenv| \mid \zenv\in B_e^c] \biggr\|_2^2 \\
            &= [\P(\zenv\in B_e^c)]^2 \biggl\| \E[|\zenv| \mid \zenv\in B_e^c] \biggr\|_2^2.
        \end{align*}
        Define $p = \P(\zenv \in \gB_r^c) \leq \P(\zenv \in B_e^c)$. By \Lemmaref{lemma:high-prob-region}, 
        \begin{align*}
            p &\leq p_{\epsilon, \denv} = e^{-\denv \min((\epsilon-1), (\epsilon-1)^2) / 8}.
        \end{align*}
        Combining Lemmas \ref{lemma:noncentral-gaussian-cte} and \ref{lemma:gaussian-cte-bound} gives
        \begin{align*}
            \biggl\| \E[|\zenv| \mid \zenv\in B_e^c] \biggr\|_2^2 &\leq 2\denv \left[ \sigma \frac{\phi(r/\sqrt{\denv})}{F(-r/\sqrt{d})} \right]^2 + 2\normsq{\muenv} \\
            &\leq \denv \sigenv^2 \exp\left\{ 2\epsilon (\sigenv^2-1/2) \right\} \left[\epsilon\sigenv^2 + 1 \right] + 2\normsq{\muenv}.
        \end{align*}
        Putting these two bounds together, we have
        \begin{align*}
            \normsq{\nabla_{\hat\beta} \gR^e(\Phi_\epsilon, \hat\beta)} &\in \mathcal{O}\left(p_{\epsilon,\denv}^2 \; \biggl[ \epsilon \denv \sigma_{\max}^4 \exp\{2\epsilon \sigma_{\max}^2\} + \normsq{\muenv} \biggr] \right),
        \end{align*}
        and averaging this value across environments gives the result.
        
        \item $\Phi_\epsilon, \hat\beta$ is equal to the optimal invariant predictor on $\gB_r$ and the ERM solution on $\gB_r^c$. The claim then follows from \Lemmaref{lemma:high-prob-region}.
        
        \item This follows directly from \Lemmaref{lemma:new-draw-full-features}.
        
        \item With \Eqref{eq:nonlinear-mean-linear-combination}, we have that
        \begin{align*}
            \betaerm^T \mu_{E+1} &= -\sum_{e\in\gE} \alpha_e \betaerm^T \muenv \\
            &\leq -2(1-\gamma)\sum_{e\in\gE} \alpha_e \frac{\normsq{\muenv}}{\sigenv^2} \\
            &\leq -2(1-\gamma) \frac{\normsq{\mucaus}/\sigcaus^2 + |\beta_0|/2 + c\sigerm}{1-\gamma} \\
            &= -(2\normsq{\mucaus}/\sigcaus^2 + |\beta_0| + 2c\sigerm).
        \end{align*}
        where we have applied Assumption 1 in the first inequality and \Eqref{eq:alpha-combination-lower-bound} in the second. Consider the full set of features $\Phi_\epsilon(x) = f^{-1}(x)$, and without loss of generality assume $y=1$. The label-conditional distribution of the resulting logit is
        \begin{align*}
            \betacaus^T\zcaus + \betaerm^T\zenv + \beta_0 \sim \gN\left( \betacaus^T\mucaus + \betaerm^T\mu_{E+1} + \beta_0, \sigerm^2 \right).
        \end{align*}
        Therefore, the 0-1 risk is equal to the probability that this logit is negative. This is precisely
        \begin{align*}
        \hspace{-7pt}
            F \left(-\frac{\betacaus^T\mucaus + \betaerm^T\mu_{E+1} + \beta_0}{\sigerm} \right) &\geq F \left(\frac{(2\normsq{\mucaus}/\sigcaus^2 + |\beta_0| + 2c\sigerm) - 2\normsq{\mucaus} / \sigcaus^2 - |\beta_0|}{\sigerm} \right) \\
            &= F \left( \frac{2c\sigerm}{\sigerm} \right) \\
            &= F(2c).
        \end{align*}
        Observe that by the previous part, $\Phi_\epsilon \neq f^{-1}$ on at most a q fraction of observations, so the risk of our predictor $\Phi_\epsilon, \hat\beta$ can differ from that of $f^{-1}, \hat\beta$ by at most $q$. Therefore our predictor's risk is lower bounded by $F(2c) - q$. In particular, choosing $c = 1$ recovers the statement in the main body. \qedhere
    \end{enumerate}
\end{proof}
\vspace{8pt}

\subsection{Discussion of Conditions and Assumption}
\label{sec:appendix-nonlinear-discussion}

To see just how often we can expect the conditions for \Thmref{thm:nonlinear-full} to hold, we can do a rough approximation based on the expectations of each of the terms. A reasonable prior for the environmental means is a multivariate Gaussian $\gN(m, \Sigma)$. We might expect them to be very concentrated (with $\Tr(\Sigma)$ small), or perhaps to have a strong bias (with $\normsq{m} \gg \Tr(\Sigma)$). For simplicity we treat the variances $\sigcaus^2, \sigenv^2$ as constants. Then the expected separation between any two means from this distribution is 
\begin{align*}
    \E[\norm{\mu_1 - \mu_2}] = \E_{x \sim \gN(0, 2\Sigma)}[\norm{x}] \approx \sqrt{2\Tr(\Sigma)}.
\end{align*}
In high dimensions this value will tightly concentrate around the mean, which is in $\gO(\sqrt{\denv})$. On the other hand, even a slight deviation from this separation, to $\Omega(\sqrt{\denv \log E})$, means $\delta \in \Omega(\sqrt{\log E})$, which implies $q \in \gO(1 / E)$; this is plenty small to ensure worse-than-random error on the test distribution.

Now we turn our attention to the second condition (\plaineqref{eq:alpha-combination-lower-bound}). The expected squared norm of each mean is $\denv$, and in high dimensions we expect them to be reasonably orthogonal (as a rough approximation; this is technically not true with a non-centered Gaussian). Then so long as $\sum_i \alpha_i \in \Omega(1)$, the left-hand side of \Eqref{eq:alpha-combination-lower-bound} is approximately $\denv$. On the other hand, treating $\gamma$ as a constant, the right-hand side is close to $\dcaus + \sqrt{\dcaus + \denv} \in \gO(\dcaus + \sqrt{\denv})$. Thus, \Eqref{eq:alpha-combination-lower-bound} is quite likely to hold for any mean $\mu_{E+1}$ with the same scale as the training environments but with reversed correlations---again, this is \emph{exactly the situation} where IRM hopes to outperform ERM, and we have shown that it does not.

We can also do a quick analysis of Assumption \ref{assumption} under this prior: the ERM-optimal non-invariant coefficient will be approximately $2m/\sigenv^2$ with high probability, meaning $\hat\beta^T\mu \approx 2\normsq{m}/\sigenv^2$ for every environment. Thus, this vector will be $\gamma$-close to optimal with $\gamma \approx 0$ for every environment with high probability.

\section{Extensions to Alternative Objectives}
\label{sec:appendix-extensions}

\subsection{Extensions for the Linear Case}

Observe that the constraint of \Eqref{eq:irm-objective} is strictly stronger than that of \citet{bellot2020generalization}; when the former is satisfied, the penalty term of the latter is necessarily 0. It is thus trivial to extend all results in the \Secref{sec:irm-failure} to this objective. As another example, consider the risk-variance-penalized objective of \citet{krueger2020out}:
\begin{align}
    \label{eq:riskvar-objective}
    \min_{\Phi,\hat\beta}\quad& \frac{1}{|\gE|}\sum_{e\in\gE} \gR^e(\Phi,\hat\beta) + \lambda \Var_{e\in\gE}\left( \gR^e(\Phi,\hat\beta) \right),
\end{align}
It is simple to extend \Thmref{thm:linear-iff} under an additional assumption:

\begin{corollary}[Extension to \Thmref{thm:linear-iff}]
    \label{corollary:riskvar}
    Assume $f$ is linear. Suppose we observe $E \leq \denv$ environments with linearly independent means and identical variance $\sigenv^2$. Consider minimizing empirical risk subject to a penalty on the risk variance (\plaineqref{eq:riskvar-objective}). Then there exists a $\Phi, \hat\beta$ dependent on the non-invariant features which achieves a lower objective value than the optimal invariant predictor for \emph{any} choice of regularization parameter $\lambda\in[0,\infty]$.
\end{corollary}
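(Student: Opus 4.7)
The plan is to reuse the non-invariant featurizer constructed in the proof of Lemma \ref{lemma:low-rank-invariant-phi}, and observe that under the additional assumption of identical environmental variance, this predictor has exactly zero risk variance across environments. Concretely, take $\Phi(x) = [\zcaus, p^T\zenv]^T$, where $p$ is the unit-norm vector satisfying $p^T\muenv = \sigenv^2 \tildemu$ for all $e \in \gE$ (which exists by Lemma \ref{lemma:unique-maximizing-projection} since the means are linearly independent). Because every $\sigenv^2$ is equal to a common $\sigma^2$, the label-conditional distribution $p^T \zenv \mid y \sim \gN(y \cdot \sigma^2\tildemu,\; \sigma^2)$ does not depend on $e$. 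Paired with the environment-independent distribution of $\zcaus \mid y$, this makes the \emph{full} joint distribution of $(\Phi(x), y)$ identical across environments.

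From here, the argument is essentially immediate. First, for the optimal linear classifier $\hat\beta = [\betacaus,\; 2\tildemu]^T$ constructed in Lemma \ref{lemma:low-rank-invariant-phi}, the risk $\gR^e(\Phi,\hat\beta)$ is a functional of the joint distribution of $(\Phi(x), y)$ alone, hence is constant in $e$. Therefore $\Var_{e\in\gE}(\gR^e(\Phi,\hat\beta)) = 0$. Second, the optimal invariant predictor $\Phi^*, \hat\beta^*$ also has zero risk variance, since it depends only on $\zcaus$ whose joint with $y$ is invariant by the model assumptions. Thus the risk-variance penalty term vanishes for both predictors, and the comparison reduces to the mean risk term.

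Finally, by the same argument used in the proof of part 2 of Theorem \ref{thm:linear-iff} (via Lemma \ref{lemma:feature-subset}), the constructed predictor is Bayes with respect to the strict superset of features $\{\zcaus, p^T\zenv\}$, while the optimal invariant predictor only uses $\zcaus$; therefore its mean risk is strictly lower. Since both terms in the objective (\plaineqref{eq:riskvar-objective}) strictly favor the non-invariant predictor regardless of $\lambda \in [0, \infty]$, the corollary follows.

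The only mild subtlety worth flagging is the $\lambda = \infty$ boundary case, but this is handled cleanly because $\Var_{e\in\gE}(\gR^e)$ is \emph{exactly} zero (not merely small) for both predictors, so $\lambda \cdot 0 = 0$ under the usual convention and the inequality on the mean risks is preserved. No new calculation is needed beyond the two facts already established in Section \ref{sec:irm-failure}: invariance of the $(\Phi(x), y)$ distribution for the constructed non-invariant predictor, and strict risk improvement from adding the extra feature $p^T\zenv$.
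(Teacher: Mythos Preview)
Your proposal is correct and follows essentially the same approach as the paper: reuse the featurizer from Lemma~\ref{lemma:low-rank-invariant-phi}, observe that identical environmental variance makes the label-conditional distribution of $p^T\zenv$ invariant so the risk-variance penalty vanishes, and then invoke Lemma~\ref{lemma:feature-subset} to conclude strictly lower mean risk. Your write-up is slightly more explicit than the paper's in noting that the optimal invariant predictor also has zero risk variance and in handling the $\lambda=\infty$ boundary (the paper leaves both implicit); one small wording nit is that the variance term does not \emph{strictly} favor your predictor---it is equal for both---so it is the mean-risk term alone that delivers the strict inequality.
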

\begin{proof}
Consider the featurizer $\Phi$ constructed in \Lemmaref{lemma:low-rank-invariant-phi}. If the environmental variance is constant, then the label-conditional distribution of the environmental features, 
\begin{align*}
    \zenv \mid y \sim \gN(y\cdot \tildemu\sigenv^2, \sigenv^2),
\end{align*}
is also invariant. This implies that the optimal $\hat\beta$ also has constant risk across the environments, meaning the penalty term is 0, and as a result the objective does not depend on the choice of $\lambda$. As in \ref{thm:linear-iff}, invoking \Lemmaref{lemma:feature-subset} implies that the overall risk is lower than that of the optimal invariant predictor.
\end{proof}

As mentioned in \Secref{sec:irm-failure}, this additional requirement of constant variance is due to the assumptions underlying the design of the objective---REx expects additional invariance of the second moment $\textrm{Var}(y \mid \Phi(x))$, in contrast with the strictly weaker invariance of $\E[y \mid \Phi(x)]$ assumed by IRM. This might seem to imply that REx is a more robust objective, but this does not convey the entire picture. The conditions for the above corollary are just one possible failure case for REx; by extending \Thmref{thm:corr-less-01-risk} to this objective, we see that REx is just as prone to bad solutions:

\begin{corollary}[Extension to \Thmref{thm:corr-less-01-risk}]
    Suppose we observe $E \leq \denv$ environments, such that all environmental means are linearly independent. Then there exists a $\Phi,\hat\beta$ which uses \emph{only environmental features} and, under any choice of $\lambda \in [0, \infty]$, achieves a lower objective value than the optimal invariant predictor under 0-1 loss on every environment $e$ such that $\tilde\mu > \sigcaus^{-1}\norm{\mucaus} + \frac{|\beta_0|}{2\sigcaus^{-1}\norm{\mucaus}}$.
\end{corollary}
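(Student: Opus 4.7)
The plan is to reuse the purely-environmental predictor constructed in the proof of Theorem \ref{thm:corr-less-01-risk}: take $\Phi(x) = [p^T\zenv]$ where $p$ is the unit-norm projection vector produced by Lemma \ref{lemma:unique-maximizing-projection}, paired with the (environment-invariant) optimal classifier $\hat\beta = [2\tildemu, \beta_0]^T$. This predictor depends only on environmental features, which establishes the first half of the claim; what remains is to show that its REx objective value is strictly below that of the optimal invariant predictor, uniformly in $\lambda$.

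First I would redo the per-environment 0-1 risk comparison. Writing $\alpha := \sigcaus^{-1}\norm{\mucaus}$ and $\gamma := \sigenv\tildemu$, the risks $R_e$ of our predictor and $R_{inv}$ of the optimal invariant predictor are precisely the Gaussian-CDF expressions that appear in the proof of Theorem \ref{thm:corr-less-01-risk}. The strengthened hypothesis $\tildemu > \alpha + |\beta_0|/(2\alpha)$ immediately forces $\gamma \geq \tildemu > \alpha$ and $2\gamma\alpha \geq 2\tildemu\alpha > 2\alpha^2 + |\beta_0| \geq |\beta_0|$ in the regime $\sigenv \geq 1$ (the natural normalization), so the monotonicity inequalities (\ref{eq:alpha-leq-gamma-plus}) and (\ref{eq:alpha-leq-gamma-minus}) both hold strictly and $R_e < R_{inv}$ on every environment satisfying the condition. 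Averaging across environments then gives a strictly lower value of the first term of the REx objective (\ref{eq:riskvar-objective}).

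Next, mirroring Corollary \ref{corollary:riskvar}, I would argue that under the common-variance regime implicit in REx's second-moment invariance desideratum, the label-conditional distribution $p^T\zenv \mid y \sim \gN(y\cdot\sigenv^2\tildemu,\sigenv^2)$ is identical across environments, so $R_e$ is constant in $e$ and the variance penalty of (\ref{eq:riskvar-objective}) vanishes. The optimal invariant predictor also has constant risk (by invariance of the joint $(\zcaus,y)$ distribution), so its variance penalty is likewise zero. Consequently, for every $\lambda \in [0,\infty]$ the REx objective collapses to the mean risk for both predictors, and the per-environment strict inequality carries through to give the desired strict decrease in objective value.

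The main obstacle is the $\lambda = \infty$ endpoint: without a common-variance assumption, the environmental feature variances may differ across environments, making $R_e$ a nontrivial function of $\sigenv$ and giving the non-invariant predictor a strictly positive risk-variance while the invariant predictor's remains zero. In that regime the ``any $\lambda \in [0,\infty]$'' claim fails unless the constant-variance hypothesis of Corollary \ref{corollary:riskvar} is read as implicit here—exactly as it must be for REx to even be well-specified under its own second-moment invariance principle. An alternative route that avoids this assumption is to restrict $\lambda$ to finite values and use a Taylor expansion of the Gaussian CDF around a reference $\gamma$ to upper bound the spread of $R_e$, then absorb the resulting variance contribution into the strict mean gap established above; but this strategy cannot recover the $\lambda=\infty$ endpoint and is the one subtlety I would flag in the proof.
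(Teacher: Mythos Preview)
Your proposal has the gap you yourself flag, and the paper's proof resolves it not by assuming common variance but by changing the construction. You reuse the projection $p$ from Lemma~\ref{lemma:unique-maximizing-projection} satisfying $p^T\muenv = \sigenv^2\tildemu$; with that choice the standardized mean of $p^T\zenv$ is $\sigenv\tildemu$, which varies across environments, so the 0-1 risk $F(-\sigenv\tildemu)$ is environment-dependent and the variance penalty is generically positive. Your patch (common $\sigenv$, or $\sigenv\geq 1$) imports hypotheses that are not in the statement, and as you note it cannot recover $\lambda=\infty$ otherwise.

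The paper instead solves a \emph{different} linear system for the unit vector $p$: it requires $p^T\muenv = \sigenv\,\tildemu$ for all $e$ (note $\sigenv$, not $\sigenv^2$). With this choice and $\hat\beta=[1]$, the standardized mean of the logit is $\sigenv\tildemu/\sigenv = \tildemu$ in every environment, so the 0-1 risk is exactly $F(-\tildemu)$ uniformly in $e$. The variance penalty therefore vanishes for arbitrary $\sigenv$ and all $\lambda\in[0,\infty]$, and the corollary's condition $\tildemu > \alpha + \tfrac{|\beta_0|}{2\alpha}$ (with $\alpha=\sigcaus^{-1}\norm{\mucaus}$) is precisely what guarantees $F(-\tildemu)$ falls below the invariant predictor's risk. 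Note also that the $\tildemu$ in the corollary is this \emph{new} scalar, not the one from the original lemma; your derivation implicitly treats them as the same object, which is why you end up needing the extraneous $\sigenv\geq 1$ step to get from $\tildemu$ to $\gamma=\sigenv\tildemu$.
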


\begin{proof}
    We follow the proof of \Thmref{thm:corr-less-01-risk}, except when solving for $p$ as in \Lemmaref{lemma:unique-maximizing-projection} we instead find the unit-norm vector such that
    \begin{align}
        \label{eq:project-sigma}
        p^T\muenv &= \sigenv\tildemu\quad \forall e\in\gE.
    \end{align}
    Observe that by setting $\Phi(x) = [p^T\zenv]$ and $\hat\beta = [1]$, the 0-1 risk in a given environment is
    \begin{align*}
        \eta F( -\tildemu\sigenv/\sigenv) + (1-\eta) F(-\tildemu\sigenv/\sigenv) &= F(-\tildemu),
    \end{align*}
    which is independent of the environment. Further, by carrying through the same proof as in \Thmref{thm:corr-less-01-risk}, we get that this non-invariant predictor has lower 0-1 risk so long as
    \begin{align*}
        \alpha + \frac{|\beta_0|}{2\alpha} \leq \tildemu,
    \end{align*}
where $\alpha = \sigcaus^{-1}\norm{\mucaus}$
\end{proof}
Though $\tildemu$ here is not exactly the same value because of the slightly different solution (\plaineqref{eq:project-sigma}), it depends upon the geometry of the training environments in the same way---it is the same as taking the square root of each of the variances. We can therefore expect this condition to hold in approximately the same situations, which we empirically verify by replicating \Figref{fig:tildemu-simulations} with the modified equation below.

\begin{figure}[!ht]
    \centering
    \includegraphics[width=0.7\linewidth]{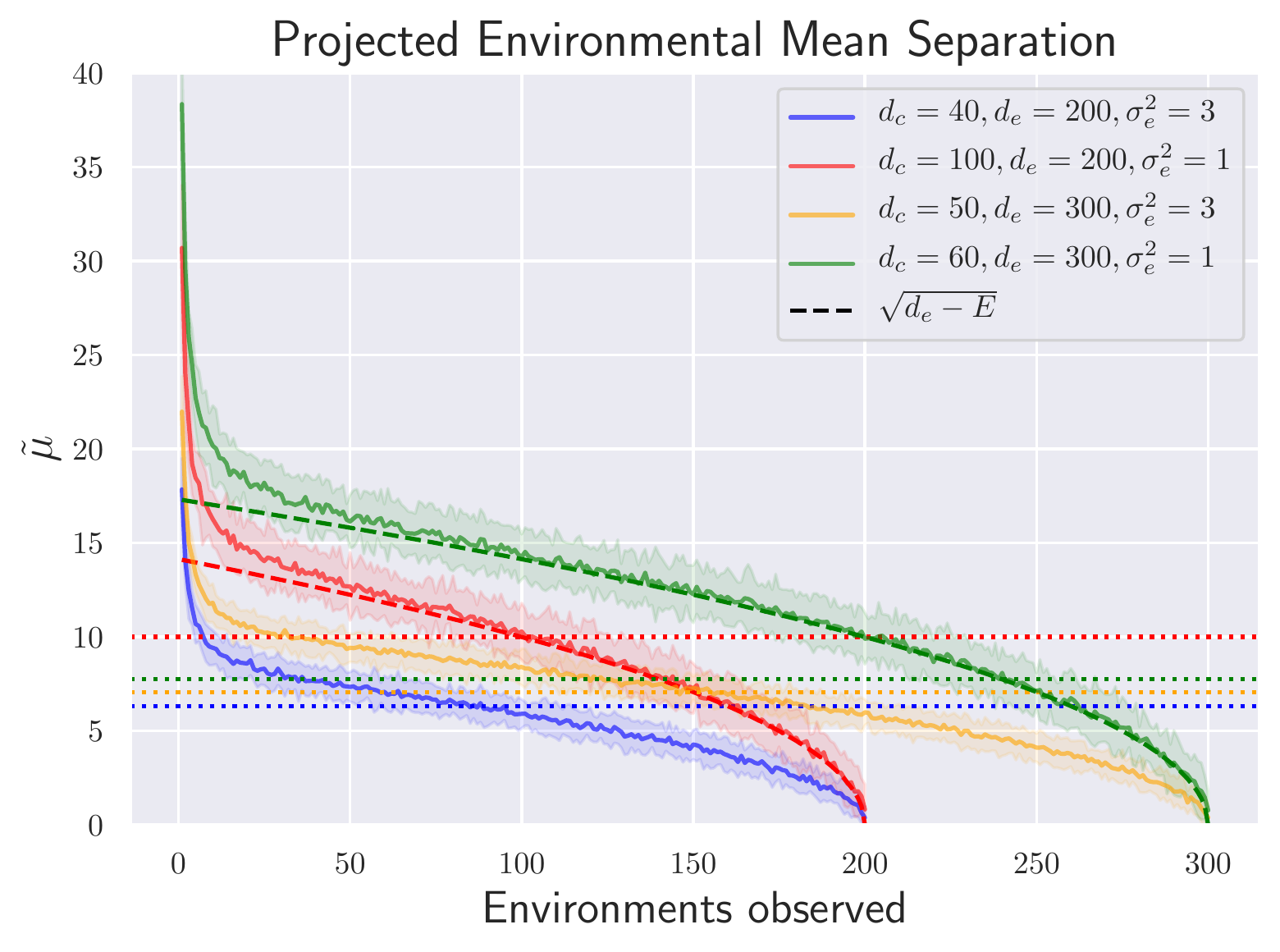}
    \caption{Simulations to evaluate $\tildemu$ for varying ratios of $\frac{\denv}{\dcaus}$. When $\sigenv^2 = 1$ the value closely tracks $\sqrt{\denv-E}$, and the crossover point is approximately $\denv - \sigenv^2\dcaus$. Due to the similarity of \Eqref{eq:project-sigma} to \Eqref{eq:project-means-to-single1}, it makes sense that the results are very similar to those presented in \Figref{fig:tildemu-simulations}.}
\end{figure}

\subsection{Extensions for the Non-Linear Case}

The failure of these objectives in the non-linear regime is even more straightforward, as we can keep unchanged the constructed predictor from \Thmref{thm:nonlinear-simplified}. Observe that parts 2-4 of the theorem do not involve the objective itself, and therefore do not require modification. 

To see that part 1 still holds, note that since the constructed predictor matches the optimal invariant predictor on $1-p$ of the observations, its risk across environments can only vary on the remaining $p$ fraction: thus the centered 0-1 risk is bounded between 0 and $p$. It is immediate that the variance of the environmental risks is upper bounded by $\frac{p^2}{4} \in \gO(p^2)$. Applying this argument to the other objectives yields similar results.

\section{Technical Lemmas}

\begin{lemma}
    \label{lemma:feature-subset}
    Consider solving the standard logistic regression problem
    \begin{align*}
        z &\sim p(z) \in \R^k, \\
        y &= \begin{cases}
        +1 &\text{w.p. } \sigma(\beta^Tz), \\
        -1 &\text{w.p. } \sigma(-\beta^Tz).
        \end{cases}
    \end{align*}
    Assume that none of the latent dimensions are degenerate---$\forall S\subseteq[k],\ \P(\beta_S^Tz_S \neq 0) > 0$, and no feature can be written as a linear combination of the other features. Then for any distribution $p(z)$, any classifier $f(z) = \sigma(\beta_S^T z_S)$ that uses a strict subset of the features $S\subsetneq [k]$ has strictly higher risk with logistic loss than the Bayes classifier $f^*(z) = \sigma(\beta^Tz)$. This additionally holds for 0-1 loss if $\beta_{-S}^Tz_{-S}$ has greater magnitude and opposite sign of $\beta_S^Tz_S$ with non-zero probability.
\end{lemma}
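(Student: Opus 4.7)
The plan is to exploit the fact that the logistic loss is a \emph{strictly proper scoring rule}, so that the Bayes-optimal probabilistic predictor (which for this generative model is exactly $\sigma(\beta^T z)$) uniquely minimizes the expected logistic loss pointwise in $z$. Concretely, I would start by writing the expected logistic loss of any predictor $\sigma(g(z))$ as
\[
    \E_z\bigl[-\sigma(\beta^T z)\log\sigma(g(z)) - \sigma(-\beta^T z)\log\sigma(-g(z))\bigr],
\]
and observing that at each fixed $z$ the inner function is a strictly convex function of $g$ whose unique minimizer (by differentiating and setting the derivative to zero) is $g(z) = \log\tfrac{\sigma(\beta^T z)}{\sigma(-\beta^T z)} = \beta^T z$. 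Hence for any other choice of $g(z)$ the inner integrand is \emph{strictly} larger at that $z$.

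Next, I would argue that the subset predictor $g_S(z) = \beta_S^T z_S$ disagrees with $\beta^T z$ on a set of positive probability. The difference equals $\beta_{-S}^T z_{-S}$, which involves at least one nonzero coordinate of $\beta$ (since $S \subsetneq [k]$, and any coordinate with $\beta_j = 0$ is irrelevant and can be absorbed into $S$ without changing the predictor). By the non-degeneracy assumption applied to the set $-S$, we have $\P(\beta_{-S}^T z_{-S} \neq 0) > 0$. Integrating the strict pointwise inequality from the previous paragraph over this positive-measure set yields that the subset predictor incurs strictly higher expected logistic loss than $f^*$, proving the first claim.

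For the 0-1 loss statement, I would note that both classifiers threshold their logits at zero, so the Bayes classifier predicts $\sign(\beta^T z)$ and the subset classifier predicts $\sign(\beta_S^T z_S)$. These disagree exactly on the event $\{\sign(\beta^T z) \neq \sign(\beta_S^T z_S)\}$, which is precisely the event that $\beta_{-S}^T z_{-S}$ has magnitude greater than $|\beta_S^T z_S|$ and opposite sign, assumed to have positive probability. On this event, the conditional error probability of the subset classifier equals $\sigma(|\beta^T z|) > \tfrac{1}{2} > \sigma(-|\beta^T z|)$, the conditional error of the Bayes classifier; on the complement the two classifiers agree. Integrating gives strict inequality of 0-1 risks.

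The argument is essentially routine once one recognizes that $\sigma(\beta^T z)$ is the Bayes predictor \emph{by construction} of the generative model. The only genuine technical step is verifying that the non-degeneracy conditions (linear independence of features and $\P(\beta_{S'}^T z_{S'} \neq 0) > 0$ for every $S'$) force the two classifiers' logits (and, in the 0-1 case, signs) to differ on a set of positive measure; this is where the assumption that no feature is a linear combination of the others is used to rule out cancellation in $\beta_{-S}^T z_{-S}$.
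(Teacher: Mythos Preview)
Your proposal is correct and follows essentially the same approach as the paper: both arguments observe that the Bayes predictor $\sigma(\beta^T z)$ is pointwise optimal for the logistic loss, then use the non-degeneracy assumption to exhibit a positive-measure set on which $\beta_{-S}^T z_{-S} \neq 0$ (so the subset predictor differs from Bayes), yielding strictly higher risk; the 0-1 case is handled identically via the sign-disagreement event. Your version is somewhat more explicit in spelling out the strict-convexity / proper-scoring-rule justification for pointwise optimality and the conditional error comparison in the 0-1 case, whereas the paper simply asserts the Bayes-optimality step---but the skeleton is the same.
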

\begin{proof}
The Bayes classifier suffers the minimal expected loss for each observation z.
Therefore, another classifier has positive excess risk if and only if it disagrees with the Bayes classifier on a set of non-zero measure. Consider the set of values $z_{-S}$ such that $\beta_{-S}^Tz_{-S} \neq 0$. Then on this set we have
\begin{align*}
    f^*(\beta^T z) = \sigma(\beta_S^T z_S + \beta_{-S}^T z_{-S}) \neq \sigma(\beta_S^T z_S) = f(z).
\end{align*}
Since these values occur with positive probability, $f$ has strictly higher logistic risk than $f^*$. By the same argument, there exists a set of positive measure under which
\begin{align*}
    f^*(\beta^T z) = \sign(\beta_S^T z_S + \beta_{-S}^T z_{-S}) \neq \sign(\beta_S^T z_S) = f(z),
\end{align*}
and so $f$ also has strictly higher 0-1 risk.
\end{proof}
\vspace{8pt}

\begin{lemma}
    \label{lemma:optimal-coefficient-combination-feature}
    For any feature vector which is a linear function of the invariant and environmental features $\tilde z = A\zcaus + B\zenv$, any optimal corresponding coefficient for an environment $e$ is of the form
    \begin{align*}
        2(AA^T\sigcaus^2 + BB^T\sigenv^2)^+ (A\mucaus + B\muenv),
    \end{align*}
    where $G^+$ is a generalized inverse of $G$.
\end{lemma}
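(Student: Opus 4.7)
The plan is to directly compute the Bayes-optimal log-odds for $y$ given the derived feature $\tilde z = A\zcaus + B\zenv$ and observe that they are linear in $\tilde z$, with coefficient vector exactly the one claimed. Since $\zcaus, \zenv$ are independent conditional on $y$ and each Gaussian with mean $y\cdot\mucaus$ (resp.\ $y\cdot\muenv$) and covariance $\sigcaus^2 I$ (resp.\ $\sigenv^2 I$), the feature $\tilde z$ is a linear combination of independent Gaussians and hence conditionally Gaussian:
\begin{align*}
\tilde z \mid y \;\sim\; \gN\!\bigl(y\cdot\bar\mu,\;\bar\Sigma\bigr),
\quad\text{where } \bar\mu := A\mucaus + B\muenv,\;\;\bar\Sigma := AA^T\sigcaus^2 + BB^T\sigenv^2.
\end{align*}

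First I would handle the case where $\bar\Sigma$ is invertible. A direct computation of the log-likelihood ratio of two Gaussians with opposite means $\pm\bar\mu$ and common covariance $\bar\Sigma$ gives
\begin{align*}
\log \frac{p(\tilde z\mid y=1)}{p(\tilde z\mid y=-1)} \;=\; 2\,\tilde z^{T} \bar\Sigma^{-1}\bar\mu,
\end{align*}
so that the posterior log-odds are $\log\frac{\eta}{1-\eta} + 2\bar\mu^T\bar\Sigma^{-1}\tilde z$. Because the Bayes-optimal classifier for logistic loss is the one that matches the true posterior, the optimal coefficient on $\tilde z$ must equal $2\bar\Sigma^{-1}\bar\mu = 2(AA^T\sigcaus^2 + BB^T\sigenv^2)^{-1}(A\mucaus + B\muenv)$, as claimed. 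Alternatively, one can obtain the same expression by writing the expected logistic loss as a function of a candidate $\hat\beta$, differentiating, and solving the first-order condition; convexity of the logistic loss in $\hat\beta$ (for fixed features) guarantees that any stationary point is a global minimum.

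The only step requiring care is the rank-deficient case, which is the main obstacle. When $\bar\Sigma$ is singular, $\tilde z$ lives almost surely in the affine subspace $\mathrm{range}(\bar\Sigma) + y\cdot\bar\mu$, and the objective becomes invariant under shifts of $\hat\beta$ along $\ker(\bar\Sigma)$, so the optimum is non-unique. I would decompose $\hat\beta = \hat\beta_{\parallel} + \hat\beta_{\perp}$ with $\hat\beta_{\parallel}\in\mathrm{range}(\bar\Sigma)$ and $\hat\beta_{\perp}\in\ker(\bar\Sigma)$; the quadratic form $\hat\beta^T\bar\Sigma\hat\beta$ and the inner product $\hat\beta^T\bar\mu$ (noting $\bar\mu\in\mathrm{range}(\bar\Sigma)$ since $\bar\mu$ is a linear combination of columns of $A$ and $B$, and the column spaces of $A$ and $B$ are contained in $\mathrm{range}(\bar\Sigma)$) depend only on $\hat\beta_{\parallel}$, so we can optimize over $\hat\beta_{\parallel}$ alone and add any element of $\ker(\bar\Sigma)$. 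Restricting to $\mathrm{range}(\bar\Sigma)$ reduces to the invertible case, giving $\hat\beta_{\parallel} = 2\bar\Sigma^{+}\bar\mu$. Since any generalized inverse $\bar\Sigma^{+}$ differs from the Moore--Penrose pseudoinverse by something that maps $\bar\mu$ into $\ker(\bar\Sigma)$, the family $\{2\bar\Sigma^{+}\bar\mu\}$ over all generalized inverses coincides precisely with the set of optimal coefficients, completing the proof.
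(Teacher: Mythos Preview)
Your proposal is correct and follows essentially the same route as the paper: identify the class-conditional distribution of $\tilde z$ as $\gN(y\cdot\bar\mu,\bar\Sigma)$ with $\bar\mu=A\mucaus+B\muenv$ and $\bar\Sigma=\sigcaus^2AA^T+\sigenv^2BB^T$, compute the log-likelihood ratio, and read off the optimal linear coefficient $2\bar\Sigma^+\bar\mu$. The paper's proof is terser---it writes the posterior directly as $\sigma(y\cdot 2\tilde z^T\bar\Sigma^+\bar\mu)$ without separating the invertible and singular cases---whereas you explicitly handle rank deficiency by decomposing $\hat\beta$ along $\mathrm{range}(\bar\Sigma)$ and $\ker(\bar\Sigma)$ and verifying that $\bar\mu\in\mathrm{range}(\bar\Sigma)$; this extra care is a genuine improvement in rigor but not a different argument.
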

\begin{proof}
We begin by evaluating a closed form for $p^e(y\mid \tilde z)$. We have:
\begin{align*}
    &p^e(y\mid A\zcaus + B\zenv = \tilde z) \\
    = &\frac{p(A\zcaus + B\zenv = \tilde z\mid y) p(y)}{p^e(A\zcaus + B\zenv = \tilde z)} \\
    = &\frac{p^e(A\zcaus + B\zenv = \tilde z\mid y)}{p^e(A\zcaus + B\zenv = \tilde z\mid y) + p^e(A\zcaus + B\zenv = \tilde z\mid -y)} \\
    \label{eq:lemma-sigmoid-probability-ratios}
    = &\frac{1}{1 + \frac{p^e(A\zcaus + B\zenv = \tilde z\mid -y)}{p^e(A\zcaus + B\zenv = \tilde z\mid y)}}.
\end{align*}

Now we need a closed form expression for $p(A\zcaus + B\zenv = \tilde z\mid y)$. Noting that $\zcaus \perp \!\!\! \perp \zenv \mid y$, this is a convolution of the two independent Gaussian densities, which is the density of their sum. In other words,
\begin{align*}
    A\zcaus + B\zenv\mid y \sim \gN(y\cdot(\overbrace{A\mucaus+B\muenv}^{\bar\mu}), \overbrace{AA^T\sigcaus^2 + BB^T\sigenv^2}^{\bar\Sigma}).
\end{align*}
Thus,
\begin{align*}
    p^e(A\zcaus + B\zenv = \tilde z \mid y) &= \frac{1}{\left(2\pi|\bar\Sigma|\right)^{k/2}}\exp\left\{-\frac{1}{2}(\tilde z - y\cdot \bar\mu)^T\bar\Sigma^+(\tilde z - y\cdot\bar\mu)\right\}.
\end{align*}
Canceling common terms, we get
\begin{align*}
    p^e(y=1\mid A\zcaus + B\zenv = \tilde z) &= \frac{1}{1 + \frac{p^e(A\zcaus + B\zenv = \tilde z\mid -y)}{p^e(A\zcaus + B\zenv = \tilde z\mid y)}} \\
    &= \frac{1}{1 + \exp\left\{-y\cdot 2\tilde z^T\bar\Sigma^+\bar\mu \right\}} \\
    &= \sigma\left(y\cdot 2\tilde z^T\bar\Sigma^+\bar\mu\right).
\end{align*}
Therefore, given a feature vector $\tilde z$, the optimal coefficient vector is $2\bar\Sigma^+\bar\mu$.
\end{proof}
\vspace{8pt}

\begin{lemma}
    \label{lemma:high-prob-region}
    For any environment $e$ with parameters $\muenv, \sigenv^2$ and any $\epsilon > 1$, define
    \begin{align*}
        B &:= B_{\sqrt{\epsilon \sigenv^2 \denv}}(\muenv),
    \end{align*}
    where $B_r(\alpha)$ is the ball of $\ell_2$-norm radius $r$ centered at $\alpha$. Then for an observation drawn from $p^e$, we have
    \begin{align*}
        \Punder_{\zenv\sim p^e}(\zenv \in B^c) &\leq \exp\left\{-\frac{\denv\min((\epsilon-1), (\epsilon-1)^2)}{8}\right\}.
    \end{align*}
\end{lemma}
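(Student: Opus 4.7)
The plan is to recognize this as a standard concentration bound for the squared norm of a centered Gaussian vector, phrased in chi-squared form. First, since $\zenv \in B^c$ iff $\|\zenv - \muenv\| > \sqrt{\epsilon \sigenv^2 \denv}$, I would (WLOG) condition on $y = 1$ so that $(\zenv - \muenv)/\sigenv \mid y=1 \sim \gN(\vzero, I_{\denv})$; the case $y=-1$ is symmetric if one interprets $B$ as the ball around the conditional mean (which is how the lemma is used in the proof of Theorem~\ref{thm:nonlinear-full}, where $\gB_r$ is taken as the union of balls at $\pm \muenv$). Consequently, $W := \|\zenv - \muenv\|^2/\sigenv^2$ is a $\chi^2_{\denv}$ random variable, and the target reduces to showing
\[
\P\!\left(W \geq \denv(1+\alpha)\right) \;\leq\; \exp\!\left\{-\denv\,\min(\alpha,\alpha^2)/8\right\}, \qquad \alpha := \epsilon - 1 > 0.
\]

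Next, I would apply the standard Chernoff bound for chi-squared variates: for any $0 \leq t < 1/2$,
\[
\P(W \geq \denv(1+\alpha)) \;\leq\; \E[e^{tW}]\,e^{-t\denv(1+\alpha)} \;=\; (1-2t)^{-\denv/2}\,e^{-t\denv(1+\alpha)},
\]
and optimizing $t = \alpha/(2(1+\alpha))$ yields the well-known bound
\[
\P(W \geq \denv(1+\alpha)) \;\leq\; \exp\!\left\{-\denv\,\psi(\alpha)/2\right\}, \qquad \psi(\alpha) := \alpha - \log(1+\alpha).
\]

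The remaining step — and the only one needing care — is to lower bound $\psi(\alpha)$ by $\min(\alpha,\alpha^2)/4$. I would split on whether $\alpha \leq 1$ or $\alpha > 1$. For $\alpha \in (0,1]$, the ratio $\psi(\alpha)/\alpha^2$ is decreasing on $(0,\infty)$ with limit $1/2$ at zero, and $\psi(1)/1 = 1 - \log 2 \approx 0.307 > 1/4$, giving $\psi(\alpha) \geq \alpha^2/4$ on this regime. For $\alpha \geq 1$, I would use $\psi(1) > 1/4$ together with $\psi'(\alpha) = \alpha/(1+\alpha) \geq 1/4$ (valid for $\alpha \geq 1/3$) to conclude $\psi(\alpha) \geq \alpha/4$ by integration. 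Combining the two regimes produces $\psi(\alpha) \geq \min(\alpha,\alpha^2)/4$, and plugging in gives exactly the stated bound.

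The main obstacle is essentially bookkeeping: matching the non-standard $\min(\alpha,\alpha^2)/8$ form in the conclusion to the cleaner $\psi(\alpha)/2$ that drops out of the Chernoff bound, which requires the two-regime elementary comparison above. Everything else is standard Gaussian/chi-squared concentration and follows without additional structural work.
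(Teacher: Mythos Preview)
Your proposal is correct and follows essentially the same route as the paper: both reduce (after conditioning on $y=1$) to a chi-squared upper-tail bound $\P(\chi^2_{\denv} \geq \epsilon\denv)$. The paper simply cites the sub-exponential Bernstein inequality with parameters $(2\sqrt{\denv},4)$ to land directly on the $\exp\{-\denv\min(\epsilon-1,(\epsilon-1)^2)/8\}$ bound, whereas you derive the equivalent statement by hand via the Chernoff/MGF optimization and the elementary comparison $\psi(\alpha)\geq \min(\alpha,\alpha^2)/4$; these are the same argument at different levels of unpacking.
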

\begin{proof}
    Without loss of generality, suppose $y = 1$. We have
    \begin{align*}
        \P(\zenv \in B) &\geq \Punder_{\zenv\sim\gN(\muenv,\sigenv^2I)} \left( \norm{\zenv - \muenv} \leq \sqrt{\epsilon \sigenv^2 \denv} \right) \\
        &= \Punder_{\zenv\sim\gN(0,\sigenv^2I)} \left( \norm{\zenv} \leq \sqrt{\epsilon \sigenv^2 \denv} \right) \\
        &= \Punder_{\zenv\sim\gN(0,I)} \left( \normsq{\zenv} \leq \epsilon \denv \right).
    \end{align*}
    Each term in the squared norm of $\zenv$ is a random variable with distribution $\chi^2_1$, which means their sum has mean $\denv$ and is sub-exponential with parameters $(2\sqrt{\denv}, 4)$. By standard sub-exponential concentration bounds we have
    \begin{align*}
        \Punder_{\zenv\sim\gN(0,I)} \left( \normsq{\zenv} \geq \epsilon \denv \right) \leq \exp\left\{-\frac{\denv\min((\epsilon-1), (\epsilon-1)^2)}{8}\right\},
    \end{align*}
    which immediately implies the claim.
\end{proof}
\vspace{8pt}

\begin{lemma}
    \label{lemma:noncentral-gaussian-cte}
    Let $z \sim \gN(\mu, \sigma^2 I_d)$ be a multivariate isotropic Gaussian in $d$ dimensions, and for some $r>0$ define $B$ as the ball of $\ell_2$ radius $r$ centered at $\mu$. Then we have
    \begin{align*}
        \biggl\| \E[|z| \mid z\in B^c] \biggr\|_2^2 &\leq 2d\left[ \sigma \frac{\phi(r/\sqrt{d})}{F(-r/\sqrt{d})} \right]^2 + 2\normsq{\mu},
    \end{align*}
    where $\phi, F$ are the standard Gaussian PDF and CDF.
\end{lemma}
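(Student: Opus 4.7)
My plan is to separate the contribution of $\mu$ from that of the centered fluctuation $w := z - \mu \sim \gN(0,\sigma^2 I_d)$ via the triangle inequality, then exploit the rotational symmetry of the isotropic Gaussian to reduce the problem to a one-dimensional truncated-Gaussian calculation.

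First, for each coordinate, $|z_i| \leq |\mu_i| + |w_i|$, so taking conditional expectations gives $\E[|z_i| \mid z \in B^c] \leq |\mu_i| + \E[|w_i| \mid \|w\| > r]$. Squaring via $(a+b)^2 \leq 2a^2 + 2b^2$ and summing over $i$,
\begin{align*}
\|\E[|z| \mid z \in B^c]\|_2^2 \leq 2\|\mu\|_2^2 + 2\sum_{i=1}^d (\E[|w_i| \mid \|w\| > r])^2.
\end{align*}
Next, since $w$ is isotropic and the conditioning event $\{\|w\| > r\}$ is invariant under permutations of the coordinates of $w$, the marginal conditional expectation $\E[|w_i| \mid \|w\| > r]$ does not depend on $i$. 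Hence the sum in the previous display equals $d\,(\E[|w_1| \mid \|w\| > r])^2$, and it remains to establish the one-dimensional bound
\begin{align*}
\E[|w_1| \mid \|w\| > r] \leq \sigma\,\frac{\phi(r/\sqrt{d})}{F(-r/\sqrt{d})}.
\end{align*}

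For this per-coordinate step, my strategy is to write $\E[|w_1|\1_{\|w\|^2 > r^2}] = \int |w_1|\phi_\sigma(w_1)\,\P\bigl(\textstyle\sum_{j\geq 2} w_j^2 > r^2 - w_1^2 \bigr)\,dw_1$ and split the integration at the threshold $|w_1| = r/\sqrt{d}$, which is exactly the "typical" per-coordinate magnitude implied by $\|w\|^2 > r^2$. On the outer piece the tail probability factor is at most $1$, so the contribution is bounded by the 1D truncated expectation $2\sigma\phi(r/(\sigma\sqrt d))$; on the inner piece $|w_1|$ is trivially bounded by $r/\sqrt{d}$ times $\P(\|w\| > r)$. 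Dividing by $\P(\|w\| > r)$ and massaging using standard inverse-Mills-ratio inequalities converts the resulting expression into the stated $\phi(r/\sqrt{d})/F(-r/\sqrt{d})$ form.

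The main obstacle will be step three, the per-coordinate bound. The seemingly natural approach through Cauchy--Schwarz and symmetry, $(\E[|w_1|\mid A])^2 \leq \E[w_1^2 \mid A] = \E[\|w\|^2 \mid A]/d$, yields a truncated chi-squared expectation rather than an inverse-Mills expression and is manifestly too loose in the "bulk" regime (at $r=0$ it produces $\sigma^2$, whereas the target bound gives $\sigma^2 \cdot 2/\pi$, matching $\E[|w_1|]$ exactly). The proof therefore has to exploit the sharpness of the univariate truncated Gaussian calculation, which is precisely why the stated bound collapses to equality at $r=0$ and to the asymptotic $\sigma r/\sqrt{d}$ behavior for $r \gg \sigma\sqrt{d}$. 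Once the per-coordinate inequality is in hand, combining it with the triangle-inequality reduction of the first paragraph finishes the proof immediately.
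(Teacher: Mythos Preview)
Your first two moves --- peeling off $|\mu|$ via the coordinatewise triangle inequality and then invoking exchangeability to reduce to a single coordinate $\E[|w_1|\mid\|w\|>r]$ --- match the paper exactly. The gap is in your third step, the per-coordinate bound.

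Your split at $|w_1|=r/\sqrt d$ yields
\[
\E\bigl[|w_1|\,\1_{\|w\|>r}\bigr]\ \le\ 2\sigma\,\phi(t)\;+\;\tfrac{r}{\sqrt d}\,\P(\|w\|>r),\qquad t:=\tfrac{r}{\sigma\sqrt d},
\]
so after dividing by $\P(\|w\|>r)$ you obtain $\dfrac{2\sigma\phi(t)}{\P(\|w\|>r)}+\dfrac{r}{\sqrt d}$. This cannot be ``massaged'' into $\sigma\,\phi(t)/F(-t)$: for $d\ge 2$ the chi-squared tail $\P(\|w\|>r)$ decays like $\exp\{-r^2/(2\sigma^2)\}$, whereas $F(-t)$ decays only like $\exp\{-r^2/(2\sigma^2 d)\}$. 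The first term of your bound therefore exceeds the target by a factor of order $\exp\{(1-1/d)\,r^2/(2\sigma^2)\}$, and no inverse-Mills inequality closes an exponential gap. The underlying problem is that on the outer region you discarded the event $\{\|w\|>r\}$ altogether, so when you later divide by its (possibly tiny) probability you are inflating an unconditional integral.

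The paper does not split the integral. It argues instead that the hypercube event $\{|w_j|>r/\sqrt d\ \forall j\}$ is contained in $\{\|w\|>r\}$ and, together with independence of the coordinates, concludes directly that
\[
\E[\,|w_1|\mid \|w\|>r\,]\ \le\ \E[\,|w_1|\mid |w_1|>r/\sqrt d\,],
\]
the right-hand side being exactly the closed-form truncated-Gaussian expectation $\sigma\,\phi(t)/F(-t)$. This sidesteps any appearance of $\P(\|w\|>r)$ in a denominator.
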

\begin{proof}
    Observe that
    \begin{align*}
        \Eunder_{z\sim\gN(\mu, \sigma^2 I_d)}[|z| \mid z\in B^c] &= \Eunder_{z\sim\gN(\mu, \sigma^2 I_d)}[|z| \mid \norm{z - \mu} > r] \\
        &= \Eunder_{z\sim\gN(0, \sigma^2 I_d)}[|z+\mu| \mid \norm{z} > r] \\
        &\leq \Eunder_{z\sim\gN(0, \sigma^2 I_d)}[|z| \mid \norm{z} > r] + |\mu|.
    \end{align*}
    Now, consider the expectation for an individual dimension, and note that $|z_i| > \frac{r}{\sqrt{d}}\ \forall i \implies \norm{z} > r$. So because the dimensions are independent, conditioning on this event can only increase the expectation:
    \begin{align*}
        \Eunder_{z\sim\gN(0, \sigma^2 I_d)}[|z_i| \mid \norm{z} > r] &\leq \Eunder_{z_i\sim\gN(0, \sigma^2)}\left[ |z_i| \mid |z_i| > \frac{r}{\sqrt{d}} \right] \\
        &= \Eunder_{z_i\sim\gN(0, \sigma^2)}\left[ z_i \mid z_i > \frac{r}{\sqrt{d}} \right],
    \end{align*}
    where the equality is because the distribution is symmetric about 0. This last term is known as the \emph{conditional tail expectation} of a Gaussian and is available in closed form:
    \begin{align*}
        \Eunder_{z_i\sim\gN(0, \sigma^2)}\left[ z_i \mid z_i > \frac{r}{\sqrt{d}} \right] &= \sigma \frac{\phi(F^{-1}(\alpha))}{1-\alpha},
    \end{align*}
    where $\alpha = F(r/\sqrt{d})$. Combining the above results, squaring with $(a+b)^2 \leq 2(a^2 + b^2)$, and summing over dimensions, we get
    \begin{align*}
        \biggl\| \E[|z| \mid z\in B^c] \biggr\|_2^2 &\leq 2 \sum_{i=1}^d \Eunder_{z_i\sim\gN(0, \sigma^2)}\left[ z_i \mid z_i > \frac{r}{\sqrt{d}} \right]^2 + 2\normsq{\mu} \\
        &= 2d\left[ \sigma \frac{\phi(r/\sqrt{d})}{F(-r/\sqrt{d})} \right]^2 + 2\normsq{\mu},
    \end{align*}
    as desired.
\end{proof}
\vspace{8pt}

\begin{lemma}
    \label{lemma:gaussian-cte-bound}
    For $\sigma,\epsilon > 0$, define $r = \sqrt{\epsilon}\sigma$. Then
    \begin{align*}
        \left[ \frac{\phi(r)}{F(-r)} \right]^2 &\leq \frac{1}{2} \exp\left\{ 2\epsilon (\sigma^2-1/2) \right\} \left[\epsilon\sigma^2 + 1 \right].
    \end{align*}
\end{lemma}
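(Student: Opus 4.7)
The plan is to upper-bound the inverse Mills ratio $\phi(r)/F(-r)$ using a standard Gaussian-tail inequality, then translate back to the variables $\epsilon,\sigma$ via the substitution $r^{2}=\epsilon\sigma^{2}$ and match the desired form by elementary algebra. Everything is really a one-dimensional calculus exercise, so the main question is which Mills-ratio bound to invoke so that the exponential factor on the right lines up correctly.

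First, I would apply Birnbaum's (equivalently Komatsu's) classical inverse-Mills bound: for every $r\ge 0$,
\[
\frac{\phi(r)}{F(-r)} \;\le\; \frac{r+\sqrt{r^{2}+4}}{2}.
\]
Squaring both sides and simplifying the cross term with AM-GM, $2r\sqrt{r^{2}+4}\le r^{2}+(r^{2}+4)=2r^{2}+4$, gives
\[
\left[\frac{\phi(r)}{F(-r)}\right]^{2} \;\le\; \frac{(r+\sqrt{r^{2}+4})^{2}}{4} \;\le\; r^{2}+2.
\]
Substituting $r^{2}=\epsilon\sigma^{2}$, the lemma reduces to
\[
\epsilon\sigma^{2}+2 \;\le\; \tfrac{1}{2}\,e^{2\epsilon(\sigma^{2}-1/2)}\,(\epsilon\sigma^{2}+1),
\]
equivalently $2(\epsilon\sigma^{2}+2)/(\epsilon\sigma^{2}+1)\le e^{2\epsilon(\sigma^{2}-1/2)}$. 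The left side is bounded between $2$ and $4$, so this last inequality is a purely one-variable exponential-vs-rational comparison, to be closed by a short case split on the sign of $\sigma^{2}-1/2$: in the regime $\sigma^{2}\ge 1/2$ a Taylor lower bound $e^{x}\ge 1+x+x^{2}/2$ suffices, while in the remaining regime the slack in step 2 is absorbed by replacing the Birnbaum bound with a direct tail estimate such as $F(-r)\ge \phi(r)\cdot r/(r^{2}+1)$.

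The main obstacle is exactly this last algebraic step: the factor $e^{2\epsilon(\sigma^{2}-1/2)}$ changes sign as $\sigma^{2}$ crosses $1/2$, so a single clean chain does not cover the whole parameter range. Birnbaum is tightest when $r$ is moderate to large, and for very small $r$ the crude bound $r^{2}+2$ is too loose compared to the behaviour of $[\phi(r)/F(-r)]^{2}$ near $r=0$; the fix is to re-derive the needed lower bound on $F(-r)$ directly (e.g.\ $F(-r)\ge \int_{r}^{r+1}\phi(t)\,dt$, yielding a density-comparable lower bound) in that regime and patch the two arguments at a convenient threshold. All other steps — the Mills-ratio inequality, the AM-GM manipulation, and the substitution $r^{2}=\epsilon\sigma^{2}$ — are routine.
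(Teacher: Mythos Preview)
Your step 3 is not a loose end to be patched; it is where the argument fails. After bounding $[\phi(r)/F(-r)]^{2}\le r^{2}+2$ you hold a quantity that depends only on $r=\sqrt{\epsilon}\,\sigma$, whereas the target
\[
\tfrac12\, e^{2\epsilon(\sigma^{2}-1/2)}\,(\epsilon\sigma^{2}+1)\;=\;\tfrac12\, e^{\,2r^{2}-\epsilon}\,(r^{2}+1)
\]
depends on $\epsilon$ and $\sigma$ \emph{separately}. Fix any $r>0$ and send $\epsilon\to\infty$ (so $\sigma\to 0$): the right side tends to $0$ while $r^{2}+2$ does not move. Swapping Birnbaum for another Mills-ratio estimate in the small-$\sigma$ regime cannot help, because every upper bound on $\phi(r)/F(-r)$ is necessarily a function of $r$ alone, and no fixed positive function of $r$ can lie below something that vanishes as $\epsilon\to\infty$ with $r$ held constant. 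Your proposed case split on $\sigma^{2}\gtrless\tfrac12$ is therefore a red herring.

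The paper's argument does not first collapse to a polynomial in $r$: it writes $\phi(r)$ and a cited lower bound for $F(-r)$ with explicit exponentials, divides to obtain $\phi(r)/F(-r)\le \tfrac{1}{2\sqrt{2}}\,e^{\epsilon(\sigma^{2}-1/2)}\bigl(\sqrt{\epsilon}\sigma+\sqrt{\epsilon\sigma^{2}+2}\bigr)$ directly, and then squares via $(a+b)^{2}\le 2(a^{2}+b^{2})$. The factor $e^{\epsilon(\sigma^{2}-1/2)}$ thus arises from the ratio of exponentials rather than from a downstream comparison---exactly the structure your reduction to $r^{2}+2$ discards. That said, your structural observation bites here too: since the left side is a function of $r$ only, the separate $\epsilon$-dependence on the right enters the paper's computation through the line $\phi(r)=\tfrac{1}{\sqrt{2\pi}}e^{-\epsilon/2}$, which is correct only when $\sigma=1$. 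For $\sigma^{2}<\tfrac12$ and $\epsilon$ large (e.g.\ $\epsilon=10,\ \sigma=0.1$) the stated inequality is in fact false, so no valid proof exists in that regime; your difficulty in closing step 3 is a symptom of this, not of choosing the wrong tail bound.
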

\begin{proof}
    We have
    \begin{align*}
        \phi(r) &= \frac{1}{\sqrt{2\pi}} \exp\left\{ -\frac{\epsilon}{2} \right\}
    \end{align*}
    and
    \begin{align*}
        F(-r) &\geq \frac{2\exp\{-\epsilon \sigma^2\}}{\sqrt{\pi} (\sqrt{\epsilon} \sigma + \sqrt{\epsilon \sigma^2 + 2})}
    \end{align*}
    (see \citet{erfc}). Dividing them gives
    \begin{align*}
        \frac{\phi(r)}{F(-r)} &\leq \frac{1}{2\sqrt{2}} \exp\{\epsilon (\sigma^2-1/2)\}\left[  \sqrt{\epsilon} \sigma + \sqrt{\epsilon \sigma^2 + 2}\right].
    \end{align*}
    Squaring and using $(a+b)^2 \leq 2(a^2 + b^2)$ yields the result.
\end{proof}

\end{document}